\renewcommand*{\backrefalt}[4]{%
    \ifcase #1 \footnotesize{(Not cited.)}%
    \or        \footnotesize{(Cited on page~#2.)}%
    \else      \footnotesize{(Cited on pages~#2.)}%
    \fi}
\long\def\comment#1{}
\newtheorem{theorem}{Theorem}[section]
\newtheorem{corollary}[theorem]{Corollary}
\newtheorem{lemma}[theorem]{Lemma}
\newtheorem{proposition}[theorem]{Proposition}
\newtheorem{definition}{Definition}
\newtheorem{example}{Example}
\newtheorem{remark}[theorem]{Remark}
\newcommand{\EE}{\mathbb{E}}
\newcommand{\PP}{\mathbb{P}}
\newcommand{\st}{\textnormal{s.t.}}
\newcommand{\argmin}{\mathop{\rm argmin}}
\newcommand{\argmax}{\mathop{\rm argmax}}
\newcommand{\LCal}{\mathcal{L}}
\newcommand{\DCal}{\mathcal{D}}
\newcommand{\br}{\mathbb{R}}
\newcommand{\ba}{\begin{array}}
\newcommand{\ea}{\end{array}}
\newcommand{\FCal}{\mathcal{F}}
\newcommand{\Rspace}{\mathbb{R}}
\newcommand{\one}{\textbf{1}}
\newcommand{\zero}{\textbf{0}}
\newcommand{\bigO}{O}
\newcommand{\bigOtil}{\widetilde{O}}
\newcommand{\mydefn}{:=}
\begin{document}


\begin{center}

{\bf{\LARGE{On the Complexity of Approximating \\[.2cm] Multimarginal Optimal Transport}}}

\vspace*{.2in}
{\large{
\begin{tabular}{cccc}
Tianyi Lin$^{\star, \diamond}$ & Nhat Ho$^{\star, \ddagger}$ &  Marco Cuturi$^{\triangleleft, \triangleright}$ & Michael I. Jordan$^{\diamond, \dagger}$ \\
 \end{tabular}
}}

\vspace*{.2in}

\begin{tabular}{c}
Department of Electrical Engineering and Computer Sciences$^\diamond$ \\
Department of Statistics$^\dagger$ \\ 
University of California, Berkeley \\
Department of Statistics and Data Sciences, University of Texas, Austin$^\ddagger$ \\ 
CREST - ENSAE$^\triangleleft$, Google Brain$^\triangleright$
\end{tabular}

\vspace*{.2in}

\today

\vspace*{.2in}

\begin{abstract} 
We study the complexity of approximating the multimarginal optimal transport (MOT) distance, a generalization of the classical optimal transport distance, considered here between $m$ discrete probability distributions supported each on $n$ support points. First, we show that the standard linear programming (LP) representation of the MOT problem is not a minimum-cost flow problem when $m \geq 3$. This negative result implies that some combinatorial algorithms, e.g., network simplex method, are not suitable for approximating the MOT problem, while the worst-case complexity bound for the deterministic interior-point algorithm remains a quantity of $\bigOtil(n^{3m})$. We then propose two simple and \textit{deterministic} algorithms for approximating the MOT problem. The first algorithm, which we refer to as \textit{multimarginal Sinkhorn} algorithm, is a provably efficient multimarginal generalization of the Sinkhorn algorithm. We show that it achieves a complexity bound of $\bigOtil(m^3n^m\varepsilon^{-2})$ for a tolerance $\varepsilon \in (0, 1)$. This provides a first \textit{near-linear time} complexity bound guarantee for approximating the MOT problem and matches the best known complexity bound for the Sinkhorn algorithm in the classical OT setting when $m = 2$. The second algorithm, which we refer to as \textit{accelerated multimarginal Sinkhorn} algorithm, achieves the acceleration by incorporating an estimate sequence and the complexity bound is $\bigOtil(m^3n^{m+1/3}\varepsilon^{-4/3})$. This bound is better than that of the first algorithm in terms of $1/\varepsilon$, and accelerated alternating minimization algorithm~\citep{Tupitsa-2020-Multimarginal} in terms of $n$. Finally, we compare our new algorithms with the commercial LP solver \textsc{Gurobi}. Preliminary results on synthetic data and real images demonstrate the effectiveness and efficiency of our algorithms. 
\end{abstract}

\let\thefootnote\relax\footnotetext{$^\star$ Tianyi Lin and Nhat Ho contributed equally to this work.}
\end{center}

\section{Introduction}
The multimarginal optimal transport (MOT)~\citep{Gangbo-1998-MOT, Pass-2015-Multi}, the general problem of aligning or correlating $m \geq 2$ probability measures so as to maximize efficiency (with respect to a given cost function), is a generalization of the optimal transport (OT) problem~\citep{Villani-2003-Topic}. From the Kantorovich formulation~\citep{Kantorovich-1942-Translocation}, we seek to solve the following optimization problem, 
\begin{equation}\label{def:MOT}
\min_{\gamma \in \Pi(\mu_1, \mu_2, \ldots, \mu_m)}\int_{M_1 \times M_2 \times \cdots \times M_m} c(x_1, x_2, \ldots, x_m) \; d\gamma(x_1, x_2, \ldots, x_m), 
\end{equation}
where $\Pi(\mu_1, \mu_2, \ldots, \mu_m)$ denotes the set of positive joint measures on the product space $M_1 \times M_2 \times \ldots \times M_m$ whose marginals are $\{\mu_i\}_{i \in [m]}$, and $c(\cdot)$ is a given cost function. In the discrete setting where each of $\mu_i$ is supported on $n$ support points, the MOT problem is equivalent to a linear programming (LP) problem with $mn$ constraints and $n^m$ variables, which means that any algorithm requires at least $n^m$ arithmetic operations in general. 

The MOT problem has been recognized as the backbone of numerous important applications, such as matching in economics~\citep{Ekeland-2005-Optimal, Carlier-2010-Matching, Chiapporri-2010-Hedonic}, density functional theory in physics~\citep{Seidl-2007-Strictly, Buttazzo-2010-Optimal, Cotar-2013-Density, Mendl-2013-Kantorovich}, generalized Euler flow in fluid dynamics~\citep{Brenier-1989-Least, Brenier-1999-Minimal, Brenier-2008-Generalized} and financial mathematics~\citep{Dolinsky-2014-Robust, Galichon-2014-Stochastic}. Over the past five years, the MOT problem has begun to attract considerable attention, due in part to a wide variety of emerging applications in machine learning, including generative adversarial networks (GANs)~\citep{Choi-2018-Stargan, Cao-2019-Multi}, clustering~\citep{Mi-2020-Multi}, domain adaptation~\citep{Hui-2018-Unsupervised, He-2019-Attgan} and Wasserstein barycenters~\citep{Agueh-2011-Barycenters, Cuturi-2014-Fast, Benamou-2015-Iterative, Carlier-2015-Numerical, Srivastava-2018-Scalable}. Due to the space limit, we refer the interested readers to~\citet{Pass-2015-Multi} for other applications of the MOT problem and~\citet{Peyre-2019-Computational} for more details of the MOT problem from a computational point of view. 

In order to further motivate the MOT problem, we briefly describe two representative application problems arising from machine learning.
\begin{example} 
The multimarginal Wasserstein GANs~\citep{Cao-2019-Multi} are proposed to optimize a feasible MOT distance among different domains. This approach is based on a new dual formulation of the MOT distance and overcomes the limitations of existing methods by alleviating the distribution mismatching issue and exploiting cross-domain correlations. 

We consider $m \geq 2$ target domains $\{\DCal_k\}_{k \in [m]}$ and the associated generative models $g_k$ parameterized by $\theta_k$ for all $k \in [m]$. Let $\FCal = \{f: \br^d \rightarrow \br\}$ be the class of discriminators parameterized by $w$, we define the MOT distance in the dual form as follows, 
\begin{equation*}
W(\hat{\PP}_s, \hat{\PP}_{\theta_1}, \ldots, \hat{\PP}_{\theta_m}) = \max_f \EE_{x \sim \hat{\PP}_s} [f(x)] - \sum_{k=1}^m \lambda_k^+ \EE_{x \sim \hat{\PP}_{\theta_k}} [f(x)], \quad \st \ \hat{\PP}_{\theta_k} \in \DCal_k, f \in \Omega,    
\end{equation*}
where $\hat{\PP}_s$ is the real source distribution, $\hat{\PP}_{\theta_k}$ is the distribution generated by $g_k$ for all $k \in [m]$, and $\Omega = \{f \in \FCal \mid f(x) - \sum_{k=1}^m \lambda_k^+ f(\hat{x}^{(k)}) \leq c(x, \hat{x}^{(1)}, \ldots, \hat{x}^{(m)})\}$ where $x \in \hat{\PP}_s$ and $\hat{x}^{(k)} \in \hat{\PP}_{\theta_k}$ for all $k \in [m]$ are samples. Note that $\lambda_k^+$ reflects the importance of the $k$-th target domain and is set as $1/m$ in practice when no prior knowledge is available. 
\end{example}
\begin{example}
The free-support Wasserstein barycenter~\citep{Agueh-2011-Barycenters} is defined as a weighted barycenter of input measures $\{\mu_k\}_{k \in [m]}$ defined on $\br^d$ according to the OT distance. As shown by~\citet{Agueh-2011-Barycenters}, the computation of barycenters of measures can be computed by solving a MOT problem.

We consider the discrete setting where input measures are $\mu_k = \sum_{i=1}^n p_{k, i}\delta_{x_i}$ with weights $p_k=(p_{k,1}, \ldots, p_{kn}) \in \Delta^n$, the support points $\{x_i\}_{i \in [n]} \subseteq \br^d$ and the Dirac measure $\delta$. It is shown in~\citep{Agueh-2011-Barycenters} that the Wasserstein barycenter of $\{\mu_k\}_{k \in [m]}$ with weights $\lambda = (\lambda_1, \ldots, \lambda_m) \in \Delta^m$ according to the OT distance with the quadratic Euclidean distance ground cost function $c=\|\cdot\|^2$ is
\begin{equation*}
\mu_\lambda \mydefn \sum_{1 \leq i_k \leq n, \forall k \in [m]} \gamma_{i_1, \ldots, i_m}\delta_{A_{i_1, \ldots, i_m}(x)}, 
\end{equation*} 
where $A_{i_1, \ldots, i_m}(x) = \sum_{k=1}^m \lambda_k x_{i_k}$ is the Euclidean barycenter and $\gamma \in \br^{n \times \cdots \times n}$ is an optimal multimarginal transportation plan that solves the MOT problem in the LP form of
\begin{equation*}
\min_{X \in \br^{n \times \cdots \times n}} \langle C, X\rangle, \quad \st \sum_{1 \leq i_l \leq n, l \neq k, \forall l \in [m]} X_{i_1, \ldots, i_{k-1}, j, i_{k+1}, \ldots, i_m} = p_{kj} \textnormal{ for all } (k, j) \in [m] \times [n],   
\end{equation*}
where $C$ is defined as $C_{i_1, \ldots, i_m} = \sum_{k=1}^m (\lambda_k/2)\|x_{i_k} - A_{i_1, \ldots, i_m}(x)\|^2$ for $(i_1, \ldots, i_m) \in [n] \times \ldots \times [n]$. In practice, we set $\lambda_k = 1/m$ for all $k \in [m]$ when no prior knowledge is available.  

It is worthy noting that the barycenter $\mu_\lambda$ is in general composed of more than $m$ Diracs, and that these Diracs are not constrained to be on the support points $\{x_i\}_{i \in [n]}$. This is different from the fixed-support Wasserstein barycenter that must be on the same support points $\{x_i\}_{i \in [n]}$ of the input measures. To be specific, the free-support Wasserstein barycenter is the ``true" barycenter of measures, while the fixed-support Wasserstein barycenter is an approximation on the fixed support points. But, on the flip side of the coin, the fixed-support Wasserstein barycenter can be computed without solving any MOT problem and the complexity bound is polynomial in $m$, $n$ and $1/\varepsilon$~\citep{Kroshnin-2019-Complexity, Lin-2020-Revisiting} where $\varepsilon$ is the desired accuracy. 
\end{example}
\paragraph{Algorithms for the OT problem.} The OT problem is a special instance of the MOT problem with $m=2$ and has been studied thoroughly during the past decade. To the best of our knowledge, there are mainly two group of algorithms for solving the OT problem. 

The first line of algorithms are combinatorial graph algorithms~\citep{Klein-1967-Primal, Edmonds-1972-Theoretical, Hassin-1983-Minimum, Tardos-1985-Strongly, Galil-1988-Min, Goldberg-1990-Finding, Hassin-1992-Algorithms, Ervolina-1993-Two, Ervolina-1993-Canceling, Orlin-1993-Faster, Orlin-1997-Polynomial, Goldberg-1998-Beyond}. Indeed, the OT problem is a minimum-cost flow problem~\citep{Schrijver-2003-Combinatorial}, which has graph structure and leads to efficient combinatorial algorithms mentioned before. Examples include the primal-dual cost scaling algorithm~\citep{Goldberg-1990-Finding} and the network simplex algorithm~\citep{Orlin-1997-Polynomial}; see also~\citet{Daitch-2008-Faster} and~\citet{Lee-2014-Path} for some recent progresses.

The second line of algorithms, initialized with the Sinkhorn algorithm~\citep{Cuturi-2013-Sinkhorn}, are developed for solving the OT problem through either entropy regularization or continuous optimization algorithmic frameworks. Examples include Greenkhorn algorithm~\citep{Altschuler-2017-Near, Lin-2019-Efficient}, accelerated first-order primal-dual algorithms~\citep{Dvurechensky-2018-Computational}, accelerated Sinkhorn algorithms~\citep{Lin-2019-Efficiency, Guminov-2019-Accelerated}, and some other algorithms~\citep{Blanchet-2018-Towards, Jambulapati-2019-Direct, Lahn-2019-Graph, Xie-2020-Fast}. Even though these algorithms are very efficient, with easy to implement routines in practice, the Sinkhorn algorithm and its accelerated variants remain as the state-of-the-art approach for the OT problem and serve as the default solver in the celebrated \textsc{POT} package~\citep{Flamary-2017-Pot}.

\paragraph{Algorithms for the MOT problem.} While the theory for computing the OT distance has received ample attention, the theory for computing the MOT distance is still nascent. Since the MOT problem has the LP representation with $mn$ constraints and $n^m$ variables, it can be solved by many LP algorithms, e.g., the interior-point algorithm, whose complexity bounds are however not near-linear. That is to say, the dependence of $n$ is linear in $n^m$ up to the logarithmic factors.

Two specialized algorithms are avaliable for solving the MOT problem: multimarginal Sinkhorn algorithm and accelerated alternating minimization algorithm. The former one generalizes the Sinkhorn algorithm to the MOT setting but only has the asymptotic convergence analysis~\citep{Benamou-2015-Iterative, Benamou-2019-Generalized, Peyre-2019-Computational}; the latter one is proposed by the concurrent work~\citep{Tupitsa-2020-Multimarginal} for solving the same dual entropic regularized MOT problem as ours and achieves the complexity bound of $\bigOtil(m^3n^{m+1/2}\varepsilon^{-1})$ when applied to solve the MOT problem along with our rounding scheme. However, their algorithm is not a near-linear time approximation algorithm and the dependence of $n$ can be potentially improved.

\paragraph{Contribution:} In this paper, we study the complexity of approximating the MOT problem between $m$ discrete probability distributions with at most $n$ points in their respective supports. Our contributions can be summarized as follows: 
\begin{enumerate}
\item We show that the standard LP representation of the MOT problem is not a minimum-cost flow problem when $m \geq 3$. This implies the inefficiency of many combinatorial algorithms, including network simplex method, as well as the worst-case complexity bound of $\bigOtil(n^{3m})$ for the standard \textit{deterministic} interior-point algorithms.

\item We propose two simple and \textit{deterministic} algorithms for solving the entropic regularized MOT problem. The first one is named as \emph{multimarginal Sinkhorn algorithm} which can be also used to solve the MOT problem along with a new rounding scheme. The achieved complexity bound is $\bigOtil(m^3n^m\varepsilon^{-2})$, which is near-linear in terms of $n^m$, demonstrating that our algorithm is unimprovable in terms of $n$ in general setting. To the best of our knowledge, this is a first \textit{near-linear time} approximation algorithm for solving the MOT problem while the existing ones are either only shown convergent~\citep{Benamou-2015-Iterative, Benamou-2019-Generalized, Peyre-2019-Computational} or not near-linear time~\citep{Tupitsa-2020-Multimarginal}. The second one is named as \emph{accelerated multimarginal Sinkhorn algorithm} and achieves the complexity bound of $\bigOtil(m^3n^{m+1/3}\varepsilon^{-4/3})$ when applied to solve the MOT problem. This complexity bound is better than that of the first algorithm in terms of $1/\varepsilon$, and the accelerated alternating minimization algorithm~\citep{Tupitsa-2020-Multimarginal} in terms of $n$. 

\item We compare our algorithms with the commercial LP solver \textsc{Gurobi}. Preliminary results on both synthetic data and real images demonstrate the effectiveness and efficiency of our algorithms in practice. 
\end{enumerate} 
\paragraph{Organization.} The remainder of the paper is organized as follows. In Section~\ref{sec:prelim}, we present the background materials on the MOT problem and derive some important properties of the objective function in the dual entropic regularized MOT problem. In Section~\ref{sec:hardness}, we show that the standard LP representation of the MOT problem is not a minimum-cost flow problem when $m \geq 3$. In Sections~\ref{sec:sinkhorn} and~\ref{sec:acceleration}, we propose the multimarginal Sinkhorn and accelerated multimarginal Sinkhorn algorithms for solving the entropic regularized MOT problem. We also demonstrate that these algorithms can solve the MOT problem efficiently along with our new rounding scheme. In Section~\ref{sec:experiments}, we present some numerical results which validate the efficiency of our algorithms. We finally conclude this paper in Section~\ref{sec:conclusion}.
  
\paragraph{Notation.} We let $[n]$ be the set $\{1, 2, \ldots, n\}$ and $\Rspace^n_+$ be the set of all vectors in $\Rspace^n$ with non-negative components. $\one_n \in \Rspace^n$ refers to a vector with all of its components are $1$ and $\Delta^n$ is denoted as the probability simplex in $\Rspace^n_+$:  $\Delta^n = \{u \in \Rspace^n_+: \one_n^\top u = 1\}$. For a set $S$, we denote $|S|$ as its cardinality. For a differentiable function $f$, we denote $\nabla f$ and $\nabla_\beta f$ as the full gradient of $f$ and the gradient of $f$ with respect to $\beta$. For a vector $x \in \Rspace^n$ and $1 \leq p \leq \infty$, we denote $\|x\|_p$ as its $\ell_p$-norm and $\|x\|$ as its $\ell_2$-norm for simplicity. Let $x$ and $y$ be two vectors of same dimension, we denote $\min\{x, y\}$ as the component-wise minimum of $x$ and $y$. For a tensor $A = (A_{i_1, \ldots, i_m}) \in \Rspace^{n_1 \times \ldots \times n_m}$, we write $\|A\|_\infty = \max_{1 \leq i_k \leq n_k, \forall k \in [m]} |A_{i_1, \ldots, i_m}|$ and $\|A\|_1 = \sum_{1 \leq i_k \leq n_k, \forall k \in [m]} |A_{i_1, \ldots, i_m}|$, and denote $r_k(A) \in \br^{n_k}$ as its $k$-th marginal for $k \in [m]$ and each component is defined by 
\begin{equation*}
[r_k(A)]_j \mydefn \sum_{1 \leq i_l \leq n_l, \forall l \neq k} A_{i_1, \ldots, i_{k-1}, j, i_{k+1}, \ldots, i_m}.
\end{equation*}
Let $A$ and $B$ be two tensors of same dimension, we denote their Frobenius inner product as 
\begin{equation*}
\left\langle A, B\right\rangle \mydefn \sum_{1 \leq i_k \leq n_k, \forall k \in [m]} A_{i_1, \ldots, i_m} B_{i_1, \ldots, i_m}.
\end{equation*}
Given the dimension $n$ and accuracy $\varepsilon$, the notation $a = \bigO\left(b(n,\varepsilon)\right)$ stands for the upper bound $a \leq C \cdot b(n, \varepsilon)$ where $C>0$ is independent of $n$ and $\varepsilon$, and the notation $a = \bigOtil(b(n, \varepsilon))$ indicates the previous inequality where $C$ depends on the logarithmic function of $n$ and $\varepsilon$. 

\section{Preliminaries}\label{sec:prelim}
In this section, we first present the linear programming (LP) representation of the multimarginal optimal transport (MOT) problem as well as a formal specification of an approximate multimarginal transportation plan. Then, we describe the entropic regularized MOT problem and derive the dual entropic regularized MOT problem where the objective function is in the form of the logarithm of sum of exponents. Finally, we provide several properties of this function which are useful for the subsequent analysis. 

\subsection{Linear programming representation}
The linear programming representation of the OT problem between two discrete probability distributions with $n$ supports dates back to the seminar work by~\citet{Kantorovich-1942-Translocation}, and can be written as  
\begin{eqnarray*}
\min\limits_{X \in \br^{n \times n}} \langle C, X\rangle \quad \st \ X\one_n = r, \ X^\top\one_n = c, \ X \geq 0. 
\end{eqnarray*}
In the above formulation, $X \in \br_+^{n \times n}$ denotes a \textit{transportation plan}, $C \in \br_+^{n \times n}$ denotes an \textit{nonnegative cost matrix}, and $r$ and $c$ stand for two probability distributions lying in the simplex $\Delta^n$. Approximately solving the OT problem amounts to finding an $\varepsilon$-approximate transportation plan $\hat{X}$ such that $\hat{X}\one_n = r$, $\hat{X}^\top\one_n = c$ and the following inequality holds true, 
\begin{equation*}
\langle C, \hat{X}\rangle \leq \langle C, X^\star\rangle + \varepsilon.
\end{equation*}
where $X^\star$ is defined as an optimal transportation plan of the OT problem.

As a straightforward generalization of the OT problem, the MOT problem is also a LP. Indeed, the problem of computing the MOT distance between $m \geq 2$ discrete probability distributions with $n$ supports is in the following form of 
\begin{equation}\label{prob:MOT}
\min_{X \in \br^{n \times \cdots \times n}} \langle C, X\rangle, \quad \st \ r_k(X) = r_k \textnormal{ for any } k \in [m], \ X \geq 0. 
\end{equation}
In the above formulation, $X$ denotes the \emph{multimarginal transportation plan}, $C \in \br_+^{n \times \cdots \times n}$ denotes a \emph{nonnegative cost tensor}, and $\{r_k\}_{k \in [m]}$ stand for a set of probability distributions all lying in $\Delta^n$. 

We see from Eq.~\eqref{prob:MOT}, that the MOT problem is a linear programming with $mn$ equality constraints and $n^m$ variables. The solution we hope to achieve is an $\varepsilon$-approximate multimarginal transportation plan which generalizes the notion of an $\varepsilon$-approximate transportation plan of the OT problem. More specifically, we have the following definition of $\varepsilon$-approximate multimarginal transportation plan.
\begin{definition}\label{def:MOT_plan}
The nonnegative tensor $\widehat{X} \in \br_+^{n \times \cdots \times n}$ is called an \emph{$\varepsilon$-approximate multimarginal transportation plan} if $r_k(\widehat{X}) = r_k$ for any $k \in [m]$ and the following inequality holds true, 
\begin{equation*}
\langle C, \widehat{X}\rangle \leq \langle C, X^\star\rangle + \varepsilon,
\end{equation*}
where $X^\star$ is defined as an optimal multimarginal transportation plan of the MOT problem. 
\end{definition}
With this definition in mind, one of the goals of this paper is to develop \textit{near-linear time approximation} algorithms for solving the MOT problem. In particular, we seek the algorithms whose running time required to obtain an $\varepsilon$-approximate multimarginal transportation plan is nearly linear in the number of unknown variables $n^m$. These algorithms are favorable in modern machine learning applications since they are unimprovable up to the logarithmic factors in general. Indeed, for the general MOT problem, the tensor $X \in \br_+^{n \times \cdots \times n}$ has $n^m$ unknown entries. In order to solve the MOT problem, the number of arithmetic operations required by any algorithms is at least $n^m$. 

In the classical OT setting,~\citet{Altschuler-2017-Near} has shown that the Sinkhorn algorithm is near-linear time approximation algorithm.~\citet{Benamou-2015-Iterative, Benamou-2019-Generalized} generalized the Sinkhorn algorithm to the MOT setting but did not provide any complexity bound guarantee for their algorithms. Thus, it is still unclear whether there exists a near-linear time approximation algorithm for the general MOT problem.

\subsection{Entropic regularized MOT and its dual form}
Building on Cuturi's entropic approach to the classical OT problem~\citep{Cuturi-2013-Sinkhorn}, we consider a regularized version of the MOT problem in which we add an entropic penalty function to the objective in Eq.~\eqref{prob:MOT}. The resulting problem is in the following form:
\begin{eqnarray}\label{prob:MOT_regularized}
& \min \limits_{X \in \br^{n \times \cdots \times n}} & \left\langle C, X\right\rangle - \eta H(X) \\
& \st & r_k(X) = r_k \textnormal{ for any } k \in [m], \ X \geq 0, \nonumber
\end{eqnarray}
where $\eta > 0$ denotes the regularization parameter and $H(X)$ denotes the entropic regularization term, which is given by:
\begin{equation*}
H(X) \mydefn - \langle X, \log(X)-\mathbf{1}_{n \times \cdots \times n}\rangle.
\end{equation*}
It is important to note that if $\eta$ is large, the resulting optimal value of the entropic regularized MOT problem (cf. Eq~\eqref{prob:MOT_regularized}) yields a poor approximation to the unregularized MOT problem. Moreover, another issue of entropic regularization is that the sparsity of the solution is lost. Even though an $\varepsilon$-approximate transportation plan can be found efficiently, it is not clear how different the resulting sparsity pattern of the obtained solution is with respect to the solution of the actual OT problem. In contrast, as a special instance of the MOT distance, the actual OT distance suffers from the curse of dimensionality~\citep{Dudley-1969-Speed, Fournier-2015-Rate, Weed-2019-Sharp, Lei-2020-Convergence} and is significantly worse than its entropic regularized version in terms of the sample complexity~\citep{Genevay-2019-Sample, Mena-2019-Statistical}. This statistical drawback also holds true for the unregularized MOT distance in general.

While there is an ongoing debate in the literature on the merits of solving the actual OT problem~\textit{versus.} its entropic regularized version, we adopt here the viewpoint that reaching an additive approximation of the actual MOT cost matters and therefore propose to scale $\eta$ as a function of the desired accuracy of the approximation.

Then we proceed to derive the dual form of the entropic regularized MOT problem in Eq.~\eqref{prob:MOT_regularized}. As in the usual 2-marginals OT case~\citep{Cuturi-2018-Semidual}, the dual form of the MOT problem with $m \geq 3$ remains an unconstrained smooth optimization problem. 

By introducing the dual variables $\{\lambda_1, \ldots, \lambda_m\} \subseteq \br^n$ and $\tau \in \br$, we can define the Lagrangian function of the entropic regularized MOT problem in Eq.~\eqref{prob:MOT_regularized} as follows:
\begin{equation}\label{opt:Lagrangian}
\LCal(X, \lambda_1, \ldots, \lambda_m) = \langle C, X\rangle - \eta H(X) - \sum_{k=1}^m \lambda_k^\top(r_k(X) - r_k). 
\end{equation}
Note that the entropy function $H(X)$ is not well defined for any negative matrix $X$. Thus, we can neglect the non-negative constraint $X \geq 0$ and define the above function $\LCal$ whose domain is $\br^{n \times \ldots \times n}_+ \times \br^{nm}$. In order to derive the smooth dual objective function, we consider the following minimization problem:
\begin{equation*}
\min_{X: \|X\|_1=1} \langle C, X\rangle - \eta H(X) - \sum_{k=1}^m \lambda_k^\top(r_k(X) - r_k). 
\end{equation*}
In the above problem, the objective function is strongly convex. Thus, the optimal solution is unique. After the simple calculations, the optimal solution $\bar{X} = X(\lambda_1, \ldots, \lambda_m)$ has the following form:
\begin{equation}\label{opt:MOT_plan}
\bar{X}_{i_1 \ldots i_m} = \frac{e^{\eta^{-1}(\sum_{k=1}^m \lambda_{ki_k} - C_{i_1 i_2 \ldots i_m})}}{\sum_{1 \leq i_k \leq n, \forall k \in [m]}e^{\eta^{-1}(\sum_{k=1}^m \lambda_{ki_k} - C_{i_1 i_2 \ldots i_m})}}. 
\end{equation}
Plugging Eq.~\eqref{opt:MOT_plan} into Eq.~\eqref{opt:Lagrangian} yields that the dual form is: 
\begin{equation*}
\max_{\lambda_1, \ldots, \lambda_m} \ \left\{- \eta\log\left(\sum_{1 \leq i_1, \ldots, i_m \leq n} e^{\eta^{-1}(\sum_{k=1}^m \lambda_{ki_k} - C_{i_1 i_2 \ldots i_m})}\right) + \sum_{k=1}^m \lambda_k^\top r_k\right\}.
\end{equation*}
In order to streamline our subsequent presentation, we perform a change of variables,  $\beta_k = \eta^{-1}\lambda_k$, and reformulate the above problem as
\begin{equation*}
\min_{\beta_1, \ldots, \beta_m} \varphi(\beta_1, \ldots, \beta_m) \mydefn \log\left(\sum_{1 \leq i_1, i_2, \ldots, i_m \leq n} e^{\sum_{k=1}^m \beta_{ki_k}-\frac{C_{i_1 i_2 \ldots i_m}}{\eta}}\right) - \sum_{k=1}^m \beta_k^\top r_k.  
\end{equation*}
To further simplify the notation, we define $B(\beta) : = (B_{i_1 \ldots i_m})_{i_1, i_2, \ldots, i_m \in [n]} \in \br^{n \times \ldots \times  n}$ where $\beta = (\beta_{1}, \ldots, \beta_{m})$ by
\begin{equation*}
B_{i_1 \ldots i_m} = e^{\sum_{k=1}^m \beta_{ki_k}-\frac{C_{i_1 i_2 \ldots i_m}}{\eta}}. 
\end{equation*}
To this end, we obtain the \emph{dual entropic regularized MOT problem} defined by
\begin{equation}\label{prob:MOT_regularized_dual}
\min \limits_{\beta_1, \ldots, \beta_m} \varphi(\beta_1, \ldots, \beta_m) \mydefn \log(\|B(\beta_1, \ldots, \beta_m)\|_1) - \sum_{k=1}^m \beta_k^\top r_k.  
\end{equation}
\begin{remark}\label{remark:MOT_regularized_dual}
The first part of the objective function $\varphi$ is in the form of the logarithm of sum of exponents while the second part is a linear function. This is different from the objective function used in previous dual entropic regularized OT problem~\citep{Cuturi-2013-Sinkhorn, Altschuler-2017-Near, Dvurechensky-2018-Computational, Lin-2019-Efficient}. We also note that Eq.~\eqref{prob:MOT_regularized_dual} is a special instance of a softmax minimization problem, and the objective function $\varphi$ is known to be smooth~\citep{Nesterov-2005-Smooth}. Finally, we point out that the same problem was derived in the later work by~\citet{Tupitsa-2020-Multimarginal} and used for analyzing the accelerated alternating minimization algorithm. 
\end{remark}
In the remainder of the paper, we also denote $\beta^\star = (\beta_1^\star, \ldots, \beta_m^\star) \in \Rspace^{mn}$ as an optimal solution of the dual entropic regularized MOT problem in Eq.~\eqref{prob:MOT_regularized_dual}.
\subsection{Properties of dual entropic regularized multimarginal OT}
In this section, we present several useful properties of the dual entropic regularized MOT in Eq.~\eqref{prob:MOT_regularized_dual}. In particular, we show that there exists an optimal solution $\beta^\star$ such that it has an upper bound in terms of the $\ell_\infty$-norm. 
\begin{lemma}\label{Lemma:dual-bound-infinity}
For the dual entropic regularized MOT problem in Eq.~\eqref{prob:MOT_regularized_dual}, there exists an optimal solution $\beta^\star = (\beta_1^\star, \ldots, \beta_m^\star)$ such that 
\begin{equation}\label{lemma-dual-bound-main}
\|\beta^\star\|_\infty \mydefn \max_{1 \leq i \leq m} \|\beta_i^*\|_\infty \leq R, 
\end{equation} 
where $R > 0$ is defined as
\begin{equation*}
R \mydefn \frac{\|C\|_\infty}{\eta} - \log\left(\min_{1 \leq i \leq m, 1 \leq j \leq n} r_{ij}\right).
\end{equation*}
\end{lemma}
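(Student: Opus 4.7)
The plan is to exploit two structural properties of the dual objective $\varphi$: its translation invariance, and the first-order optimality conditions. Together, these will let me normalize any optimal solution into a gauge in which a direct two-sided sandwich bound on each coordinate becomes available.

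First, I would record the translation invariance: shifting $\beta_k \mapsto \beta_k + c_k \one_n$ for any $c_1, \ldots, c_m \in \br$ multiplies every $B_{i_1 \ldots i_m}$ by $e^{\sum_k c_k}$, which adds $\sum_k c_k$ to $\log \|B\|_1$, while the linear term $\sum_k \beta_k^\top r_k$ also gains $\sum_k c_k$ because $\one_n^\top r_k = 1$; hence $\varphi$ is unchanged. Existence of an optimal $\beta^\star$ then follows from the primal-dual connection: the primal entropic problem in Eq.~\eqref{prob:MOT_regularized} has a strictly convex, continuous objective on the nonempty compact transportation polytope, so its unique optimizer $X^\star$ is strictly positive entrywise, and the KKT conditions deliver multipliers $\lambda^\star$ for which $X^\star_{i_1\ldots i_m} = \exp\{\eta^{-1}(\sum_k \lambda^\star_{k i_k} - C_{i_1\ldots i_m})\}$; setting $\beta^\star = \lambda^\star/\eta$ yields a dual optimizer with $\|B(\beta^\star)\|_1 = 1$ and $r_k(B(\beta^\star)) = r_k$ for every $k$.

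Next, using the gauge freedom I would replace $\beta^\star_k$ by $\beta^\star_k - (\log T_k)\one_n$, where $T_k := \sum_j e^{\beta^\star_{kj}}$; the resulting vector is still optimal (translation invariance) and satisfies $T_k = 1$ for every $k$. In this gauge, two observations drive the bound. (i) The total mass obeys $\|B(\beta^\star)\|_1 \in [e^{-\|C\|_\infty/\eta},1]$, since the positive weights $\prod_k e^{\beta^\star_{k i_k}}$ sum to $\prod_k T_k = 1$ while $e^{-C_{i_1\ldots i_m}/\eta} \in [e^{-\|C\|_\infty/\eta},1]$. (ii) The same sandwich applies to the inner sum in the marginal identity
\[
r_{kj}\,\|B(\beta^\star)\|_1 \;=\; e^{\beta^\star_{kj}} \sum_{i_l:\, l \neq k} e^{-C_{i_1,\ldots,j,\ldots,i_m}/\eta}\prod_{l\neq k} e^{\beta^\star_{l i_l}},
\]
whose exponential factors sum to $\prod_{l\neq k} T_l = 1$. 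Solving for $e^{\beta^\star_{kj}}$ and combining (i) and (ii) gives $e^{\beta^\star_{kj}} \in [r_{kj}\,e^{-\|C\|_\infty/\eta},\, r_{kj}\,e^{\|C\|_\infty/\eta}]$, hence $|\beta^\star_{kj}| \leq \|C\|_\infty/\eta - \log r_{kj} \leq \|C\|_\infty/\eta - \log \min_{i,j} r_{ij}$, which is dominated by the stated $R$ since $(m-1)\log n \geq 0$ and $-\log \min_{i,j} r_{ij} \geq 0$.

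I expect the main obstacle to be less in any single estimate and more in the bookkeeping: one must verify that the shift preserves the first-order conditions (guaranteed by translation invariance) and that the same normalization $\prod_k T_k = 1$ is reused consistently in the sandwich for both the global $\|B(\beta^\star)\|_1$ and the slice sums fixing $i_k = j$. Existence of the dual minimizer also deserves a moment's care, because $\varphi$ is not coercive on $\br^{mn}$ (only modulo the $m$-dimensional translation subspace), but the primal-dual argument above sidesteps this concern.
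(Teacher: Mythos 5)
Your proof is correct, and it takes a genuinely different route from the paper's. The paper shifts an arbitrary optimizer (by sum-preserving per-block constants) so that each of the first $m-1$ blocks has $\min_j \beta_{ij}^\star \leq 0 \leq \max_j \beta_{ij}^\star$, bounds the \emph{oscillation} $\max_j \beta_{ij}^\star - \min_j \beta_{ij}^\star$ of every block by $\|C\|_\infty/\eta - \log(\min_{i,j} r_{ij})$ via the stationarity condition, and then handles the unnormalized $m$-th block by a two-case analysis; that last step is precisely where the extra $(m-1)\log(n)$ and the factor $2$ in $R$ enter. You instead observe that $\varphi$ is invariant under \emph{arbitrary} per-block shifts (not only sum-preserving ones), which lets you impose the symmetric gauge $\sum_j e^{\beta_{kj}^\star} = 1$ on every block simultaneously; the stationarity identity then yields the clean two-sided sandwich $r_{kj}e^{-\|C\|_\infty/\eta} \leq e^{\beta_{kj}^\star} \leq r_{kj}e^{\|C\|_\infty/\eta}$, hence $\|\beta^\star\|_\infty \leq \|C\|_\infty/\eta - \log(\min_{i,j} r_{ij}) \leq R$. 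This is strictly sharper than the stated bound (no $(m-1)\log(n)$ term and only one copy of $-\log\min_{i,j} r_{ij}$), treats all $m$ blocks identically with no casework, and moreover supplies the existence of a dual optimizer via the primal KKT conditions — a point the paper simply assumes. The only thing the paper's gauge buys that yours does not directly is the specific oscillation estimate in Eq.~\eqref{claim-dual-bound-second}, which is recycled verbatim in the proof of Lemma~\ref{Lemma:sinkhorn-objgap}; your sandwich gives an oscillation bound of $2\|C\|_\infty/\eta + \log(\max_j r_{kj}/\min_j r_{kj})$ instead, so one would re-derive the paper's version separately if it were needed downstream. For the lemma as stated, your argument is complete.
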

\begin{proof}
First, we claim that there exists an optimal solution $\beta^\star =(\beta_1^\star, \ldots, \beta_m^\star)$ such that
\begin{equation}\label{claim-dual-bound-first}
\min\limits_{1 \leq j \leq n} \beta_{ij}^\star \leq 0 \leq \max\limits_{1 \leq j \leq n} \beta_{ij}^\star \textnormal{ for any } i \in [m]. 
\end{equation}
Indeed, letting $\widehat{\beta}^\star=(\widehat{\beta}_1^\star, \ldots, \widehat{\beta}_m^\star)$ be an optimal solution to Eq.~\eqref{prob:MOT_regularized_dual}, the claim holds true if $\widehat{\beta}^\star$ satisfies Eq.~\eqref{claim-dual-bound-first}. Otherwise, we let $m$ shift terms be
\begin{equation*}
\Delta\widehat{\beta}_i \ = \ \frac{\max_{1 \leq j \leq n} \widehat{\beta}_{ij}^\star + \min_{1 \leq j \leq n} \widehat{\beta}_{ij}^\star}{2} \in \Rspace \textnormal{ for any } i \in [m].  
\end{equation*}
and define $\beta^\star=(\beta_1^\star, \ldots, \beta_m^\star)$ by  
\begin{equation*}
\beta_i^\star \ = \ \widehat{\beta}_i^\star - \Delta\widehat{\beta}_i\one_n \textnormal{ for any } i \in [m]. 
\end{equation*}
By the definition of $\beta^\star$, it is clear that $\beta^\star$ satisfies Eq.~\eqref{claim-dual-bound-first}. Since $\one_n^\top r_i = 1$ for all $i \in [m]$, we have $(\beta_i^\star)^\top r_i = (\widehat{\beta}_i^\star)^\top r_i - \Delta\widehat{\beta}_i$ for all $i \in [m]$. In addition, we have $\log(\|B(\beta_1^\star, \ldots, \beta_m^\star)\|_1) = \log(\|B(\widehat{\beta}_1^\star, \ldots, \widehat{\beta}_m^\star)\|_1) + \sum_{i=1}^m \Delta\widehat{\beta}_i$. Putting these pieces together yields $\varphi(\beta^\star) = \varphi(\widehat{\beta}^\star)$. Therefore, $\beta^\star$ is an optimal solution that satisfies Eq.~\eqref{claim-dual-bound-first}.

Then, we show that  
\begin{equation}\label{claim-dual-bound-second}
\max\limits_{1 \leq j \leq n} \beta_{ij}^\star - \min \limits_{1 \leq j \leq n} \beta_{ij}^\star \leq \frac{\|C\|_{\infty}}{\eta} - \log\left(\min_{1 \leq i \leq m, 1 \leq j \leq n} r_{ij}\right) \textnormal{ for all } i \in [m].  
\end{equation}
Indeed, for any $(j, l) \in [m] \times [n]$, we derive from the optimality condition of $\beta^\star$ that
\begin{equation*}
\frac{e^{\beta_{jl}^\star}\sum_{1 \leq i_k \leq n, \forall k \neq j} e^{\sum_{k \neq j} \beta_{ki_k}^\star - \eta^{-1}C_{i_1 \cdots l \cdots i_m}}}{\|B(\beta_1^\star, \ldots, \beta_m^\star)\|_1} \ = \ r_{jl} \ \geq \ \min_{1 \leq i \leq m, 1 \leq j \leq n} r_{ij}.   
\end{equation*}
Since $C$ is a nonnegative cost tensor, we have
\begin{equation}\label{inequality-bound-first}
\beta_{jl}^\star \geq \log\left(\min_{1 \leq i \leq m, 1 \leq j \leq n} r_{ij}\right) - \log \left(\sum\limits_{1 \leq i_k \leq n, \forall k \neq j} e^{\sum_{k \neq j} \beta_{ki_k}^\star} \right) + \log(\|B(\beta_1^\star, \ldots, \beta_m^\star)\|_1).
\end{equation}
Since $r_{jl} \in [0, 1]$ and $C_{i_1 \ldots i_m} \leq \|C\|_\infty$, we have
\begin{equation}\label{inequality-bound-second}
\beta_{jl}^\star \leq \frac{\|C\|_\infty}{\eta} - \log \left( \sum \limits_{1 \leq i_k \leq n, \forall k \neq j} e^{\sum_{k \neq j} \beta_{ki_k}^\star} \right) + \log(\|B(\beta_1^\star, \ldots, \beta_m^\star)\|_1).
\end{equation}
Combining the bounds in Eq.~\eqref{inequality-bound-first} and Eq.~\eqref{inequality-bound-second} implies the desired Eq.~\eqref{claim-dual-bound-second}. 

Finally, we prove that Eq.~\eqref{lemma-dual-bound-main} holds true. Indeed, Eq.~\eqref{claim-dual-bound-first} and Eq.~\eqref{claim-dual-bound-second} imply that  
\begin{equation}\label{inequality-bound-third}
-\frac{\|C\|_{\infty}}{\eta} + \log\left(\min_{1 \leq i \leq m, 1 \leq j \leq n} r_{ij}\right) \leq \min\limits_{1 \leq j \leq n} \beta_{ij}^\star \leq 0 \text{ for any } i \in [m],  
\end{equation}  
and 
\begin{equation}\label{inequality-bound-fourth}
0 \leq \max\limits_{1 \leq j \leq n} \beta_{ij}^\star \leq \frac{\|C\|_\infty}{\eta} - \log\left(\min_{1 \leq i \leq m, 1 \leq j \leq n} r_{ij}\right) \text{ for any } i \in [m]. 
\end{equation}
Combining Eq.~\eqref{inequality-bound-third} and Eq.~\eqref{inequality-bound-fourth} with the definition of $R$ implies that $\max_{1 \leq i \leq m} \|\beta_i^\star\|_\infty \leq R$ and hence the desired Eq.~\eqref{lemma-dual-bound-main}. 
\end{proof}
The upper bound for the $\ell_{\infty}$-norm of an optimal solution of dual entropic-regularized multimarginal OT in Lemma~\ref{Lemma:dual-bound-infinity} directly leads to the following direct bound for the $\ell_{2}$-norm. 
\begin{corollary}\label{Corollary:dual-bound-l2}
For the dual entropic regularized MOT problem in Eq.~\eqref{prob:MOT_regularized_dual}, there exists an optimal solution $\beta^\star = (\beta_1^\star, \ldots, \beta_m^\star)$ such that 
\begin{equation*}
\|\beta^*\| \leq \sqrt{mn}R, 
\end{equation*} 
where $R > 0$ is defined in Lemma~\ref{Lemma:dual-bound-infinity}. 
\end{corollary}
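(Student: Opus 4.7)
The plan is to derive the claim directly from Lemma~\ref{Lemma:dual-bound-infinity} by invoking the elementary equivalence between the $\ell_\infty$ and $\ell_2$ norms on a finite-dimensional Euclidean space. The statement is a one-line corollary, so the proof is essentially a norm-conversion step rather than a new argument.

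More concretely, I would first take the particular optimal solution $\beta^\star = (\beta_1^\star, \ldots, \beta_m^\star) \in \Rspace^{mn}$ whose existence is guaranteed by Lemma~\ref{Lemma:dual-bound-infinity}. By that lemma, this specific $\beta^\star$ satisfies
\begin{equation*}
\|\beta^\star\|_\infty = \max_{1 \leq i \leq m} \|\beta_i^\star\|_\infty \leq R,
\end{equation*}
so every one of its $mn$ coordinates lies in $[-R, R]$. I would then apply the standard inequality $\|v\|_2 \leq \sqrt{d}\, \|v\|_\infty$ for $v \in \Rspace^d$ with $d = mn$ and $v = \beta^\star$, which yields
\begin{equation*}
\|\beta^\star\| \;\leq\; \sqrt{mn}\,\|\beta^\star\|_\infty \;\leq\; \sqrt{mn}\,R,
\end{equation*}
giving exactly the desired bound.

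There is no real obstacle: the only subtlety worth flagging is that one must use the \emph{particular} optimum produced by the centering construction in the proof of Lemma~\ref{Lemma:dual-bound-infinity}, since the dual problem is invariant under coordinated shifts of the $\beta_i$'s and hence an arbitrary optimizer need not satisfy any $\ell_\infty$ bound. Reusing the same $\beta^\star$ and the same constant $R$ defined there makes the statement immediate.
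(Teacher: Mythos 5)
Your proposal is correct and matches the paper's (implicit) argument exactly: the corollary is stated as a direct consequence of Lemma~\ref{Lemma:dual-bound-infinity}, obtained by applying $\|v\| \leq \sqrt{mn}\,\|v\|_\infty$ to the particular optimizer constructed there. Your remark about needing that specific (shift-normalized) optimizer rather than an arbitrary one is a sensible and accurate clarification.
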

Since the function $-H(X)$ is strongly convex with respect to the $\ell_1$-norm on the probability simplex $Q \subseteq \Rspace^{n^m}$, the entropic regularized MOT problem in Eq.~\eqref{prob:MOT_regularized} is a special case of the following linearly constrained convex optimization problem: 
\begin{equation*}
\min_{x \in Q} \ f(x), \quad \st \ Ax = b, 
\end{equation*}
where $f$ is strongly convex with respect to the $\ell_1$-norm on the set $Q$: 
\begin{equation*}
f(x') - f(x) - (x' - x)^\top\nabla f(x) \geq \frac{\eta}{2}\|x' - x\|_1^2 \textnormal{ for any } x', x \in Q.  
\end{equation*}
We use the $\ell_2$-norm for the dual space of the Lagrange multipliers. By~\citet[Theorem~1]{Nesterov-2005-Smooth}, the dual objective function $\tilde{\varphi}$ satisfies the following inequality:  
\begin{equation*}
\widetilde{\varphi}(\lambda') - \widetilde{\varphi}(\lambda) - (\lambda' - \lambda)^\top\nabla\widetilde{\varphi}(\lambda) \leq \frac{\|A\|_{1 \rightarrow 2}^2}{2\eta}\|\lambda' - \lambda\|^2 \textnormal{ for any } \lambda', \lambda \in \Rspace^{mn}.   
\end{equation*}
Recall that the function $\tilde{\varphi}$ is given by 
\begin{equation*}
\widetilde{\varphi}(\lambda) = - \eta\log\left(\sum_{1 \leq i_1, \ldots, i_m \leq n} e^{\eta^{-1}(\sum_{k=1}^m \lambda_{ki_k} - C_{i_1 i_2 \ldots i_m})}\right) + \sum_{k=1}^m \lambda_k^\top r_k.
\end{equation*}
We notice that the function $\varphi$ in Eq.~\eqref{prob:MOT_regularized_dual} is defined by 
\begin{equation*}
\varphi(\beta) = - \eta^{-1}\widetilde{\varphi}(\eta(\beta + (1/m)\one_{mn})). 
\end{equation*}
After some calculations, we have
\begin{equation}\label{inequality-gradient-objective}
\varphi(\beta') - \varphi(\beta) - (\beta' - \beta)^\top\nabla\varphi(\beta) \leq \left(\frac{\|A\|_{1 \rightarrow 2}^2}{2}\right)\|\beta' - \beta\|^2. 
\end{equation}
By definition, each column of the matrix $A$ contains no more than $m$ nonzero elements which are equal to one. Since $\|A\|_{1 \rightarrow 2}$ is equal to maximum $\ell_2$-norm of the column of this matrix, we have $\|A\|_{1 \rightarrow 2} = \sqrt{m}$. Thus, the dual objective function $\varphi$ is $m$-gradient Lipschitz with respect to the $\ell_2$-norm. This implies that the squared norm of the gradient is bounded by the dual objective gap~\citep{Nesterov-2018-Lectures}. We present this result in the following lemma and provide the proof for the sake of completeness. 
\begin{lemma}\label{Lemma:gradient-objective-l2}
For any given vector $\beta \in \Rspace^{n m}$, we have
\begin{equation*}
\sum_{i=1}^m \left(\varphi(\beta) - \argmin_{\gamma \in \Rspace^n}\varphi(\beta_1, \ldots, \beta_{i-1}, \gamma, \beta_{i+1}, \ldots, \beta_m)\right) \ \geq \ \left(\frac{1}{2m}\right)\|\nabla \varphi(\beta)\|^2. 
\end{equation*} 
\end{lemma}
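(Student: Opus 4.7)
The plan is to leverage the $m$-smoothness bound in Eq.~\eqref{inequality-gradient-objective}, which has just been established, and show that exact minimization along any single block coordinate produces at least as much descent as one gradient step along that block. Summing the per-block descents then yields the full-gradient lower bound. The key observation is that when $\beta'$ differs from $\beta$ only in its $i$-th block, the $\ell_2$ distance $\|\beta' - \beta\|$ coincides with the $\ell_2$ distance of the $i$-th blocks, so the global smoothness constant $m$ gives a valid (though possibly loose) block-wise quadratic upper bound.

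First, I would fix $i \in [m]$ and let $\beta'$ agree with $\beta$ outside of block $i$, while the $i$-th block is taken to be the one-step gradient update $\gamma_i^+ := \beta_i - (1/m)\nabla_{\beta_i}\varphi(\beta)$. Then $\|\beta' - \beta\|^2 = (1/m^2)\|\nabla_{\beta_i}\varphi(\beta)\|^2$ and $\langle \nabla\varphi(\beta), \beta' - \beta\rangle = -(1/m)\|\nabla_{\beta_i}\varphi(\beta)\|^2$. Plugging these into the quadratic upper bound
\begin{equation*}
\varphi(\beta') \leq \varphi(\beta) + (\beta' - \beta)^\top\nabla\varphi(\beta) + \frac{m}{2}\|\beta' - \beta\|^2
\end{equation*}
from Eq.~\eqref{inequality-gradient-objective} yields $\varphi(\beta') \leq \varphi(\beta) - (1/(2m))\|\nabla_{\beta_i}\varphi(\beta)\|^2$.

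Next, since $\gamma_i^+$ is a specific feasible point for the block-$i$ minimization over $\gamma \in \Rspace^n$, I would conclude that
\begin{equation*}
\min_{\gamma \in \Rspace^n}\varphi(\beta_1, \ldots, \beta_{i-1}, \gamma, \beta_{i+1}, \ldots, \beta_m) \leq \varphi(\beta') \leq \varphi(\beta) - \frac{1}{2m}\|\nabla_{\beta_i}\varphi(\beta)\|^2,
\end{equation*}
so rearranging gives the per-block descent bound $\varphi(\beta) - \min_{\gamma}\varphi(\beta_1, \ldots, \gamma, \ldots, \beta_m) \geq (1/(2m))\|\nabla_{\beta_i}\varphi(\beta)\|^2$.

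Finally, summing this inequality over $i \in [m]$ and using the block decomposition of the Euclidean norm, $\|\nabla\varphi(\beta)\|^2 = \sum_{i=1}^m \|\nabla_{\beta_i}\varphi(\beta)\|^2$, delivers the claim. I do not anticipate a genuine obstacle here; the nontrivial ingredient is the global Lipschitz-gradient constant $\|A\|_{1\to 2}^2 = m$ for $\varphi$, and that has already been derived just above the lemma statement.
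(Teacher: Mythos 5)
Your proposal is correct and takes essentially the same route as the paper: both instantiate the $m$-smoothness bound of Eq.~\eqref{inequality-gradient-objective} at the one-step block-gradient update $\beta_i - \frac{1}{m}\nabla_i\varphi(\beta)$, use that point as a feasible candidate for the exact block minimization, and sum over $i$ using $\|\nabla\varphi(\beta)\|^2 = \sum_{i=1}^m\|\nabla_i\varphi(\beta)\|^2$. If anything, your write-up is cleaner, since the paper's displayed inequality chain has its first inequality sign reversed (a typo), whereas your version states the per-block descent bound in the correct direction.
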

\begin{proof}
We derive from Eq.~\eqref{inequality-gradient-objective} with $\bar{\beta}_i = \beta_i - \frac{1}{m}\nabla_{\beta_i}\varphi(\beta)$ and $\bar{\beta}_k = \beta_k$ for $k \neq i$ that 
\begin{equation*}
\varphi(\beta) - \argmin_{\gamma \in \Rspace^n}\varphi(\beta_1, \ldots, \beta_{i-1}, \gamma, \beta_{i+1}, \ldots, \beta_m) \leq \varphi(\beta) - \varphi(\bar{\beta}) \leq \varphi(\beta) - \left(\frac{1}{2m}\right)\|\nabla_i\varphi(\beta)\|^2. 
\end{equation*}
Summing up the above inequality over $i \in [m]$ yields the desired inequality. 
\end{proof}

\section{Computational Hardness}\label{sec:hardness}
In this section, we show that the multimarginal optimal transport (MOT) problem in the form of Eq.~\eqref{prob:MOT} is not a minimum-cost flow problem when $m \geq 3$. The proof idea is based on a simple reduction with $m$-dimensional matching problem.

\subsection{Unimodularity, minimum-cost flow and matching}
We present some definitions and classical results in combinatorial optimization and graph theory, including unimodularity, minimum-cost flow and matching. 
\begin{definition}\label{Def:TUM}
A totally unimodular (TU) matrix is one for which every square submatrix has determinant $-1$, $0$ or $1$.
\end{definition}
A direct way to determine whether a matrix is totally unimodular or not is by computing the determinants of every square submatrix of this matrix. However, it is clearly intractable in general. The following proposition provides an alternative way to check whether a matrix is TU or not. 
\begin{proposition}\label{Prop:TUM}
Let $A$ be a $\{-1, 0, 1\}$-valued matrix. $A$ is TU if each column contains at most two nonzero entries and all rows are partitioned into two sets $I_1$ and $I_2$ such that: If two nonzero entries of a column have the same sign, they are in different sets. If these two entries have different signs, they are in the same set.
\end{proposition}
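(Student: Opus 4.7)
The plan is to prove Proposition~\ref{Prop:TUM} by strong induction on the order $k$ of an arbitrary square submatrix $B$ of $A$, establishing that $\det(B) \in \{-1,0,1\}$. The base case $k=1$ is immediate since the entries of $A$ lie in $\{-1,0,1\}$ by assumption.

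For the inductive step, I would consider a $k \times k$ submatrix $B$ and distinguish three cases according to the minimum number of nonzero entries appearing in any column of $B$. If some column of $B$ is identically zero, then $\det(B)=0$. If some column has exactly one nonzero entry (necessarily $\pm 1$), then expanding $\det(B)$ along that column gives $\det(B) = \pm\det(B')$ for some $(k-1)\times(k-1)$ submatrix $B'$ of $A$, and the inductive hypothesis places $\det(B') \in \{-1,0,1\}$, hence also $\det(B)$. The remaining and main case is when every column of $B$ contains exactly two nonzero entries (it cannot contain more, since each column of $A$ has at most two nonzero entries).

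In this main case, I would turn the combinatorial partition hypothesis into an explicit linear dependence on the rows. Write $J_1, J_2$ for the intersections with $I_1, I_2$ of the row index set of $B$, and define a sign vector $v \in \{+1,-1\}^k$ by $v_i = +1$ for rows indexed by $J_1$ and $v_i = -1$ for rows indexed by $J_2$. For each column $j$ of $B$, I would check that the two nonzero entries cancel under $v^\top$: if they share the same sign, the hypothesis places the two rows in different sets, so their contributions to $v^\top B_{\cdot j}$ are equal in magnitude and opposite in sign; if they have opposite signs, the hypothesis places both rows in the same set, so the $+1$ and $-1$ entries are weighted with a common sign and again cancel. Hence $v^\top B = 0$, the rows of $B$ are linearly dependent, and $\det(B) = 0$, closing the induction.

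The mechanical cases (zero or one nonzero entry in some column) are just cofactor expansion, so the only genuine content lies in the two-nonzero-entries case. The main obstacle is therefore identifying the correct $\pm 1$ row combination and verifying the case split on the signs of the column entries against the prescribed partition; once this is done, vanishing of the determinant is automatic, and the proposition follows by induction.
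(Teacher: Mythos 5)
Your proof is correct and complete. The paper states Proposition~\ref{Prop:TUM} without proof, treating it as a classical fact (it is the Heller--Tompkins sufficient condition for total unimodularity), so there is no in-paper argument to compare against; your induction on the order of the square submatrix, with cofactor expansion in the degenerate cases and the $\pm 1$ row combination $v^\top B = 0$ in the two-nonzeros case, is exactly the standard argument. You also correctly handle the one subtle point: when a column of the submatrix $B$ has two nonzero entries, those must be precisely the two nonzero entries of the corresponding column of $A$, so the partition hypothesis applies to them and the case split on signs goes through.
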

In what follows, we present the definition of minimum-cost flow problem and prove that the constraint matrix of LP representation of a minimum-cost flow problem is TU. Such result is well known and can be derived from~\citet[Theorem~1, Chapter 15]{Berge-2001-Theory} which shows that the incidence matrices of every directed graphs are TU. For the sake of completeness, we provide the detailed proof based on Proposition~\ref{Prop:TUM}. 
\begin{definition}\label{Def:MCF}
The minimum-cost flow problem finds the cheapest possible way of sending a certain amount of flow through a flow network. Formally, 
\begin{equation*}
\begin{array}{ll}
\min & \sum_{(u,v) \in E} f(u,v) \cdot a(u,v) \\
\st & f(u,v) \geq 0, \; \textnormal{ for all} \; (u,v) \in E, \\
& f(u,v) \leq c(u,v) \; \textnormal{ for all} \; (u,v) \in E, \\
& f(u,v) = - f(v,u) \;  \textnormal{ for all} \;   (u,v) \in E, \\
& \sum_{(u,w)\in E \;  \textnormal{or} \; (w, u) \in E } f(u,w) = 0, \\
& \sum_{w \in V} f(s,w) = d \;  \textnormal{ and } \;  \sum_{w \in V} f(w,t) = d.
\end{array}
\end{equation*}
The flow network $G = (V, E)$ is a directed graph $G = (V, E)$ with a source vertex $s \in V$ and a sink vertex $t \in V$, where each edge $(u, v) \in E$ has capacity $c(u,v) > 0$, flow $f(u,v) \geq 0$ and cost $a(u, v)$, with most minimum-cost flow algorithms supporting edges with negative costs. The cost of sending this flow along an edge $(u, v)$ is $f(u,v) \cdot a(u,v)$. The problem requires an amount of flow $d$ to be sent from source $s$ to sink $t$. The definition of the problem is to minimize the total cost of the flow over all edges. 
\end{definition}
\begin{proposition}\label{Prop:MCF}
The constraint matrix arising from a minimum-cost flow problem is TU.
\end{proposition}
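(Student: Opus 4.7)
The plan is to invoke Proposition~\ref{Prop:TUM} applied to the node-arc incidence structure that underlies every min-cost flow LP. First I would write the constraint system of the LP in Definition~\ref{Def:MCF} in block form, isolating the node-arc incidence matrix $N \in \{-1,0,1\}^{|V| \times |E|}$ of the directed graph $G=(V,E)$: rows are indexed by vertices, columns by arcs, and the column for arc $(u,v)$ carries $+1$ in row $u$, $-1$ in row $v$, and zeros elsewhere. The flow conservation equations at internal vertices, together with the source/sink equations with right-hand side $d$, are exactly $Nf = b$ for a suitable $b$.

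Second, I would apply Proposition~\ref{Prop:TUM} directly to $N$. Each column of $N$ has exactly two nonzeros, and these carry opposite signs. Taking the trivial partition $I_1 = V$, $I_2 = \emptyset$, the condition ``different signs $\Rightarrow$ same set'' holds for every column (both nonzeros lie in $I_1$), and the condition ``same signs $\Rightarrow$ different sets'' is vacuous. Proposition~\ref{Prop:TUM} then yields that $N$ is TU.

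Third, I would fold in the remaining pieces of the LP---the antisymmetry $f(u,v) = -f(v,u)$, the capacity bounds $f(u,v) \le c(u,v)$, and nonnegativity---without destroying the TU property. Antisymmetry is removed by retaining only one arc out of each oppositely-oriented pair, which leaves $N$ untouched. For the capacity bounds, I would either (a) introduce a slack variable and a dummy ``overflow'' vertex for each edge, so that the capacity rows fit seamlessly into a larger node-arc incidence matrix to which Proposition~\ref{Prop:TUM} again applies verbatim; or (b) argue that appending rows with a single $\pm 1$ entry to a TU matrix preserves TU by cofactor expansion, since any square submatrix meeting such a row either has a zero in the relevant column (determinant $0$) or reduces, up to sign, to the determinant of a strictly smaller submatrix handled by induction.

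The main obstacle is precisely the capacity constraints: once they are appended, the naive constraint matrix has columns with three nonzeros and is outside the literal hypothesis of Proposition~\ref{Prop:TUM}. I expect the slack / dummy-vertex reformulation to give the cleanest proof because it keeps the entire argument inside the combinatorial framework offered by the paper; the cofactor route is equally correct but relies on an auxiliary preservation lemma that the excerpt has not explicitly stated.
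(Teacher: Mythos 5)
Your argument is essentially the paper's: both isolate the node-arc incidence matrix $N$ of the flow-conservation system $Nf=b$ and apply Proposition~\ref{Prop:TUM} with the trivial partition, using the fact that every column has exactly two nonzero entries of opposite sign. The capacity constraints that occupy your third step never enter the paper's proof, because there the LP is written with capacities as box constraints $l \leq x \leq u$ on the variables rather than as rows of the constraint matrix $A$, so your slack-variable and row-appending workarounds, while correct, are unnecessary under that standard formulation.
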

\begin{proof}
The standard LP representation of the minimum-cost flow problem is 
\begin{equation*}
\min_{x \in \br^{|E|}} \ c^\top x, \quad \st \ Ax = b, \ l \leq x \leq u. 
\end{equation*}
where $x \in \br^{|E|}$ with $x_j$ being the flow through arc $j$, $b \in \br^{|V|}$ with $b_i$ being external supply at node $i$ and $\one^\top b = 0$, $c_j$ is unit cost of flow through arc $j$, $l_j$ and $u_j$ are lower and upper bounds on flow through arc $j$ and $A \in \br^{|V| \times |E|}$ is the arc-node incidence matrix with entries 
\begin{equation*}
A_{ij} \ = \ \left\{\begin{array}{rl} -1 & \text{if arc $j$ starts at node $i$} \\ 1 & \text{if arc $j$ ends at node $i$} \\ 0 & \text{otherwise} \end{array}\right..
\end{equation*}
Since each arc has two endpoints, the constraint matrix $A$ is a $\{-1, 0, 1\}$-valued matrix in which each column contains two nonzero entries $1$ and $-1$. Using Proposition~\ref{Prop:TUM}, we obtain that $A$ is TU and the rows of $A$ are categorized into a single set. 
\end{proof}
We proceed to the definition of $m$-dimensional matching which generalizes 2-dimensional matching. We present it in graph-theoretic sense as follows. 
\begin{definition}
Let $S_1, S_2, \ldots, S_m$ be finite and disjoint sets, and let $T$ be a subset of $S_1 \times \cdots \times S_m$. That is, $T$ consists of vectors $(z_1, \ldots, z_m)$ such that $z_i \in S_i$ for all $i \in [m]$. Now $M \subseteq T$ is a $m$-dimensional matching if the following holds: for any two distinct vectors $(z_1, \ldots, z_m) \in M$ and $(z'_1, \ldots, z'_m) \in M$, we have $z_i \neq z'_i$ for all $i \in [m]$.  
\end{definition}
In computational complexity theory, $m$-dimensional matching refers to the following decision problem: given a set $T$ and an integer $k$, decide whether there exists a $m$-dimensional matching $M \subseteq T$ with $|M| \geq k$. This problem is NP-complete even when $m = 3$ and $k = |S_1| = |S_2| = |S_3|$~\citep{Karp-1972-Reducibility, Garey-2002-Computers}. A $m$-dimensional matching is also an exact cover since the set $M$ covers each element of $S_1, S_2, \ldots, S_m$ exactly once. 

\subsection{Main result}\label{sec:problem_setup}
The problem of computing the MOT distance between $m \geq 2$ discrete probability distributions with at most $n$ supports is equivalent to solving the following LP (cf. Eq.~\eqref{prob:MOT}):
\begin{equation*}
\min_{X \in \br^{n \times \cdots \times n}} \langle C, X\rangle, \quad \st \ r_k(X) = r_k \textnormal{ for any } k \in [m], \ X \geq 0. 
\end{equation*}
In other words, the MOT problem is a LP with $mn$ equality constraints and $n^m$ variables. When $m = 2$, the MOT problem is the classical OT problem~\citep{Villani-2003-Topic} which is known to be a minimum-cost flow problem. Such problem structure is computationally favorable and permits the development of provably efficient algorithms, including the network simplex algorithms~\citep{Orlin-1997-Polynomial, Tarjan-1997-Dynamic} and specialized interior-point algorithms~\citep{Lee-2014-Path}. However, it remains unknown if the MOT problem in the above LP form admits such a structural decomposition when $m \geq 3$. 

We present a negative answer to this question for $m \geq 3$. Before proceeding to the main theorem, we provide a simple yet intuitive counterexample. 
\begin{example}\label{example:MOT}
We consider arguably the simplest MOT problem, with $m=3$ distributions supported on $n=2$ elements each. We consider the $n^m=8$ entries of a multimarginal tensor transportation plan, and number them slice by slice. A naive enumeration of all the marginal constraints results in $nm$ linear equalities, but some of them are redundant since they involve several times the constraints that the sum of the elements of that tensor sum to $1$. The number of required constraints is $m(n-1)+1$, namely only $4$ mass conservation constraints are effective in this case. We therefore obtain the following matrix,
\begin{align*}
A  = \begin{pmatrix}
1 & 1 & 0 & 0 & 1 & 1 & 0 & 0 \\ 0 & 0 & 1 & 1 & 0 & 0 & 1 & 1 \\ 1 & 0 & 1 & 0 & 1 & 0 & 1 & 0 \\ 1 & 1 & 1 & 1 & 0 & 0 & 0 & 0
\end{pmatrix}.
\end{align*}
We form the sub-matrix by only considering the first, fourth, sixth, and seventh columns of $A$, and can then check that the resulting matrix has determinant equal to 2, namely,
\begin{align*}
\det(A_{1,4,6,7}) = \det\left(\begin{pmatrix}
1 & 0 & 1 & 0 \\ 
0 & 1 & 0 & 1 \\ 
1 & 0 & 0 & 1 \\ 
1 & 1 & 0 & 0 
\end{pmatrix}\right) = 2.
\end{align*}
Therefore, the marginal constraint matrix is not totally unimodular, illustrating that the MOT with $(m, n) = (3, 2)$ is not a minimum-cost flow problem. More generally, one can numerically check that the constraint matrix corresponding to $m$ marginals with $n$ points each has size $(mn-m+1)\times n^m$, and that it is not totally unimodular by selecting a subset of $(mn-m+1)$ columns (out of $n^m$) that form a determinant that is neither $-1,0,1$. The constraint matrix itself can be obtained recursively, by defining first $L_{n, 1} = I_n$, to apply next that for $t\geq 2$,
\begin{equation*}
L_{n, t} = \begin{bmatrix} 
\one_n \otimes L_{n,t-1} & I_n \otimes \one_{n^{t-1}} 
\end{bmatrix} \ \in \ \Rspace^{n^t\times nt},
\end{equation*}
where $\otimes$ is Kronecker's product. In that case, $L_{n, m}$ corresponds to the matrix constraint of the dual multimarginal OT problem, which involves constraints of the type $(\alpha_1)_{i_1} + (\alpha_2)_{i_1} + \dots + (\alpha_m)_{i_m} \leq C_{i_1 i_2 \ldots i_m}$ as mentioned in the next section. The constraint matrix in the primal, specified over the entries of transportation tensors, is $A_{n, m} = \tilde{L}_{n,m}^{\top}$, where $\tilde{L}_{n, m}$ is equal to $L_{n, m}$ stripped of $m-1$ columns (one for each marginal but for the first), indexed for instance at $2n, 3n, \ldots, nm$. 
\end{example}
Example~\ref{example:MOT} provides some intuitions why the MOT problem is not a minimum-cost flow problem when $m \geq 3$. However, it is not easy to extend this approach to the general setting. Indeed, the constraint matrix in Eq.~\eqref{prob:MOT} becomes complicated when $m$ and $n$ are considerably large. Thus, it is challenging to compute the determinants of even a small fraction of sub-matrices, which is necessary to determine whether the constraint matrix is totally unimodular or not. While the direct calculation is intractable, some combinatorial optimization toolbox, e.g., Ghouila-Houri’s theorem~\citep{Ghouila-1962-Caracterisation}, might be helpful. However, we do not have concrete idea now and leave this topic to the future work. 

Despite the above discussion, we can prove that the MOT problem in Eq.~\eqref{prob:MOT} is not a minimum-cost flow problem when $m \geq 3$ by using a simple reduction with $m$-dimensional matching problem. Roughly speaking, if the MOT problem is a minimum-cost flow problem when $m \geq 3$, its integer programming counterpart with specific choice of the cost tensor $C$ and marginals $\{r_k\}$ must not be NP-hard. However, due to such specific choice, we can prove that this integer programming counterpart is equivalent to $m$-dimensional matching problem which is known as NP-complete when $m \geq 3$. This leads to the contradiction.

We present our theorem with the proof details as follows. 
\begin{theorem}\label{Theorem:MOT-structure}
The MOT problem in the form of Eq.~\eqref{prob:MOT} is not a minimum-cost flow problem when $m \geq 3$. 
\end{theorem}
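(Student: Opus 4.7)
The plan is to argue by contradiction, exhibiting a polynomial-time reduction from $m$-dimensional matching to the MOT problem and invoking the NP-completeness of the former. Assume for contradiction that for some $m \geq 3$ the MOT LP in Eq.~\eqref{prob:MOT} is a minimum-cost flow problem. By Proposition~\ref{Prop:MCF} its constraint matrix is then totally unimodular, so whenever the right-hand side (the marginals $r_k$, possibly after a common integer scaling) is integer-valued, every basic optimal solution of the LP is integer-valued. Moreover, minimum-cost flow admits a strongly polynomial-time algorithm, so this particular class of MOT instances would be solvable, with an integer optimum, in time polynomial in $n$ and $m$.

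Next I would reduce $m$-dimensional matching to MOT as follows. Given finite sets $S_1, \ldots, S_m$ of common cardinality $n$ and $T \subseteq S_1 \times \cdots \times S_m$, define the cost tensor by $C_{i_1, \ldots, i_m} = 0$ if $(i_1, \ldots, i_m) \in T$ and $C_{i_1, \ldots, i_m} = 1$ otherwise, and take $r_k = (1/n)\one_n$ for every $k \in [m]$. A perfect $m$-dimensional matching $M \subseteq T$ of size $n$ then yields a feasible plan $X$ (namely $1/n$ on each entry indexed by $M$) of cost $0$; conversely, if the MOT optimum equals $0$, then under the contradiction hypothesis there exists an integer-valued optimum, which after scaling by $n$ has all-ones marginals, forcing it to lie in $\{0,1\}^{n^m}$ and to be supported inside $T$, hence to encode a perfect $m$-dimensional matching. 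Thus a polynomial-time MOT solver would decide the existence of a perfect $m$-dimensional matching in polynomial time, contradicting the NP-completeness of this problem for every $m \geq 3$ \citep{Karp-1972-Reducibility, Garey-2002-Computers} under the standard hypothesis $P \neq NP$. This establishes the theorem.

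The main obstacle, and the step I expect to require the most care, is making the equivalence between MOT optima and $m$-dimensional matchings watertight. The forward direction is immediate from the construction, but the reverse direction relies essentially on the TU hypothesis: without integer-vertex optimality one could a priori exhibit a fractional plan of cost $0$ even when no genuine matching exists in $T$, and the reduction would fail. The subtle point is that the rescaled plan $nX$ must not merely be a nonnegative integer tensor but actually a $\{0,1\}$ tensor, which is handled by observing that after scaling the marginals become the all-ones vector $\one_n$, forcing each slice sum to be $1$ and therefore every entry to be $0$ or $1$.
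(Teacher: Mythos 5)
Your proposal is correct and follows essentially the same route as the paper: assume the MOT LP of Eq.~\eqref{prob:MOT} is a minimum-cost flow problem, deduce total unimodularity of the constraint matrix and hence integral optima for integer (scaled) marginals, and derive a contradiction with the NP-completeness of $m$-dimensional matching. If anything, your execution of the reduction is the more careful one: you assign cost $0$ on $T$ and $1$ off $T$ and use the cost-$0$ criterion together with the integrality argument to pin down a perfect matching, whereas the paper assigns cost $1$ on $T$ and takes $T$ to be the entire product set, so that its instance degenerates to a feasibility question.
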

\begin{proof}
We prove the result by contradiction. Indeed, we assume that the MOT problem in Eq.~\eqref{prob:MOT} is a minimum-cost flow problem when $m \geq 3$. Let $r_k = \frac{\one_n}{n}$ for all $k \in [m]$ in Eq.~\eqref{prob:MOT}, the resulting LP is equivalent to the following problem 
\begin{equation}\label{prob:LP_MOT}
\min\limits_X \ \langle C, X\rangle, \quad \st \ r_k(X) = \one_n\textnormal{ for any } k \in [m], \ \ X \geq 0. 
\end{equation}
We see from Eq.~\eqref{prob:LP_MOT} that this is a minimum-cost flow problem where the constraint matrix and the right-hand side vector are both integer-valued. Then we consider the integer programming counterpart of Eq.~\eqref{prob:LP_MOT} which is defined by
\begin{eqnarray}\label{prob:IP_MOT}
& \min\limits_X & \langle C, X\rangle, \\ 
& \st & r_k(X) = \one_n, \textnormal{ for any } k \in [m], \nonumber \\ 
& & X_{i_1 i_2 \ldots i_m} \in \{0, 1\} \textnormal{ for any } (i_1, \ldots, i_m) \in [n] \times \ldots \times [n]. \nonumber
\end{eqnarray}
It is well known in the combinatorial optimization literature~\citep{Schrijver-2003-Combinatorial} that Eq.~\eqref{prob:IP_MOT} is not NP-hard when $m \geq 3$.  

On the other hand, we claim that Eq.~\eqref{prob:IP_MOT} is NP-complete when $m \geq 3$ since it reduces to an $m$-dimensional matching problem. Indeed, we let $S_i = [n]$ for all $i \in [m]$ and $T = [n] \times [n] \times \ldots \times [n]$ as well as the cost tensor $C$ is defined by 
\begin{equation*}
C_{i_1 i_2 \ldots i_m} = \left\{\begin{array}{ll} 1 & (i_1, i_2, \ldots, i_m) \in T \\ 0 & (i_1, i_2, \ldots, i_m) \notin T \end{array}\right..
\end{equation*}
Then the objective function of any feasible solution is $n$ so any feasible solution is an optimal solution. Furthermore, finding any optimal solution $X$ is equivalent to finding an $m$-dimensional matching $M$. Indeed, we can define an one-to-one mapping as follows,   
\begin{equation*}
X_{i_1 i_2 \ldots i_m} = \left\{\begin{array}{ll} 1 & (i_1, i_2, \ldots, i_m) \in M \\ 0 & (i_1, i_2, \ldots, i_m) \notin M \end{array}\right..
\end{equation*}
It is clear that $X$ is a feasible solution of Eq.~\eqref{prob:IP_MOT} if and only if $M$ is an $m$-dimensional matching $M$. Thus, Eq.~\eqref{prob:IP_MOT} is NP-complete when $m \geq 3$, which leads to a clear contradiction. This completes the proof of Theorem~\ref{Theorem:MOT-structure}. 
\end{proof}

\subsection{Discussion} 
We make a few comments on our main result for the MOT problem in Eq.~\eqref{prob:MOT}, which help strengthen the understanding of the MOT problem. 

First, Theorem~\ref{Theorem:MOT-structure} only holds true for the general MOT problem in Eq.~\eqref{prob:MOT}. To be more specific, we show that there exists the cost tensor $C$ and marginals $\{r_k\}$ such that the MOT problem in Eq.~\eqref{prob:MOT} is not a minimum-cost flow problem. Nonetheless, the MOT problem is commonly referred to as the LP in Eq.~\eqref{prob:MOT} and thus it is important to understand the structure of this LP when $m \geq 3$.  Further, we remark that there is no other reformulation of the general MOT problems which can be solved efficiently; in particular, ~\citet{Altschuler-2021-Hardness} showed that the MOT problems with repulsive costs are computationally intractable: several such problems of interest are NP-hard to solve -- even approximately. 

Second, Theorem~\ref{Theorem:MOT-structure} does not rule out the possibility that a few instances of the MOT problem are minimum-cost flow problems when $m \geq 3$. However, such examples need to admit special structure and are thus rare in real applications given that Example~\ref{example:MOT} is one of the simplest MOT problems. Many common MOT problems, e.g., Wasserstein barycenters, are not minimum-cost flow problems. Indeed,~\citet{Lin-2020-Revisiting} proved this result for even the simplest Wasserstein barycenter problems in the standard LP form --- the fixed-support Wasserstein barycenters when $m \geq 3$ and $n \geq 3$. Despite such negative result, the discrete Wasserstein barycenter problems have their own structure~\citep{Anderes-2016-Discrete} and can be efficiently solved in practice~\citep{Cuturi-2014-Fast, Benamou-2015-Iterative, Carlier-2015-Numerical, Staib-2017-Parallel, Claici-2018-Stochastic, Dvurechensky-2018-Decentralize, Kroshnin-2019-Complexity, Ge-2019-Interior}. In conclusion, the MOT problem with $m \geq 3$ is different from the OT problem and we believe that the minimum-cost flow is not sufficient for characterizing the structure of the MOT problem. 

Finally, Theorem~\ref{Theorem:MOT-structure} affects the complexity bound of various algorithms for solving the MOT problem when $m \geq 3$. If we could write the MOT problem as a minimum-cost flow problem on the directed graph with $n^m$ edges and $mn$ vertices, the network simplex method achieves the complexity bound of $\bigOtil(m^2n^{m+2})$~\citep{Orlin-1997-Polynomial} or better bound of $\bigOtil(mn^{m+1})$~\citep{Tarjan-1997-Dynamic}, while the specialized interior-point algorithm achieves the complexity bound of $\bigOtil(\sqrt{mn}n^m)$~\citep{Lee-2014-Path}. However, due to Theorem~\ref{Theorem:MOT-structure}, these bounds of network simplex method and specialized interior-point algorithms are not valid and only the standard interior-point algorithms can achieve much worse complexity bound of $\bigOtil(n^{3m})$~\citep{Wright-1997-Primal}. We are also aware of a \textit{stochastic central path method}~\citep{Cohen-2019-Solving} which achieves better complexity bound of $\bigOtil(n^{\omega m})$ with the coefficient of matrix multiplication $\omega \approx 2.38$. However, this algorithm is seemingly not implementable in practice and not comparable with the deterministic algorithms we propose in this paper. 

\section{Multimarginal Sinkhorn Algorithm}\label{sec:sinkhorn}
In this section, we propose and analyze a multimarginal Sinkhorn algorithm for solving the entropic regularized multimarginal optimal transport (MOT) problem. We also generalizes the rounding scheme~\citep{Altschuler-2017-Near} to the MOT setting. Together with a new rounding scheme, our algorithm achieves a complexity bound of $\bigOtil(m^3n^m\varepsilon^{-2})$ when applied to solve the MOT problem. The proof techniques are heavily based on the smooth dual objective function in Eq.~\eqref{prob:MOT_regularized_dual} and thus not a straightforward generalization of the analysis in the OT setting~\citep{Altschuler-2017-Near, Dvurechensky-2018-Computational, Lin-2019-Efficient} where they use different form of dual objective function; see Remark~\ref{remark:MOT_regularized_dual} for the details. 
\subsection{Algorithmic procedure}
We present the pseudocode of the multimarginal Sinkhorn algorithm in Algorithm~\ref{Algorithm:Sinkhorn}. This algorithm is a new generalization of the classical Sinkhorn algorithm~\citep{Cuturi-2013-Sinkhorn} and different from the existing multimarginal Sinkhorn algorithms~\citep{Benamou-2015-Iterative, Benamou-2019-Generalized, Peyre-2019-Computational}. Indeed, the main difference lies in the greedy choice of the next marginal (cf. \textbf{Step 2}). This simple yet crucial modification makes our complexity bound analysis work while only the asymptotic convergence properties are proved for the existing multimarginal Sinkhorn algorithms. 
\begin{algorithm}[!t]\small
\caption{\textsc{MultiSinkhorn}$(C, \eta, \{r_k\}_{k \in [m]}, \varepsilon')$} \label{Algorithm:Sinkhorn}
\begin{algorithmic}
\STATE \textbf{Initialization:} $t = 0$ and $\beta^0 \in \Rspace^{mn}$ with $\beta^0 = \textbf{0}_{mn}$. 
\WHILE{$E_t > \varepsilon'$} 
\STATE \textbf{Step 1.} Choose the greedy coordinate $K = \argmax_{1 \leq k \leq m} \rho(r_k, r_k(B(\beta^t)))$. 
\STATE \textbf{Step 2.} Compute $\beta^{t+1} \in \br^{mn}$ by
\begin{equation*}
\beta_k^{t+1} = \left\{\begin{array}{ll}
\beta_k^t + \log(r_k) - \log(r_k(B(\beta^t))), & k = K \\ 
\beta_k^t, & \text{otherwise}
\end{array}\right..
\end{equation*}
\STATE \textbf{Step 3.} Increment by $t = t + 1$.
\ENDWHILE
\STATE \textbf{Output:} $B(\beta^t)$.  
\end{algorithmic}
\end{algorithm}
\paragraph{Comments on algorithmic scheme.} Algorithm~\ref{Algorithm:Sinkhorn} can be interpreted as a greedy block coordinate descent algorithm~\citep{Dhillon-2011-Nearest, Nutini-2015-Coordinate} for solving the dual entropic regularized MOT problem in Eq.~\eqref{prob:MOT_regularized_dual}; see~\citet[Lemma~1]{Tupitsa-2020-Multimarginal} for the justification. However, the corresponding known complexity bounds for greedy block coordinate descent algorithms can not be applied to analyze Algorithm~\ref{Algorithm:Sinkhorn}. Indeed, the per-iteration progress is quantified using the $\ell_2$-norm in the existing algorithmic scheme and convergence analysis. This will lead to the worse complexity bound than ours since it does not respect the structure of MOT problem. In contrast, Algorithm~\ref{Algorithm:Sinkhorn} employs KL divergence to quantify the per-iteration progress and link it to the $\ell_1$-norm via appeal to the Pinsker inequality~\citep{Cover-2012-Elements}. 

More specifically, an exact coordinate update for the $K$-th variable is performed at each iteration while other variables are fixed\footnote{~\citet{Meshi-2012-Convergence} showed that the greedy rule was implemented efficiently by using a max-heap structure for many structured problems.}. Here we choose $K$ by using the greedy rule as follows, 
\begin{equation*}
K = \argmax_{1 \leq k \leq m} \ \rho(r_k, r_k(B(\beta^t))),  
\end{equation*}
where $\rho: \Rspace_+^n \times \Rspace_+^n \rightarrow \Rspace_+$ is defined as
\begin{equation*}
\rho(a, b) \mydefn \one_n^\top(b - a) + \sum_{i=1}^n a_i\log\left(\frac{a_i}{b_i}\right).  
\end{equation*}
Following up the optimal transport literature~\citep{Cuturi-2013-Sinkhorn, Altschuler-2017-Near}, we set the stopping criterion as $E_t \leq \varepsilon'$ for some tolerance $\varepsilon' > 0$, where $E_t$ is defined by
\begin{equation}\label{Def:residue-sinkhorn}
E_t \mydefn \sum_{k=1}^m \|r_k(B(\beta^t)) - r_k\|_1.
\end{equation}
\paragraph{Comments on arithmetic operations per iteration.} The most expensive step is to determine which coordinate is the greedy one. While the naive way requires $\bigO(mn^m)$ arithmetic operations to compute all marginals $r_k(B(\beta^t))$, we can adopt some implementation tricks based on the observation that \textit{one of $\rho(r_k, r_k(B(\beta^t))) = 0$ after the first step.}

Without loss of generality, we assume that $m \geq 3$ and $r_1(B(\beta^t)) = r_1$. The key step is to construct a small tensor $A$ which has $n^{m-1}$ entries: $A_{i_1, \ldots, i_{m-1}} = \sum_{j=1}^n B_{j, i_1, ..., i_{m-1}}$ for any $(i_1, \ldots, i_{m-1}) \in [n] \times \cdot \times [n]$. This requires $\bigO(n^m)$ arithmetic operations. It is clear that $r_2(B(\beta^t)), \ldots, r_m(B(\beta^t))$ exactly corresponds to the marginals of $A$ and the computation only needs $\bigO(mn^{m-1})$ arithmetic operations. Putting these pieces together yields that the arithmetic operations per iteration is $\bigO(n^m)$ for the case of $m=\bigO(n)$.  
\begin{algorithm}[!t]\small
\caption{\textsc{Round}$(X, \{r_k\}_{k \in [m]})$} \label{Algorithm:round}
\begin{algorithmic}
\STATE \textbf{Initialization:} $X^{(0)} = X$.
\FOR{$k = 1$ to $m$}
\STATE Compute $z_k = \min\{\one_n, r_k/r_k(X^{(k - 1)})\} \in \Rspace^n$. 
\FOR{$j = 1$ to $n$}
\STATE $X_{i_1 i_2 \ldots i_m}^{(k)} = z_{kj} X_{i_1 i_2 \ldots i_m}^{(k - 1)}$ in which $i_k = j$ is fixed. 
\ENDFOR
\ENDFOR
\STATE Compute $\text{err}_k \in \Rspace^n$ such that $\text{err}_k = r_k - r_k(X^{(m)})$ for all $k \in [m]$. 
\STATE Compute $Y \in \Rspace^{n \times n \times \ldots \times n}$ by
\begin{equation*}
Y_{i_1 i_2 \ldots i_m} = X_{i_1 i_2 \ldots i_m}^{(m)} + \frac{\prod_{k=1}^m \text{err}_{k i_k}}{\|\text{err}_1\|_1^{m-1}} \textnormal{ for any } (i_1, \ldots, i_m) \in [n] \times \ldots \times [n]. 
\end{equation*} 
\STATE \textbf{Output:} $Y$.  
\end{algorithmic}
\end{algorithm} 
\paragraph{Rounding scheme.} Algorithm~\ref{Algorithm:Sinkhorn} is developed for solving the entropic-regularized MOT problem and the output is not necessarily a feasible solution of \textit{unregularized} MOT problem. To address this issue, we develop a new rounding scheme by extending~\citet[Algorithm~2]{Altschuler-2017-Near} to the MOT setting; see Algorithm~\ref{Algorithm:round}. We can see that the difference between the input and output of Algorithm~\ref{Algorithm:round} is simply a rank-one tensor. Using the approach presented by~\citep[Proposition~4]{Lacombe-2018-Large}, we can compute $\langle C, Y\rangle$ efficiently using $\langle C, X\rangle$. Finally, the total arithmetic operations required by Algorithm~\ref{Algorithm:round} is $\bigO(mn^{m})$.

\paragraph{Algorithm for the MOT problem.} We present the pseudocode of our main algorithm in Algorithm~\ref{Algorithm:ApproxMOT_Sinkhorn}, where Algorithm~\ref{Algorithm:Sinkhorn} and~\ref{Algorithm:round} are the subroutines. We notice that the regularization parameter $\eta$ is scaled as a function of the desired accuracy $\varepsilon > 0$, and remark that \textbf{Step 1} is necessary since the multimarginal Sinkhorn algorithm is not well behaved if the marginal distributions do not have dense support.
\subsection{Technical lemmas}
In this section, we provide two technical lemmas which are important in the analysis of Algorithm~\ref{Algorithm:Sinkhorn}. The first lemma shows that the dual objective gap at iteration $t$ is bounded by the product between the residue term $E_t$ and a constant depending on $\eta$, $C$ and $\{r_i\}_{i \in [m]}$. 
\begin{lemma}\label{Lemma:sinkhorn-objgap}
Let $\{\beta^t\}_{t \geq 0}$ be the iterates generated by Algorithm~\ref{Algorithm:Sinkhorn} and $\beta^\star$ be an optimal solution which is specified by Lemma~\ref{Lemma:dual-bound-infinity}. Then, the following inequality holds true:
\begin{equation*}
\varphi(\beta^t) - \varphi(\beta^\star) \leq \overline{R} E_t, \quad \textnormal{for all } t \geq 1. 
\end{equation*}
where $E_t$ is defined in Eq.~\eqref{Def:residue-sinkhorn} and $\overline{R} > 0$ is defined as
\begin{equation*}
\overline{R} \mydefn \frac{\|C\|_\infty}{\eta} - \log\left(\min_{1 \leq i \leq m, 1 \leq j \leq n} r_{ij}\right).
\end{equation*}
\end{lemma}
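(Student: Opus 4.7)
The plan is to combine convexity of $\varphi$ with a shift-invariance trick, where the main work is to show that each block of both $\beta^t$ and $\beta^\star$ has coordinate range bounded by $\overline{R}$. By convexity, $\varphi(\beta^t) - \varphi(\beta^\star) \leq \sum_{k=1}^m \langle \nabla_{\beta_k}\varphi(\beta^t), \beta_k^t - \beta_k^\star\rangle$. A direct differentiation of the log-sum-exp in Eq.~\eqref{prob:MOT_regularized_dual} gives $\nabla_{\beta_k}\varphi(\beta) = r_k(B(\beta))/\|B(\beta)\|_1 - r_k$. I would then use a structural feature of the algorithm: the Sinkhorn update in Step~2 of Algorithm~\ref{Algorithm:Sinkhorn} forces $r_K(B(\beta^t)) = r_K$ for the coordinate $K$ just updated, so $\|B(\beta^t)\|_1 = \sum_j r_{Kj} = 1$ for every $t \geq 1$. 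Hence $\nabla_{\beta_k}\varphi(\beta^t) = r_k(B(\beta^t)) - r_k$, and each block of the gradient is orthogonal to $\one_n$ since both $r_k(B(\beta^t))$ and $r_k$ lie in $\Delta^n$.

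Next, I would exploit this orthogonality: for any scalars $c_k$, $\langle \nabla_{\beta_k}\varphi(\beta^t), \beta_k^t - \beta_k^\star \rangle = \langle \nabla_{\beta_k}\varphi(\beta^t), (\beta_k^t - \beta_k^\star) - c_k\one_n \rangle$. Taking each $c_k$ to be the midpoint between $\max_j(\beta_{kj}^t - \beta_{kj}^\star)$ and $\min_j(\beta_{kj}^t - \beta_{kj}^\star)$ and applying H\"older's inequality in the $(\ell_1,\ell_\infty)$ pairing would yield $\langle \nabla_{\beta_k}\varphi(\beta^t), \beta_k^t - \beta_k^\star \rangle \leq \tfrac{1}{2}\bigl[\textnormal{range}(\beta_k^t) + \textnormal{range}(\beta_k^\star)\bigr] \cdot \|r_k(B(\beta^t)) - r_k\|_1$, where $\textnormal{range}(v) = \max_j v_j - \min_j v_j$.

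The main technical step is therefore to control both ranges by $\overline{R}$. For $\beta_k^\star$, this is exactly Eq.~\eqref{claim-dual-bound-second} in the proof of Lemma~\ref{Lemma:dual-bound-infinity}. For $\beta_k^t$, I would argue by cases: if $k$ has never been chosen as the greedy coordinate then $\beta_k^t = 0$ and the range vanishes; otherwise, letting $s \leq t$ be the last iteration at which $k$ was updated, the closed form of the Sinkhorn step gives $\beta_{kj}^s = \log r_{kj} - \log\bigl(\sum_{\{i_\ell\}_{\ell \neq k}} e^{\sum_{\ell \neq k}\beta_{\ell i_\ell}^{s-1} - C_{i_1\cdots j\cdots i_m}/\eta}\bigr)$. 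The first term contributes a range of at most $-\log(\min_{i,j} r_{ij})$, and the log-sum-exp depends on $j$ only through the nonnegative cost tensor $C$, whose effect on each summand is a multiplicative factor in $[e^{-\|C\|_\infty/\eta}, 1]$, contributing an additional range of at most $\|C\|_\infty/\eta$. Adding these gives $\textnormal{range}(\beta_k^s) \leq \overline{R}$, and since blocks not updated between $s$ and $t$ remain frozen, the same bound holds at iteration $t$. Summing the per-block inequalities over $k \in [m]$ collapses the right-hand side into $\overline{R}\sum_k \|r_k(B(\beta^t)) - r_k\|_1 = \overline{R}E_t$.

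The hardest part of the plan is the range bound for $\beta_k^t$ on blocks that are not currently being updated; this is precisely what replaces the coarser bound $R$ from Lemma~\ref{Lemma:dual-bound-infinity} by the sharper $\overline{R}$ here. Once these range bounds are in hand, the shift-invariance identity and H\"older step are essentially bookkeeping, and the per-block bounds sum cleanly because $E_t$ already sums the $\ell_1$-residuals over $k$.
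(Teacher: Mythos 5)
Your proposal is correct and follows essentially the same route as the paper's proof: convexity plus the gradient formula with $\|B(\beta^t)\|_1=1$, a midpoint-shift/H\"older argument exploiting $\one_n^\top(r_k(B(\beta^t))-r_k)=0$, and the range bound $\max_j\beta^t_{kj}-\min_j\beta^t_{kj}\le\overline{R}$ obtained from the closed form of the Sinkhorn update at the last iteration where block $k$ was touched (the paper phrases this as an induction, but the content is identical). The only cosmetic differences are that you shift the difference $\beta^t_k-\beta^\star_k$ by a single midpoint while the paper shifts each vector separately, which yields the same bound.
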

\begin{algorithm}[!t]\small
\caption{Approximating MOT by Algorithm~\ref{Algorithm:Sinkhorn} and~\ref{Algorithm:round}} \label{Algorithm:ApproxMOT_Sinkhorn}
\begin{algorithmic}
\STATE \textbf{Input:} $\eta = \frac{\varepsilon}{2 m \log(n)}$ and $\varepsilon'=\frac{\varepsilon}{8 \left\|C\right\|_\infty}$. 
\STATE \textbf{Step 1:} Let $\tilde{r}_k \in \Delta_n$ for $\forall k \in [m]$ be defined as
\begin{equation*}
\left(\tilde{r}_1, \tilde{r}_2, \ldots, \tilde{r}_m\right) = \left(1 - \frac{\varepsilon'}{4m}\right)(r_1, r_2, \ldots, r_m) + \frac{\varepsilon'}{4mn}(\one_n, \one_n, \ldots, \one_n).   
\end{equation*}
\STATE \textbf{Step 2:} Compute $\widetilde{X} = \textsc{MultiSinkhorn}(C, \eta, \{\tilde{r}_k\}_{k \in [m]}, \varepsilon'/2)$. 
\STATE \textbf{Step 3:} Round $\widehat{X} = \textsc{Round}(\widetilde{X}, \{\tilde{r}_k\}_{k \in [m]})$. 
\STATE \textbf{Output:} $\widehat{X}$.  
\end{algorithmic}
\end{algorithm}
\begin{proof} 
We first prove that the following inequality holds true, 
\begin{equation}\label{claim-sinkhorn-objgap-main}
\begin{array}{rcl}
\max\limits_{1 \leq i \leq m}\left\{\max\limits_{1 \leq j \leq n} \beta_{ij}^t - \min\limits_{1 \leq j \leq n} \beta_{ij}^t\right\} & \leq & \overline{R}, \\
\max\limits_{1 \leq i \leq m}\left\{\max\limits_{1 \leq j \leq n} \beta_{ij}^\star - \min\limits_{1 \leq j \leq n} \beta_{ij}^\star\right\} & \leq & \overline{R}.
\end{array}
\end{equation}
Note that the second inequality is a straightforward deduction of Eq.~\eqref{claim-dual-bound-second} in the proof of Lemma~\ref{Lemma:dual-bound-infinity}. Thus, it suffices to prove the first inequality. 

We establish this by an induction argument. Indeed, this inequality holds trivially when $t=0$. Assume that this inequality holds true for $t \leq T$. By the update for $\beta$ in Algorithm~\ref{Algorithm:Sinkhorn}, $\beta_k^{T+1} = \beta_k^T$ for all $k \neq K$, where $K = \argmax_{1 \leq k \leq m} \rho(r_k, r_k(B(\beta^t)))$. This implies that 
\begin{equation*}
\max_{1 \leq j \leq n} \beta_{kj}^{T+1} - \min_{1 \leq j \leq n} \beta_{kj}^{T+1} \leq \overline{R} \textnormal{ for all } k \neq K. 
\end{equation*}
Now it remains to show $\max_{1 \leq j \leq n} \beta_{Kj}^{T+1} - \min_{1 \leq j \leq n} \beta_{Kj}^{T+1} \leq \overline{R}$. For any $l \in [n]$, we derive from the update formula of $\beta_{Kl}^{T+1}$ that 
\begin{equation*}
e^{\beta_{Kl}^{T+1}}\sum_{1 \leq i_k \leq n, \forall k \neq K} e^{\sum_{k \neq K} \beta_{ki_k}^T - \eta^{-1}C_{i_1 \ldots l \ldots i_m}} = r_{Kl} \geq \min_{1 \leq i \leq m, 1 \leq j \leq n} r_{ij}.   
\end{equation*}
Since $C$ is a nonnegative cost tensor, we derive from the above inequality that 
\begin{equation}\label{inequality-sinkhorn-objgap-fourth}
\beta_{Kl}^{T+1} \ \geq \ \log \left(\min_{1 \leq i \leq m, 1 \leq j \leq n} r_{ij}\right) - \log \left( \sum \limits_{1 \leq i_k \leq n, \forall k \neq K} e^{\sum_{k \neq K} \beta_{ki_k}^T} \right).
\end{equation}
Since $r_{Kl} \leq 1$ and $C_{i_1 \ldots i_m} \leq \|C\|_\infty$, we have
\begin{equation}\label{inequality-sinkhorn-objgap-fifth}
\beta_{Kl}^{T+1} \leq \frac{\|C\|_\infty}{\eta} - \log \left(\sum\limits_{1 \leq i_k \leq n, \forall k \neq K} e^{\sum_{k \neq K} \beta_{ki_k}^T} \right).
\end{equation}
Combining the bounds~\eqref{inequality-sinkhorn-objgap-fourth} and~\eqref{inequality-sinkhorn-objgap-fifth} implies the desired result. 

Then, we proceed to the proof of Lemma~\ref{Lemma:sinkhorn-objgap}. Since the function $\varphi$ is convex and $\beta^\star$ is an optimal solution, Eq.~\eqref{claim-sinkhorn-objgap-main} implies that 
\begin{equation*}
\varphi(\beta^t) - \varphi(\beta^\star) \leq (\beta^t - \beta^\star)^\top\nabla\varphi(\beta^t) = \sum_{k=1}^m (\beta_k^t-\beta_k^\star)^\top\left(\frac{r_k(B(\beta^t))}{\|B(\beta^t)\|_1}-r_k\right). 
\end{equation*}
Note that the initialization and the main update for the variable $\beta$ in Algorithm~\ref{Algorithm:Sinkhorn} imply that $\|B(\beta^t)\|_1=1$ for all $t \geq 1$. Thus, we have
\begin{equation}\label{inequality-sinkhorn-objgap-first}
\varphi(\beta^t) - \varphi(\beta^\star) \leq \sum_{k=1}^m (\beta_k^t-\beta_k^\star)^\top(r_k(B(\beta^t))-r_k). 
\end{equation}
Furthermore, we have $\one_n^\top r_k(B(\beta^t) = \one_n^\top r_k = 1$ for all $k \in [m]$. This implies
\begin{equation}\label{inequality-sinkhorn-objgap-second}
\one_n^\top(r_k(B(\beta^t) - r_k) = 0 \textnormal{ for all } k \in [m]. 
\end{equation}
For all $k \in [m]$, we define $2m$ shift terms as follows, 
\begin{equation*}
\Delta\beta_k^t = \frac{\max_{1 \leq j \leq n} \beta_{kj}^t + \min_{1 \leq j \leq n} \beta_{kj}^t}{2}, \qquad \Delta\beta_i^\star = \frac{\max_{1 \leq j \leq n} \beta_{kj}^\star + \min_{1 \leq j \leq n} \beta_{kj}^\star}{2}.    
\end{equation*}
Using these shift terms, we derive that 
\begin{eqnarray*}
\lefteqn{(\beta_k^t - \beta_k^\star)^\top(r_k(B(\beta^t)) - r_k)} \\
& \overset{~\eqref{inequality-sinkhorn-objgap-second}}{=} & (\beta_k^t - \Delta\beta_k^t\one_n)^\top(r_k(B(\beta^t)) - r_k) - (\beta_k^\star - \Delta\beta_k^\star\one_n)^\top(r_k(B(\beta^t)) - r_k) \\
& \leq & (\|\beta_k^t - \Delta\beta_k^t\one_n\|_\infty + \|\beta_k^\star - \Delta\beta_k^\star\one_n\|_\infty)\|r_k(B(\beta^t)) - r_k\|_1 \\
& = & \left(\max_{1 \leq j \leq n} \beta_{kj}^t - \min_{1 \leq j \leq n} \beta_{kj}^t + \max_{1 \leq j \leq n} \beta_{kj}^\star - \min_{1 \leq j \leq n} \beta_{kj}^\star\right) \frac{\|r_k(B(\beta^t)) - r_k\|_1}{2}. 
\end{eqnarray*}
Plugging Eq.~\eqref{claim-sinkhorn-objgap-main} into the above inequality yields
\begin{equation}\label{inequality-sinkhorn-objgap-third}
(\beta_k^t - \beta_k^\star)^\top(r_k(B(\beta^t)) - r_k) \leq \bar{R}\|r_k(B(\beta^t)) - r_k\|_1. 
\end{equation}
Combining Eq.~\eqref{inequality-sinkhorn-objgap-third} and Eq.~\eqref{inequality-sinkhorn-objgap-first} yields 
\begin{equation*}
\varphi(\beta^t) - \varphi(\beta^\star) \leq \bar{R}\left(\sum_{k=1}^m \|r_k(B(\beta^t)) - r_k\|_1\right) = \overline{R}E_t. 
\end{equation*}
As a consequence, we obtain the conclusion of the lemma.
\end{proof}
The second lemma gives a descent inequality for the iterates generated by Algorithm~\ref{Algorithm:Sinkhorn} with a lower bound on the progress at each iteration.  
\begin{lemma}\label{Lemma:sinkhorn-progress}
Let $\{\beta^t\}_{t \geq 0}$ be the iterates generated by Algorithm~\ref{Algorithm:Sinkhorn}. Then, the following inequality holds true: 
\begin{equation}\label{inequality-sinkhorn-progress-main}
\varphi(\beta^t) - \varphi(\beta^{t+1}) \geq \frac{1}{2}\left(\frac{E_t}{m}\right)^2, \quad \textnormal{for all } t \geq 1. 
\end{equation}
\end{lemma}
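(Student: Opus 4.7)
}
The strategy is to express the per-iteration decrease exactly in terms of a single generalized KL divergence, then exploit the greedy rule to lower-bound that divergence by the aggregate residual, and finally invoke Pinsker plus Cauchy--Schwarz to relate this aggregate to $E_t$.

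\textbf{Step 1 (exact decrease from coordinate update).} First I would verify that the update rule for $\beta_K^{t+1}$ forces $r_K(B(\beta^{t+1})) = r_K$, which in particular yields $\|B(\beta^{t+1})\|_1 = \one_n^\top r_K = 1$. Arguing inductively, one shows $\|B(\beta^t)\|_1 = 1$ for all $t \geq 1$ (the initialization step handles the transition from $t=0$). Plugging into the definition of $\varphi$ in Eq.~\eqref{prob:MOT_regularized_dual}, together with the fact that the update only changes the $K$-th block by $\log(r_K) - \log(r_K(B(\beta^t)))$, gives
\begin{equation*}
\varphi(\beta^t) - \varphi(\beta^{t+1}) \;=\; r_K^\top \bigl(\log(r_K) - \log(r_K(B(\beta^t)))\bigr).
\end{equation*}
Since $\one_n^\top r_K(B(\beta^t)) = \|B(\beta^t)\|_1 = 1 = \one_n^\top r_K$, the linear correction in the definition of $\rho$ vanishes, so the right-hand side coincides with $\rho(r_K, r_K(B(\beta^t)))$.

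\textbf{Step 2 (greedy choice and Pinsker).} By the greedy rule in Step~1 of Algorithm~\ref{Algorithm:Sinkhorn}, the index $K$ achieves the maximum of $\rho(r_k, r_k(B(\beta^t)))$ over $k \in [m]$, hence
\begin{equation*}
\rho\bigl(r_K, r_K(B(\beta^t))\bigr) \;\geq\; \frac{1}{m}\sum_{k=1}^m \rho\bigl(r_k, r_k(B(\beta^t))\bigr).
\end{equation*}
Because each $r_k$ and $r_k(B(\beta^t))$ sum to $1$ (once $\|B(\beta^t)\|_1 = 1$), $\rho(r_k, r_k(B(\beta^t)))$ is the usual KL divergence between two discrete probability distributions, and Pinsker's inequality gives $\rho(r_k, r_k(B(\beta^t))) \geq \tfrac{1}{2}\|r_k - r_k(B(\beta^t))\|_1^2$.

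\textbf{Step 3 (Cauchy--Schwarz assembly).} Summing over $k$ and applying $\sum_{k=1}^m a_k^2 \geq \tfrac{1}{m}\bigl(\sum_{k=1}^m a_k\bigr)^2$ with $a_k = \|r_k - r_k(B(\beta^t))\|_1$ yields
\begin{equation*}
\sum_{k=1}^m \rho\bigl(r_k, r_k(B(\beta^t))\bigr) \;\geq\; \frac{1}{2m}\left(\sum_{k=1}^m \|r_k - r_k(B(\beta^t))\|_1\right)^2 = \frac{E_t^2}{2m}.
\end{equation*}
Combining with Step~2 gives $\varphi(\beta^t) - \varphi(\beta^{t+1}) \geq \tfrac{E_t^2}{2m^2} = \tfrac{1}{2}(E_t/m)^2$, which is Eq.~\eqref{inequality-sinkhorn-progress-main}.

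\textbf{Main obstacle.} The only delicate point is Step~1: one must be careful that the formal expression for $\varphi(\beta^t) - \varphi(\beta^{t+1})$ reduces to a clean KL divergence. This hinges on $\|B(\beta^t)\|_1 = 1$, which holds for $t \geq 1$ by virtue of the closed-form Sinkhorn update, and which simultaneously kills both the $\log\|B\|_1$ term in $\varphi$ and the linear correction $\one_n^\top(r_K(B(\beta^t)) - r_K)$ in $\rho$. Once this bookkeeping is in place, the remainder is a straightforward greedy-plus-Pinsker calculation analogous to the two-marginal Sinkhorn analysis of~\citet{Altschuler-2017-Near}, with the extra factor $1/m$ arising from the greedy maximum over the $m$ marginal blocks.
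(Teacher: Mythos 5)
Your proposal is correct and follows essentially the same route as the paper's proof: the exact decrease equals $\rho(r_K, r_K(B(\beta^t)))$ once $\|B(\beta^t)\|_1=1$ is established, the greedy rule bounds this below by the average of the $m$ divergences, and Pinsker plus Cauchy--Schwarz converts the sum into $\tfrac{1}{2}(E_t/m)^2$. The only (shared) bookkeeping caveat is that $\|B(\beta^0)\|_1=1$ need not hold at $t=0$, which the paper glosses over in the same way.
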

\begin{proof}
We first show that
\begin{equation}\label{claim-sinkhorn-progress-main}
\varphi(\beta^t) - \varphi(\beta^{t + 1}) \geq \frac{1}{m}\left(\sum_{k=1}^m \rho(r_k, r_k(B(\beta^t)))\right).
\end{equation}
By the definition of $\varphi$, we have
\begin{equation}\label{inequality-sinkhorn-progress-first}
\varphi(\beta^t) - \varphi(\beta^{t + 1}) = \log(\|B(\beta^t)\|_1) - \log(\|B(\beta^{t+1})\|_1) - \sum_{k=1}^m (\beta_k^t - \beta_k^{t+1})^\top r_k. 
\end{equation}
From the update formula for $\beta^{t+1}$, it is clear that $\|B(\beta^t)\|_1=\|B(\beta^{t+1})\|_1=1$ for all $t \geq 1$. Therefore, we have
\begin{equation*}
\varphi(\beta^t) - \varphi(\beta^{t + 1}) = -(\beta_K^t - \beta_K^{t+1})^\top r_K = (\log(r_K) - \log(r_K(B(\beta^t))))^\top r_K. 
\end{equation*}
Since $\one_n^\top r_K = \one_n^\top r_K(B(\beta^t)) = 1$, we have $\varphi(\beta^t) - \varphi(\beta^{t+1}) = \rho(r_K, r_K(B(\beta^t)))$. Combining this equality with the fact that the $K$-th coordinate is the greedy one yields Eq.~\eqref{claim-sinkhorn-progress-main}. 

We proceed to prove Eq.~\eqref{inequality-sinkhorn-progress-main}. Indeed, by the Pinsker inequality, we have
\begin{equation*}
\rho(r_k, r_k(B(\beta^t))) \geq \frac{1}{2}\|r_k(B(\beta^t)) - r_k\|_1^2 \textnormal{ for any } k \in [m].  
\end{equation*}
Plugging this inequality into Eq.~\eqref{claim-sinkhorn-progress-main} and using the Cauchy-Schwarz inequality yields 
\begin{eqnarray*}
\lefteqn{\varphi(\beta^t) - \varphi(\beta^{t + 1}) \geq \frac{1}{2m}\left(\sum_{k=1}^m \|r_k(B(\beta^t)) - r_k\|_1^2\right)} \\
& \geq & \frac{1}{2m^2}\left(\sum_{k=1}^m \|r_k(B(\beta^t)) - r_k\|_1\right)^2=\frac{1}{2}\left(\frac{E_t}{m}\right)^2. 
\end{eqnarray*}
This completes the proof. 
\end{proof}

\subsection{Main results}
We present an upper bound for the number of iterations required by Algorithm~\ref{Algorithm:Sinkhorn}. 
\begin{theorem}\label{Theorem:sinkhorn-iteration}
Let $\{\beta^t\}_{t \geq 0}$ be the iterates generated by Algorithm~\ref{Algorithm:Sinkhorn}. The number of iterations required to reach the stopping criterion $E_t \leq \varepsilon'$ satisfies
\begin{equation}\label{inequality-sinkhorn-iteration-main}
t \leq 2 + \frac{2 m^2 \overline{R}}{\varepsilon'},
\end{equation}
where $\overline{R}$ is defined in Lemma~\ref{Lemma:sinkhorn-objgap}. 
\end{theorem}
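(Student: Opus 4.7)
The plan is to combine the per-iteration progress bound from Lemma~\ref{Lemma:sinkhorn-progress} with the dual objective-gap bound from Lemma~\ref{Lemma:sinkhorn-objgap} to extract a sharp estimate on the number of iterations needed before $E_t \leq \varepsilon'$. Let me denote the dual objective gap by $\Delta_t \mydefn \varphi(\beta^t) - \varphi(\beta^\star)$. Lemma~\ref{Lemma:sinkhorn-progress} gives $\Delta_t - \Delta_{t+1} \geq E_t^2/(2m^2)$, while Lemma~\ref{Lemma:sinkhorn-objgap} gives $\Delta_t \leq \overline{R}\, E_t$, hence $E_t \geq \Delta_t/\overline{R}$. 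Substituting the latter into the former yields the self-improving recursion
\begin{equation*}
\Delta_{t+1} \;\leq\; \Delta_t - \frac{\Delta_t^2}{2 m^2 \overline{R}^2}.
\end{equation*}

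Next I would apply the standard $1/\Delta$-telescoping trick: dividing through by $\Delta_t\,\Delta_{t+1}$ and using monotonicity $\Delta_{t+1}\leq \Delta_t$ produces $1/\Delta_{t+1} - 1/\Delta_t \geq 1/(2 m^2 \overline{R}^2)$, so after $t$ iterations one obtains $\Delta_t \leq 2 m^2 \overline{R}^2/t$. In particular, choosing $T_0 \approx 2 m^2 \overline{R}/\varepsilon'$ drives the dual gap below $\overline{R}\,\varepsilon'$ within the first phase of iterations.

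For the second phase I would use the additive, rather than multiplicative, form of Lemma~\ref{Lemma:sinkhorn-progress}: telescoping $E_t^2 \leq 2 m^2(\Delta_t - \Delta_{t+1})$ from $t = T_0$ to $t = T-1$ yields $\sum_{t=T_0}^{T-1} E_t^2 \leq 2 m^2 \Delta_{T_0} \leq 2 m^2 \overline{R}\,\varepsilon'$. As long as the stopping criterion is not met, $E_t > \varepsilon'$ throughout this range, so the left-hand side is lower bounded by $(T-T_0)(\varepsilon')^2$. Combining the two bounds gives $T - T_0 \leq 2 m^2 \overline{R}/\varepsilon'$, and hence $T = O(m^2 \overline{R}/\varepsilon')$. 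Careful bookkeeping of the constants (e.g., exploiting $\|B(\beta^t)\|_1 = 1$ for $t \geq 1$ to start the recursion from $\Delta_1$ rather than $\Delta_0$, and noting that the first one or two iterations may need to be accounted for separately) should shrink the leading constant down to the stated $2 + 2 m^2 \overline{R}/\varepsilon'$.

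The main obstacle is the two-phase structure: a one-shot argument that only uses the additive estimate $E_t^2 \leq 2m^2(\Delta_t-\Delta_{t+1})$ together with the crude initial bound $\Delta_0 \leq \overline{R}\, E_0 = O(m\overline{R})$ yields only $T = O(m^3\overline{R}/(\varepsilon')^2)$, which would translate into a suboptimal $1/\varepsilon^3$ overall complexity and contradict the $\tilde{O}(m^3 n^m/\varepsilon^2)$ headline bound. Obtaining the sharp $1/\varepsilon'$ dependence hinges on first using the self-improving recursion above to push $\Delta_t$ well below its initial scale, so that the subsequent additive-progress counting has to absorb only $O(\overline{R}\,\varepsilon')$ rather than $O(m\overline{R})$ of total decrement; balancing the two phases at $T_0 \asymp 2 m^2 \overline{R}/\varepsilon'$ is what produces the claimed bound.
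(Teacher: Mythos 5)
Your proposal is correct and follows essentially the same route as the paper: the paper combines the same two lemmas into the inequality $\delta^t - \delta^{t+1} \geq \tfrac{1}{2}\max\{(\varepsilon'/m)^2, (\delta^t/(m\overline{R}))^2\}$ and runs a ``switching strategy''---a $1/\delta$-telescoping phase followed by an additive-decrease phase, optimized over the switch point $s=\delta^{t_1}$---which is exactly your two-phase argument with the switch at $\Delta_{T_0}\approx \overline{R}\varepsilon'$. Your constant $4m^2\overline{R}/\varepsilon'$ is in fact what the paper's own minimization yields before its final (slightly lossy) simplification to $2+2m^2\overline{R}/\varepsilon'$, so no further bookkeeping is missing on your end.
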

\begin{proof} 
Let $\beta^\star$ be an optimal solution of the dual entropic regularized MOT problem considered in Lemma~\ref{Lemma:sinkhorn-objgap}. By letting the objective gap at each iteration be $\delta^t = \varphi(\beta^t) - \varphi(\beta^\star)$, we derive from Lemma~\ref{Lemma:sinkhorn-objgap} and Lemma~\ref{Lemma:sinkhorn-progress} that 
\begin{equation*}
\delta^t \leq \overline{R} E_t, \qquad \delta^t - \delta^{t+1} \geq \frac{1}{2}\left(\frac{E_t}{m}\right)^2.
\end{equation*}
Putting these pieces together with the fact that $E_t \geq \varepsilon'$ as long as the stopping criterion is not fulfilled yields
\begin{equation*}
\delta^t - \delta^{t+1} \geq \frac{1}{2}\left(\max\left\{\left(\frac{\varepsilon'}{m}\right)^2, \left(\frac{\delta^t}{m\overline{R}}\right)^2\right\}\right). 
\end{equation*}
We now apply the switching strategy to obtain the desired upper bound in Eq.~\eqref{inequality-sinkhorn-iteration-main}. Indeed, we have
\begin{equation*}
\frac{\delta^{t+1}}{2m^2\overline{R}^2} \leq \frac{\delta^t}{2 m^2 \overline{R}^2} - \frac{(\delta^t)^2}{4m^4\overline{R}^4}, \qquad \delta^{t+1} \leq \delta^t - \frac{1}{2}\left(\frac{\varepsilon'}{m}\right)^2. 
\end{equation*}
Fixing an integer $t_1 > 0$ and considering $t > t_1$, the first inequality further implies that 
\begin{equation*}
\frac{2m^2\overline{R}^2}{\delta^{t+1}} - \frac{2m^2\overline{R}^2}{\delta^t} \geq 1 \quad \Longrightarrow \quad t_1 \leq 1 + \frac{2 m^2 \overline{R}^2}{\delta^{t_1}},
\end{equation*}
and the second inequality further implies that 
\begin{equation*}
t - t_1 \leq 1 + 2(\delta^{t_1} - \delta^t)\left(\frac{m}{\varepsilon'}\right)^2 \quad \Longrightarrow \quad t \leq 1 + t_1 + 2\delta^{t_1}\left(\frac{m}{\varepsilon'}\right)^2. 
\end{equation*}
Let $s = \delta_{t_1} \leq \delta_1$, we obtain that the total number of iterations satisfies 
\begin{equation*}
t \leq \min \limits_{0 \leq s \leq \delta^1} \left\{2 + \frac{2 m^2 \overline{R}^2}{s} + 2s\left(\frac{m}{\varepsilon'}\right)^2\right\} \leq 2 + \frac{2m^2\overline{R}}{\varepsilon'}.  
\end{equation*}
This completes the proof. 
\end{proof}
Before presenting the main result on the complexity bound of Algorithm~\ref{Algorithm:ApproxMOT_Sinkhorn}, we provide the complexity bound of Algorithm~\ref{Algorithm:round} in the following theorem.
\begin{theorem}\label{Theorem:round-scheme}
Let $X \in \br^{n \times \ldots \times n}$ be a nonnegative tensor and $\{r_i\}_{i \in [m]} \subseteq \Delta^n$ be a sequence of probability vectors, Algorithm~\ref{Algorithm:round} returns a nonnegative tensor $Y \in \br^{n \times \ldots \times n}$ satisfying that $r_k(Y) = r_k$ for all $k \in [m]$ and 
\begin{equation*}
\|Y - X\|_1 \leq 2\left(\sum_{k=1}^m \|r_k(X) - r_k\|_1\right).  
\end{equation*}
\end{theorem}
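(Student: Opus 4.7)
The plan is to split $\|Y-X\|_1 \leq \|X - X^{(m)}\|_1 + \|Y - X^{(m)}\|_1$, verify feasibility of $Y$ along the way, and bound each piece in terms of $\beta_k := \|r_k(X) - r_k\|_1$.

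First I would verify feasibility. Each inner-loop update has the form $X^{(k)}_{i_1 \cdots i_m} = z_{k,i_k} X^{(k-1)}_{i_1 \cdots i_m}$ with $z_k \in [0,1]^n$, so inductively $X^{(m)} \leq \cdots \leq X^{(k)} \leq X$ coordinate-wise. By construction of $z_k$ we have $r_k(X^{(k)}) \leq r_k$ pointwise, and subsequent scalings preserve this, so $r_k(X^{(m)}) \leq r_k$ and $\text{err}_k \geq 0$ for every $k$. The crucial identity is
\begin{equation*}
\|\text{err}_l\|_1 = \one_n^\T(r_l - r_l(X^{(m)})) = 1 - s_m, \qquad s_m := \|X^{(m)}\|_1,
\end{equation*}
which is \emph{independent of $l$}. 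Substituting into the rank-one correction, the factor $\prod_{l\neq k}\|\text{err}_l\|_1 / \|\text{err}_1\|_1^{m-1}$ collapses to $1$, so $r_k(Y)_j = r_k(X^{(m)})_j + \text{err}_{kj} = r_{kj}$. Nonnegativity of $Y$ is then immediate, and the degenerate case $s_m = 1$ forces $\text{err}_k \equiv 0$, in which case $Y = X^{(m)}$ is already feasible.

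Next I would bound the two pieces. The pointwise inequality $X^{(m)} \leq X$ gives $\|X - X^{(m)}\|_1 = s_0 - s_m$ with $s_k := \|X^{(k)}\|_1$, and a direct computation yields $s_{k-1} - s_k = \sum_j [r_k(X^{(k-1)})_j - r_{kj}]_+$. Using $r_k(X^{(k-1)}) \leq r_k(X)$ coordinate-wise, this is upper bounded by $\sum_j [r_k(X)_j - r_{kj}]_+ = \tfrac{1}{2}(\beta_k + s_0 - 1)$, where the equality follows from combining $\sum_j[\cdot]_+ + \sum_j[\cdot]_- = \beta_k$ and $\sum_j[\cdot]_+ - \sum_j[\cdot]_- = s_0 - 1$. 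For the rank-one term, the common value $1 - s_m$ of $\|\text{err}_l\|_1$ gives $\|Y - X^{(m)}\|_1 = \prod_l \|\text{err}_l\|_1 / \|\text{err}_1\|_1^{m-1} = 1 - s_m$. Summing the per-step bound yields $s_0 - s_m \leq \tfrac{1}{2}\sum_k \beta_k + \tfrac{m}{2}(s_0 - 1)$, and combining,
\begin{equation*}
\|Y-X\|_1 \leq 1 + s_0 - 2 s_m = (1-s_0) + 2(s_0 - s_m) \leq \sum_k \beta_k + (m-1)(s_0 - 1).
\end{equation*}

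The final ingredient is the elementary observation $|s_0 - 1| = |\sum_j (r_k(X)_j - r_{kj})| \leq \beta_k$ for every $k$, so $|s_0 - 1| \leq \min_k \beta_k \leq \tfrac{1}{m}\sum_k \beta_k$; this absorbs the residual $(m-1)(s_0-1)$ term and yields $\|Y-X\|_1 \leq \tfrac{2m-1}{m}\sum_k \beta_k \leq 2\sum_k \beta_k$, as claimed. The bulk of the argument is bookkeeping; the main subtlety is the non-normalized case $\|X\|_1 \neq 1$, where the individual bounds on $s_0-s_m$ and $1-s_m$ each carry a term of \emph{a priori} unknown sign in $s_0-1$, and recovering the clean factor of $2$ (rather than a larger constant) requires invoking $|s_0 - 1| \leq \min_k \beta_k$ only at the very end.
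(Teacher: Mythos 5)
Your proposal is correct and follows essentially the same route as the paper's proof: the same decomposition $\|Y-X\|_1 \leq \|X-X^{(m)}\|_1 + \|Y-X^{(m)}\|_1$, the same observation that $\|\text{err}_l\|_1 = 1-\|X^{(m)}\|_1$ for every $l$ (which makes the rank-one correction restore all marginals), and the same per-step mass-removal identity combined with the monotonicity $r_k(X^{(k-1)}) \leq r_k(X)$. The only divergence is in the final accounting: the paper evaluates the $k=1$ step exactly so that the $\|X\|_1-1$ term cancels outright, whereas you bound every step symmetrically and then absorb the residual $(m-1)(\|X\|_1-1)$ via $|\|X\|_1-1| \leq \min_k \|r_k(X)-r_k\|_1$ --- both yield the stated factor of $2$.
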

\begin{proof}
By the definition of $z_k$ and the update formula for $X^{(k)}$ for all $k \in [m]$, each entry of $X^{(m)}$ is nonnegative and
\begin{equation}\label{inequality-RS-first}
\text{err}_k = r_k - r_k(X^{(m)}) \geq 0 \text{ for all } k \in [m]. 
\end{equation}
This implies that $\|\text{err}_k\|_1 = 1 - \|X^{(m)}\|_1$ for all $k \in [m]$. Thus, we derive from Eq.~\eqref{inequality-RS-first} and the update formula for $Y$ that each entry of $Y$ is nonnegative. 

Furthermore, we define $A$ by $A_{i_1 \ldots i_m} \mydefn \prod_{k=1}^m \text{err}_{k i_k}$ for all $(i_1, \ldots, i_m) \in [n] \times \cdots \times [n]$ and find that 
\begin{equation}\label{inequality-RS-second}
[r_k(A)]_j = \text{err}_{kj} \left(\sum_{1 \leq i_l \leq n, \forall l \neq k} \prod_{l \neq k} \text{err}_{l i_l}\right) = \text{err}_{kj} \prod_{l \neq k} \|\text{err}_l\|_1. 
\end{equation}
Therefore, we conclude that 
\begin{equation*}
r_k(Y) = r_k(X^{(m)}) + \frac{r_k(A)}{\|\text{err}_1\|_1^{m-1}} \overset{~\eqref{inequality-RS-second}}{=} r_k(X^{(m)}) + \text{err}_k = r_k \textnormal{ for all} \ k \in [m]. 
\end{equation*}
It remains to estimate the $\ell_1$ bound between $Y$ and $X$. Indeed, we have
\begin{equation}\label{inequality-RS-third}
\|X\|_1 - \|X^{(m)}\|_1 = \|X^{(0)}\|_1 - \|X^{(m)}\|_1 = \sum_{k=1}^m (\|X^{(k-1)}\|_1 - \|X^{(k)}\|_1). 
\end{equation}
Since $\|X^{(k-1)}\|_1 - \|X^{(k)}\|_1$ is the amount of mass removed from $X^{(k-1)}$ by rescaling the $k$th subtensor when $r_{kj}(X^{(k-1)}) \geq r_{kj}$, we have
\begin{equation*}
\|X^{(k-1)}\|_1 - \|X^{(k)}\|_1 = \one_n^\top(\max\{0, r_k(X^{(k-1)}) - r_k\}) \textnormal{ for all } k \in [m]. 
\end{equation*}
A simple calculation using the fact that $X^{(0)} = X$ shows that
\begin{equation}\label{inequality-RS-fourth}
\|X^{(0)}\|_1 - \|X^{(1)}\|_1 = \frac{1}{2}(\|r_1(X) - r_1\|_1 + \|X\|_1 - 1). 
\end{equation}
Moreover, $r_k(X^{(0)})$ is entrywise larger than $r_k(X^{(k-1)})$ for all $k \in [m]$. That is to say, $r_k(X^{(k-1)}) \leq r_k(X^{(k-2)}) \leq \ldots \leq r_k(X^{(0)}) = r_k(X)$. This implies
\begin{equation}\label{inequality-RS-fifth}
\|X^{(k)}\|_1 - \|X^{(k+1)}\|_1 \leq \|r_{k+1}(X) - r_{k+1}\|_1 \textnormal{ for all } k \in [m-1]. 
\end{equation}
Plugging Eq.~\eqref{inequality-RS-fourth} and Eq.~\eqref{inequality-RS-fifth} into Eq.~\eqref{inequality-RS-third} yields
\begin{equation}\label{inequality-RS-sixth}
\|X\|_1 - \|X^{(m)}\|_1 \leq \frac{1}{2}(\|r_1(X) - r_1\|_1 + \|X\|_1 - 1) + \sum_{k=2}^m \|r_k(X) - r_k\|_1. 
\end{equation}
By the definition of $Y$, we have
\begin{equation*}
\|X - Y\|_1 \leq \|X - X^{(m)}\|_1 + \frac{\|A\|_1}{\|\text{err}_1\|_1^{m-1}} \overset{~\eqref{inequality-RS-second}}{=} \|X - X^{(m)}\|_1 + \|\text{err}_1\|_1. 
\end{equation*}
Since $X$ is entrywise larger than $X^{(m)}$ and $\|\text{err}_1\|_1 = 1 - \|X^{(m)}\|_1$, we have
\begin{equation}\label{inequality-RS-seventh}
\|X - Y\|_1 \leq \|X\|_1 - \|X^{(m)}\|_1 + 1 - \|X^{(m)}\|_1 = 2(\|X\|_1 - \|X^{(m)}\|_1) + 1 - \|X\|_1. 
\end{equation}
Plugging Eq.~\eqref{inequality-RS-sixth} into Eq.~\eqref{inequality-RS-seventh} yields the desired result. 
\end{proof}
We are ready to present the complexity bound of Algorithm~\ref{Algorithm:ApproxMOT_Sinkhorn} for solving the MOT problem in Eq.~\eqref{prob:MOT}. Note that $\varepsilon'=\varepsilon/(8\|C\|_\infty)$ is defined using the desired accuracy $\varepsilon > 0$.
\begin{theorem}\label{Theorem:sinkhorn-MOT}
Algorithm~\ref{Algorithm:ApproxMOT_Sinkhorn} returns an $\varepsilon$-approximate multimarginal transportation plan $\widehat{X} \in \br^{n \times \ldots \times n}$ within 
\begin{equation*}
\bigO\left(\frac{m^3 n^m\|C\|_\infty^2\log(n)}{\varepsilon^2}\right)
\end{equation*}
arithmetic operations.
\end{theorem}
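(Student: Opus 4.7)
My plan is to decompose the theorem into three quantitative pieces: the per-iteration cost, the iteration count, and the cost approximation. The first two give the stated complexity, and the third certifies that the output is $\varepsilon$-approximate in the sense of Definition~\ref{def:MOT_plan}.

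First, I would invoke the implementation discussion following Algorithm~\ref{Algorithm:Sinkhorn}: once some marginal matches (which happens after the initial update), each subsequent iteration can be executed in $O(n^m)$ arithmetic operations by building the ``reduced'' tensor $A$ of size $n^{m-1}$ and reading off the remaining marginals from it, provided $m = O(n)$. The rounding subroutine costs $O(mn^m)$, which will be absorbed.

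Next, I would bound the iteration count using Theorem~\ref{Theorem:sinkhorn-iteration} applied to the smoothed instance produced by \textbf{Step 1} of Algorithm~\ref{Algorithm:ApproxMOT_Sinkhorn}. The key point is that the smoothing enforces $\tilde{r}_{ij} \geq \varepsilon'/(4mn)$, so
\begin{equation*}
\overline{R} \;\leq\; \frac{\|C\|_\infty}{\eta} + \log\!\left(\frac{4mn}{\varepsilon'}\right).
\end{equation*}
Plugging in $\eta = \varepsilon/(2m\log n)$ and $\varepsilon' = \varepsilon/(8\|C\|_\infty)$ yields $\overline{R} = \widetilde{O}\bigl(m\|C\|_\infty \log(n)/\varepsilon\bigr)$. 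By Theorem~\ref{Theorem:sinkhorn-iteration}, the number of Sinkhorn iterations to reach tolerance $\varepsilon'/2$ is $O(m^2 \overline{R}/\varepsilon') = \widetilde{O}\bigl(m^3\|C\|_\infty^2 \log(n)/\varepsilon^2\bigr)$. Multiplying by the $O(n^m)$ per-iteration cost gives the claimed bound $\widetilde{O}(m^3 n^m\|C\|_\infty^2 \log(n)/\varepsilon^2)$.

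The nontrivial part is verifying the approximation guarantee $\langle C,\widehat X\rangle \leq \langle C, X^\star\rangle + \varepsilon$, and this is where I expect the main obstacle to lie, because we must relate the plan produced from the \emph{smoothed} marginals $\{\tilde r_k\}$ back to an optimum over the \emph{original} marginals $\{r_k\}$. I would split the error into four contributions and bound each separately. Let $X^\star$ be MOT-optimal with marginals $\{r_k\}$, let $\tilde X^\star$ be an MOT-optimum with marginals $\{\tilde r_k\}$, let $\widetilde X$ denote the Sinkhorn output, and let $\widehat X$ denote the rounded output. The chain is
\begin{equation*}
\langle C,\widehat X\rangle - \langle C,X^\star\rangle
= \underbrace{\langle C,\widehat X - \widetilde X\rangle}_{(i)}
+ \underbrace{\langle C,\widetilde X\rangle - \langle C,\tilde X^\star\rangle + \eta H(\tilde X^\star)}_{(ii)}
- \underbrace{\eta H(\tilde X^\star)}_{(iii)}
+ \underbrace{\langle C,\tilde X^\star\rangle - \langle C,X^\star\rangle}_{(iv)}.
\end{equation*}
For (i), Theorem~\ref{Theorem:round-scheme} gives $\|\widehat X-\widetilde X\|_1 \leq 2 E_t \leq \varepsilon'$, so this term is at most $\|C\|_\infty \varepsilon' = \varepsilon/8$. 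For (ii), since $\widetilde X = B(\beta^t)$ is an approximate minimizer of the entropy-regularized objective, the suboptimality in the regularized problem can be controlled by the dual gap through Lemma~\ref{Lemma:sinkhorn-objgap}, yielding a bound scaling like $\overline{R}\varepsilon'$, which with our choice of $\eta$ is $O(\varepsilon)$. For (iii), $|\eta H(\tilde X^\star)| \leq \eta\, m\log n = \varepsilon/2$ by our choice of $\eta$. The delicate term is (iv): here I would construct an explicit feasible plan for the smoothed problem by lightly perturbing $X^\star$ (mix with a product measure of weight $\varepsilon'/(4m)$), and conversely convert $\tilde X^\star$ back to marginals $\{r_k\}$ by a one-step rescaling argument analogous to Algorithm~\ref{Algorithm:round}; both conversions cost at most $\|C\|_\infty \sum_k \|\tilde r_k - r_k\|_1 = O(\|C\|_\infty \varepsilon')=O(\varepsilon)$. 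Summing all four contributions and adjusting constants through the definitions of $\eta$ and $\varepsilon'$ yields the desired bound $\varepsilon$, completing the proof.
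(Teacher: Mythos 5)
Your complexity accounting (the $\bigO(n^m)$ per-iteration cost, the iteration bound from Theorem~\ref{Theorem:sinkhorn-iteration} with $\overline{R}=\widetilde{O}(m\|C\|_\infty\log(n)/\varepsilon)$, and the absorbed cost of rounding) matches the paper exactly and is fine, as are your terms (i), (iii) and (iv): the paper also bounds (i) via Theorem~\ref{Theorem:round-scheme} and handles the analogue of (iv) by applying the rounding scheme to $X^\star$.

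The gap is in term (ii), and it is twofold. First, the quantitative claim is wrong as stated: with $\eta=\varepsilon/(2m\log n)$ and $\varepsilon'=\varepsilon/(8\|C\|_\infty)$ one has $\overline{R}\varepsilon' = \frac{2m\log(n)\|C\|_\infty}{\varepsilon}\cdot\frac{\varepsilon}{8\|C\|_\infty} + o(1) = \Theta(m\log n)$, which is not $O(\varepsilon)$. The missing factor is $\eta$: Lemma~\ref{Lemma:sinkhorn-objgap} bounds the gap of the \emph{rescaled} dual $\varphi = -\eta^{-1}\widetilde{\varphi}(\eta(\cdot+\one/m))$, so a $\varphi$-gap of $\overline{R}E_t$ corresponds to only $\eta\overline{R}E_t$ in the units of the cost $\langle C,\cdot\rangle$; that quantity is indeed $O(\varepsilon)$, but your writeup drops the $\eta$. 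Second, and more substantively, even after fixing the scaling you cannot get from Lemma~\ref{Lemma:sinkhorn-objgap} to a bound on $\langle C,\widetilde X\rangle - \langle C,\tilde X^\star\rangle + \eta H(\tilde X^\star)$ without an extra argument, because $\widetilde X$ is \emph{not feasible} for the smoothed problem ($r_k(\widetilde X)\neq\tilde r_k$), so ``suboptimality in the regularized problem'' is not defined for it; relating the dual gap to the primal cost requires a Lagrangian step that explicitly carries the constraint-violation term $\sum_k \lambda_k^\top(r_k(\widetilde X)-\tilde r_k)$ and bounds it using the $\ell_\infty$-control on the iterates. The paper sidesteps all of this with a cleaner device you should note: since $\widetilde X = B(\widetilde\beta)$ with $\|\widetilde X\|_1=1$, it is the \emph{exact} minimizer of $\langle C,X\rangle-\eta H(X)$ subject to $r_k(X)=r_k(\widetilde X)$ — i.e., optimal for the problem whose marginals are its \emph{own} marginals. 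Rounding $X^\star$ to those marginals produces a feasible competitor $\widetilde Y$ with $\|\widetilde Y-X^\star\|_1\le 2\sum_k\|r_k(\widetilde X)-r_k\|_1$, and then $\langle C,\widetilde X\rangle-\langle C,\widetilde Y\rangle\le\eta(H(\widetilde X)-H(\widetilde Y))\le m\eta\log(n)$ with no dual-gap term at all. Your route can be repaired, but as written term (ii) is both quantitatively incorrect and missing the argument that handles the infeasibility of $\widetilde X$.
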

\begin{proof}
We first claim that 
\begin{equation}\label{claim-sinkhorn-OT-main}
\langle C, \widehat{X}\rangle - \langle C, X^\star\rangle \leq m\eta\log(n) + 4\left(\sum_{k=1}^m\|r_k(\widetilde{X}) - r_k\|_1\right)\|C\|_\infty. 
\end{equation}
where $\widetilde{X}$ is defined in \textbf{Step 2} of Algorithm~\ref{Algorithm:ApproxMOT_Sinkhorn} and $\widehat{X}$ is returned by Algorithm~\ref{Algorithm:ApproxMOT_Sinkhorn} and $X^\star$ is an optimal multimarginal transportation plan. By the definition of $\{\tilde{r}_k\}_{k \in [m]}$ and using $\sum_{k=1}^m \|r_k(\widetilde{X}) - \tilde{r}_k\|_1 \leq \varepsilon'/2$, we have
\begin{equation*}
\sum_{k=1}^m \|r_k(\widetilde{X}) - r_k\|_1 \leq \sum_{k=1}^m (\|r_k(\widetilde{X}) - \tilde{r}_k\|_1 + \|\tilde{r}_k - r_k\|_1) \leq \frac{\varepsilon'}{2} + \sum_{k=1}^m \frac{\varepsilon'}{2m} = \varepsilon'.  
\end{equation*}
Plugging the above inequality into Eq.~\eqref{claim-sinkhorn-OT-main} and using $\eta = \varepsilon/(2m\log(n))$ and $\varepsilon'=\varepsilon/(8\|C\|_\infty)$, we obtain that $\langle C, \widehat{X}\rangle - \langle C, X^\star\rangle \leq \varepsilon$. 

It remains to bound the number of iterations required by Algorithm~\ref{Algorithm:Sinkhorn} to reach $E_t \leq \varepsilon'/2$ (cf. \textbf{Step 2} of Algorithm~\ref{Algorithm:ApproxMOT_Sinkhorn}). Using Theorem~\ref{Theorem:sinkhorn-iteration}, we have
\begin{equation*}
t \leq 2 + \frac{4m^2\overline{R}}{\varepsilon'}. 
\end{equation*}
By the definition of $\overline{R}$ (cf. Lemma~\ref{Lemma:sinkhorn-objgap}), $\eta = \varepsilon/(2m\log(n))$ and $\varepsilon'=\varepsilon/(8\|C\|_\infty)$, we have
\begin{eqnarray*}
t & \leq & 2 + \frac{32m^2\|C\|_\infty}{\varepsilon}\left(\frac{\|C\|_\infty}{\eta} - \log\left(\min_{1 \leq i \leq m, 1 \leq j \leq n} \tilde{r}_{ij}\right)\right) \\ 
& \leq & 2 + \frac{32m^2\|C\|_\infty}{\varepsilon}\left(\frac{2m\log(n)\|C\|_\infty}{\varepsilon} - \log\left(\frac{\varepsilon}{32mn\|C\|_\infty}\right)\right) \\
& = & \bigO\left(\frac{m^3\|C\|_\infty^2 \log(n)}{\varepsilon^2} \right).
\end{eqnarray*}
Since each iteration of Algorithm~\ref{Algorithm:Sinkhorn} requires $\bigO(n^m)$ arithmetic operations, the total arithmetic operations required by \textbf{Step 2} of Algorithm~\ref{Algorithm:ApproxMOT_Sinkhorn} is $\bigO(m^3n^m\|C\|_\infty^2\log(n)\varepsilon^{-2})$. In addition, computing a set of vectors $\{\tilde{r}_k\}_{k \in [m]}$ requires $\bigO(mn)$ arithmetic operations and Algorithm~\ref{Algorithm:round} requires $\bigO(mn^m)$ arithmetic operations. Putting these pieces together yields that the complexity bound of Algorithm~\ref{Algorithm:ApproxMOT_Sinkhorn} is $\bigO(m^3n^m\|C\|_\infty^2\log(n)\varepsilon^{-2})$. 

\paragraph{Proof of Eq.~\eqref{claim-sinkhorn-OT-main}:} Using Theorem~\ref{Theorem:round-scheme}, we obtain that $\widehat{X}$ is a feasible solution to the MOT problem in Eq.~\eqref{prob:MOT} and
\begin{equation*}
\|\widehat{X} - \widetilde{X}\|_1 \leq 2\left(\sum_{k=1}^m \|r_k(\widetilde{X}) - r_k\|_1\right). 
\end{equation*}
This implies that 
\begin{equation}\label{inequality-sinkhorn-OT-first}
\langle C, \widehat{X}\rangle - \langle C, \widetilde{X}\rangle \leq 2\|C\|_\infty\left(\sum_{k=1}^m \|r_k(\widetilde{X}) - r_k\|_1\right). 
\end{equation}
Letting $X^\star$ be an optimal solution of the MOT problem and $\widetilde{Y}$ be the output returned by Algorithm~\ref{Algorithm:round} with an input $X^\star$ and $\{r_k(\widetilde{X})\}_{k \in [m]}$, Theorem~\ref{Theorem:round-scheme} implies
\begin{equation}\label{inequality-sinkhorn-OT-second}
\|\widetilde{Y} - X^\star\|_1 \leq 2\left(\sum_{k=1}^m \|r_k(X^\star) - r_k(\widetilde{X})\|_1\right) = 2\left(\sum_{k=1}^m \|r_k(\widetilde{X}) - r_k\|_1\right). 
\end{equation}
Since $\widetilde{X}$ is returned by Algorithm~\ref{Algorithm:Sinkhorn}, we have $\|\widetilde{X}\|_1=1$. By the optimality condition, there exists $\widetilde{\beta} \in \Rspace^{mn}$ such that $\widetilde{X} = B(\widetilde{\beta})$ and $\widetilde{\beta}$ is an optimal solution of the following problem: 
\begin{equation*}
\min \limits_{\beta_{1}, \ldots, \beta_{m} \in \br^{n}} \log(\|B(\beta_1, \ldots, \beta_m)\|_1) - \sum_{i=1}^m \beta_i^\top r_i(\widetilde{X}).  
\end{equation*}
This implies that $\widetilde{X}$ is an optimal solution of the following problem:
\begin{equation*}
\min \ \langle C, X\rangle - \eta H(X), \quad \st \ r_k(X) = r_k(\widetilde{X}) \textnormal{ for all } k \in [m].
\end{equation*}
Since $\widetilde{Y}$ is feasible for the above problem, we have $\langle C, \widetilde{X}\rangle - \eta H(\widetilde{X}) \leq \langle C, \widetilde{Y}\rangle - \eta H(\widetilde{Y})$. Using the property of entropy regularization function~\citep{Cover-2012-Elements}, we have $0 \leq H(\widetilde{X}), H(\widetilde{Y}) \leq m\log(n)$. Putting these pieces yields 
\begin{equation}\label{inequality-sinkhorn-OT-third}
\langle C, \widetilde{X}\rangle - \langle C, \widetilde{Y}\rangle \leq m\eta\log(n). 
\end{equation}
Combining Eq.~\eqref{inequality-sinkhorn-OT-second} and Eq.~\eqref{inequality-sinkhorn-OT-third} together with the H\"{o}lder inequality yields 
\begin{equation}\label{inequality-sinkhorn-OT-fourth}
\langle C, \widetilde{X}\rangle - \langle C, X^\star\rangle \leq m\eta\log(n) + 2\|C\|_\infty\left(\sum_{k=1}^m \|r_k(\widetilde{X}) - r_k\|_1\right). 
\end{equation}
Combining Eq.~\eqref{inequality-sinkhorn-OT-first} and Eq.~\eqref{inequality-sinkhorn-OT-fourth} yields
\begin{equation*}
\langle C, \widehat{X}\rangle - \langle C, X^\star\rangle \leq m\eta\log(n) + 4\|C\|_\infty\left(\sum_{k=1}^m \|r_k - r_k(\widetilde{X})\|_1\right).  
\end{equation*}
This completes the proof of Eq.~\eqref{claim-sinkhorn-OT-main}. 
\end{proof}
\begin{remark}
Theorem~\ref{Theorem:sinkhorn-MOT} demonstrates that the complexity bound of Algorithm~\ref{Algorithm:ApproxMOT_Sinkhorn} is near-linear in $n^m$, which is the number of unknown variable of the MOT problem in Eq.~\eqref{prob:MOT}. This is the best possible dependence on $n$ that we can hope for an optimization algorithm when applied to solve the general MOT problem. Further, the complexity bound has the dependence $m^3$ which seems unimprovable using the current techniques; indeed, the iteration number of Algorithm~\ref{Algorithm:Sinkhorn} is proportional to $m^2$ and the regularization parameter $\eta$ is necessarily proportional to $1/m$ such that the output returned by Algorithm~\ref{Algorithm:Sinkhorn} can be rounded to an $\varepsilon$-approximate multimarginal transport plan. 
\end{remark}
\begin{remark}
Even though~\citet{Benamou-2015-Iterative} has shown that the Sinkhorn-type algorithm can be more efficient in practice than LP solvers,  the full theoretical analysis is not given. In contrast, our theoretical analysis provides the provably efficient way to solve the MOT problem, demonstrating the importance of the greedy update rule in the Sinkhorn-type algorithm.  This leads to an algorithmic framework in which each iteration might take $O(n^m)$ number of arithmetic operations in the worst case. However, we can develop some efficient subroutines by exploiting the special structure of many MOT problems in practice and show that the required number of arithmetic operations is only polynomial in $m$ and $n$~\citep{Altschuler-2021-Wasserstein, Altschuler-2021-Hardness, Altschuler-2022-Wasserstein}.  Some of their results are based on both the algorithmic scheme and the theoretical analysis of multimarginal Sinkhorn, demonstrating the fundamental role that our analysis play in understanding the MOT problem. 
\end{remark}

\section{Accelerating Multimarginal Sinkhorn Algorithm}\label{sec:acceleration}
In this section, we present an \emph{accelerated multimarginal Sinkhorn algorithm} for solving the entropic regularized MOT problem in Eq.~\eqref{prob:MOT_regularized}. Together with a rounding scheme, our algorithm can be used for solving the MOT problem in Eq.~\eqref{prob:MOT} and achieves a complexity bound of $\widetilde{O}(m^3 n^{m+1/3}\varepsilon^{-4/3})$, which improves that of the multimarginal Sinkhorn algorithm in terms of $1/\varepsilon$ and accelerated alternating minimization algorithm~\citep{Tupitsa-2020-Multimarginal} in terms of $n$. The proof idea comes from a novel combination of Nesterov's estimated sequence and the techniques for analyzing the multimarginal Sinkhorn algorithm.
\begin{algorithm}[!t]\small
\caption{\textsc{Accelerated MultiSinkhorn}$(C, \eta, \{\tilde{r}_k\}_{k \in [m]}, \varepsilon')$} \label{Algorithm:acceleration}
\begin{algorithmic}
\STATE \textbf{Input:} $t = 0$, $\theta_0 = 1$, $K=1$ and $\check{\beta}^0 = \tilde{\beta}^0 = \zero_n$.  
\WHILE{$E_t > \varepsilon'$}
\STATE \textbf{Step 1.} Compute $\bar{\beta}^t = (1 - \theta_t)\check{\beta}^t + \theta_t\tilde{\beta}^t$. 
\STATE \textbf{Step 2.} Compute $\tilde{\beta}^{t+1} \in \br^{mn}$ by 
\begin{equation*}
\tilde{\beta}_k^{t+1} = \tilde{\beta}_k^t - \frac{1}{m\theta_t}\left(\frac{r_k(B(\bar{\beta}^t))}{\|B(\bar{\beta}^t)\|_1} - r_k\right) \textnormal{ for all } k \in [m]. 
\end{equation*}
\STATE \textbf{Step 3.} Compute $\grave{\beta}^t = \bar{\beta}^t + \theta_t(\tilde{\beta}^{t+1} - \tilde{\beta}^t)$.
\STATE \textbf{Step 4.} Compute $\widehat{\beta}^t \in \br^{mn}$ by 
\begin{equation*}
\widehat{\beta}_k^t = \left\{\begin{array}{ll}
\grave{\beta}_k^t + \log(r_k) - \log(r_k(B(\grave{\beta}^t))), & k = K, \\ \grave{\beta}_k^t, & \text{otherwise}.
\end{array}\right.
\end{equation*} 
\STATE \textbf{Step 5.} Compute $\beta^t = \argmin\{\varphi(\beta) \mid \beta \in \{\check{\beta}^t, \widehat{\beta}^t\}\}$.  
\STATE \textbf{Step 6.} Choose the greedy coordinate $K = \argmax_{1 \leq k \leq m} \rho(r_k, r_k(B(\beta^t)))$. 
\STATE \textbf{Step 7.} Compute $\check{\beta}^{t+1} \in \br^{mn}$ by 
\begin{equation*}
\check{\beta}_k^{t+1} = \left\{\begin{array}{ll}
\beta_k^t + \log(r_k) - \log(r_k(B(\beta^t))), & k = K, \\ \beta_k^t, & \text{otherwise}.
\end{array}\right.
\end{equation*}
\STATE \textbf{Step 8.} Compute $\theta_{t+1} = \theta_t(\sqrt{\theta_t^2 + 4} - \theta_t)/2$. 
\STATE \textbf{Step 9.} Increment by $t = t + 1$. 
\ENDWHILE
\STATE \textbf{Output:} $B(\beta^t)$.  
\end{algorithmic}
\end{algorithm}

\subsection{Algorithmic procedure} 
We present the pseudocode of accelerated multimarginal Sinkhorn algorithm in Algorithm~\ref{Algorithm:acceleration}. This algorithm achieves the acceleration by using Nesterov's estimate sequences~\citep{Nesterov-2018-Lectures}. While our algorithm can be interpreted as an accelerated block coordinate descent algorithm, it is worthy noting that our algorithm is purely \textit{deterministic} and thus differs from other accelerated randomized algorithms~\citep{Nesterov-2012-Efficiency, Lin-2015-Accelerated, Fercoq-2015-Accelerated, Allen-2016-Even, Lu-2018-Accelerating, Diakonikolas-2018-Alternating} in the machine learning and optimization literature. 

\paragraph{Comments on algorithmic scheme.} Algorithm~\ref{Algorithm:acceleration} is a novel combination of Nesterov's estimate sequences, a monotone search step, the choice of greedy coordinate and two coordinate updates. Nesterov's estimate sequences (\textbf{Step 1-3}) are crucial for optimizing a dual objective function $\varphi$ faster than Algorithm~\ref{Algorithm:Sinkhorn}. The coordinate update (\textbf{Step 4}) guarantees that $\varphi(\widehat{\beta}^t) \leq \varphi(\grave{\beta}^t)$ and $\|B(\widehat{\beta}^t)\|_1=1$. The monotone search step (\textbf{Step 5}) guarantees that $\varphi(\beta^t) \leq \varphi(\widehat{\beta}^t)$. The greedy coordinate update (\textbf{Step 6-7}) guarantees that $\varphi(\check{\beta}^{t+1}) \leq \varphi(\beta^t)$ with sufficiently large progress. Similar to Algorithm~\ref{Algorithm:Sinkhorn}, the greedy rule is based on the function $\rho: \br_+^n \times \br_+^n \rightarrow \br_+$ given by: 
\begin{equation*}
\rho(a, b) = \one_n^\top(b - a) + \sum_{i=1}^n a_i\log\left(\frac{a_i}{b_i}\right).
\end{equation*}
Furthermore, we also use the same quantity as that in the multimarginal Sinkhorn algorithm to measure the per-iteration residue of Algorithm~\ref{Algorithm:acceleration}: 
\begin{equation}\label{Def:residue-acceleration}
E_t = \sum_{k=1}^m \|r_k(B(\beta^t) - r_k\|_1.
\end{equation}

\paragraph{Comments on arithmetic operations per iteration.} The most expensive step is to compute $r_k(B(\bar{\beta}^t))/\|B(\bar{\beta}^t)\|_1$ for all $k \in [m]$. Since $B(\bar{\beta}^t)$ does not have any special property, it seems difficult to design some implementation trick to reduce the dependency on $m$. Thus, the arithmetic operations per iteration is still $\bigO(mn^m)$. Note that, the accelerated alternating minimization algorithm in~\citep{Tupitsa-2020-Multimarginal} also requires $\bigO(mn^m)$ arithmetic operations per iteration.

\paragraph{Algorithm for the MOT problem.} We present the pseudocode of our main algorithm in Algorithm~\ref{Algorithm:ApproxMOT_Acceleration}, where Algorithms~\ref{Algorithm:acceleration} and~\ref{Algorithm:round} are the subroutines. The regularization parameter $\eta$ is set as before, and \textbf{Step 1} is also necessary since the accelerated multimarginal Sinkhorn algorithm is not well behaved if the marginal distributions do not have dense support.
\begin{algorithm}[!t]\small
\caption{Approximating MOT by Algorithms~\ref{Algorithm:round} and~\ref{Algorithm:acceleration}} \label{Algorithm:ApproxMOT_Acceleration}
\begin{algorithmic}
\STATE \textbf{Input:} $\eta = \frac{\varepsilon}{2m\log(n)}$ and $\varepsilon'=\frac{\varepsilon}{8\|C\|_\infty}$. 
\STATE \textbf{Step 1:} Let $\tilde{r}_k \in \Delta_n$ for $\forall k \in [m]$ be defined as
\begin{equation*}
\left(\tilde{r}_1, \tilde{r}_2, \ldots, \tilde{r}_m\right) = \left(1 - \frac{\varepsilon'}{4m}\right)(r_1, r_2, \ldots, r_m) + \frac{\varepsilon'}{4mn}(\one_n, \one_n, \ldots, \one_n).   
\end{equation*}. 
\STATE \textbf{Step 2:} Compute $\widetilde{X} = \textsc{Accelerated MultSinkhorn}(C, \eta, \{\tilde{r}_k\}_{k \in [m]}, \varepsilon'/2)$.
\STATE \textbf{Step 3:} Round $\widehat{X} = \textsc{Round}(\widetilde{X}, \{\tilde{r}_k\}_{k \in [m]})$. 
\STATE \textbf{Output:} $\widehat{X}$. 
\end{algorithmic}
\end{algorithm} 

\subsection{Technical lemmas}
We first present two technical lemmas which are essential in the analysis of Algorithm~\ref{Algorithm:acceleration}. The first lemma provides an inductive relationship on the quantity 
\begin{equation}\label{def:residue_acceleration}
\delta_t = \varphi(\check{\beta}^t) - \varphi(\beta^\star), 
\end{equation}
where $\beta^\star$ is an optimal solution of the dual entropic regularized MOT problem in Eq.~\eqref{prob:MOT_regularized_dual}. In order to facilitate the discussion, we recall Eq.~\eqref{inequality-gradient-objective} with $\|A\|_{1 \rightarrow 2} = \sqrt{m}$ as follows, 
\begin{equation}\label{smooth:dualregOT}
\varphi(\beta') - \varphi(\beta) - (\beta' - \beta)^\top\nabla\varphi(\beta) \leq \left(\frac{m}{2}\right)\|\beta' - \beta\|^2,
\end{equation}
which will be used in the proof of the first lemma. 
\begin{lemma}\label{Lemma:acceleration-descent}
Let $\{\check{\beta}^t\}_{t \geq 0}$ be the iterates generated by Algorithm~\ref{Algorithm:acceleration} and $\beta^\star$ be an optimal solution of the dual entropic regularized MOT problem. Then the quantity $\delta_t$ defined by Eq.~\eqref{def:residue_acceleration} satisfies the following inequality,  
\begin{equation*}
\delta_{t+1} \leq (1-\theta_t)\delta_t + \frac{m\theta_t^2}{2}\left(\|\beta^\star - \tilde{\beta}^t\|^2 - \|\beta^\star - \tilde{\beta}^{t+1}\|^2\right).
\end{equation*}
\end{lemma}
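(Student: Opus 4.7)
The plan is to carry out a standard Nesterov estimate-sequence analysis, once one observes that Step 3 of Algorithm~\ref{Algorithm:acceleration} is actually an exact gradient step on the smooth dual objective $\varphi$. Since $\nabla_k\varphi(\bar\beta^t) = r_k(B(\bar\beta^t))/\|B(\bar\beta^t)\|_1 - r_k$, Step 2 reads $\tilde\beta^{t+1}-\tilde\beta^t = -\tfrac{1}{m\theta_t}\nabla\varphi(\bar\beta^t)$, so Step 3 gives $\grave\beta^t = \bar\beta^t - \tfrac{1}{m}\nabla\varphi(\bar\beta^t)$, a gradient step with stepsize $1/m$ matching the Lipschitz constant of $\nabla\varphi$ from~\eqref{smooth:dualregOT}.

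First, I would apply~\eqref{smooth:dualregOT} at the pair $(\bar\beta^t,\grave\beta^t)$ to obtain the descent estimate $\varphi(\grave\beta^t) \leq \varphi(\bar\beta^t) - \tfrac{1}{2m}\|\nabla\varphi(\bar\beta^t)\|^2$. Second, invoking convexity of $\varphi$ at the two test points $\check\beta^t$ and $\beta^\star$ with weights $1-\theta_t$ and $\theta_t$, and using the identity $\bar\beta^t = (1-\theta_t)\check\beta^t + \theta_t\tilde\beta^t$ from Step 1, I would derive
\begin{equation*}
\varphi(\bar\beta^t) \leq (1-\theta_t)\varphi(\check\beta^t) + \theta_t\varphi(\beta^\star) - \theta_t\inprod{\nabla\varphi(\bar\beta^t)}{\beta^\star - \tilde\beta^t}.
\end{equation*}
Combining the two bounds and substituting $\nabla\varphi(\bar\beta^t) = -m\theta_t(\tilde\beta^{t+1}-\tilde\beta^t)$ produces the terms $m\theta_t^2\inprod{\tilde\beta^{t+1}-\tilde\beta^t}{\beta^\star-\tilde\beta^t} - \tfrac{m\theta_t^2}{2}\|\tilde\beta^{t+1}-\tilde\beta^t\|^2$. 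Applying the three-point identity $2\inprod{a}{b}-\|a\|^2 = \|b\|^2 - \|b-a\|^2$ with $a = \tilde\beta^{t+1}-\tilde\beta^t$ and $b = \beta^\star-\tilde\beta^t$ collapses this to $\tfrac{m\theta_t^2}{2}(\|\beta^\star-\tilde\beta^t\|^2 - \|\beta^\star-\tilde\beta^{t+1}\|^2)$, yielding the target recursion but with $\varphi(\grave\beta^t)$ in place of $\varphi(\check\beta^{t+1})$.

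To close the proof, I would chain monotonicity through Steps 4--7 to obtain $\varphi(\check\beta^{t+1}) \leq \varphi(\grave\beta^t)$. Step 4 is an exact one-block minimization of the log-sum-exp objective $\varphi$ (it forces $\|B(\widehat\beta^t)\|_1 = 1$), so $\varphi(\widehat\beta^t) \leq \varphi(\grave\beta^t)$; the monotone search in Step 5 gives $\varphi(\beta^t) \leq \varphi(\widehat\beta^t)$; and Step 7 is another exact one-block minimization, yielding $\varphi(\check\beta^{t+1}) \leq \varphi(\beta^t)$. Subtracting $\varphi(\beta^\star)$ from the resulting inequality then produces the desired recursion on $\delta_t$.

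The main obstacle, in my view, is not the algebra -- which is a standard accelerated-gradient computation once the gradient-step identification is made -- but rather the verification that the Sinkhorn-style coordinate updates in Steps 4 and 7 do in fact minimize $\varphi$ exactly in the chosen block. This reduces to the closed-form single-variable log-sum-exp calculation that underlies the multimarginal Sinkhorn analysis (compare the proof of Lemma~\ref{Lemma:sinkhorn-progress}), and ensures the monotonicity chain that bridges $\grave\beta^t$ to $\check\beta^{t+1}$.
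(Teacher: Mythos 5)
Your proposal is correct and follows essentially the same route as the paper's proof: the smoothness inequality~\eqref{smooth:dualregOT} at $(\bar\beta^t,\grave\beta^t)$, convexity of $\varphi$ at $\check\beta^t$ and $\beta^\star$ combined with the identity $(1-\theta_t)\check\beta^t+\theta_t\beta^\star-\bar\beta^t=\theta_t(\beta^\star-\tilde\beta^t)$, the three-point identity for the Step~2 update (which the paper phrases as the optimality condition of the implicit prox problem, but is the same algebra as your explicit substitution $\nabla\varphi(\bar\beta^t)=-m\theta_t(\tilde\beta^{t+1}-\tilde\beta^t)$), and the monotonicity chain $\varphi(\check\beta^{t+1})\leq\varphi(\beta^t)\leq\varphi(\widehat\beta^t)\leq\varphi(\grave\beta^t)$ through Steps~4--7. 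Your observation that Steps~2--3 amount to a gradient step of stepsize $1/m$ is a clean way to organize the same computation, and your closing verification that Steps~4 and~7 are exact block minimizers of the log-sum-exp objective is exactly what justifies the monotonicity the paper invokes.
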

\begin{proof}
Using Eq.~\eqref{smooth:dualregOT} with $\beta' = \grave{\beta}^t$ and $\beta = \bar{\beta}^t$, we have
\begin{equation*}
\varphi(\grave{\beta}^t) \leq \varphi(\bar{\beta}^t) + \theta_t(\tilde{\beta}^{t+1} - \tilde{\beta}^t)^\top\nabla\varphi(\bar{\beta}^t) + \left(\frac{m\theta_t^2}{2}\right)\|\tilde{\beta}^{t+1} - \tilde{\beta}^t\|^2. 
\end{equation*}
By simple calculations, we find that
\begin{eqnarray*}
\varphi(\bar{\beta}^t) & = & (1-\theta_t)\varphi(\bar{\beta}^t) + \theta_t \varphi(\bar{\beta}^t), \\
(\tilde{\beta}^{t+1} - \tilde{\beta}^t)^\top\nabla\varphi(\bar{\beta}^t) & = & -(\tilde{\beta}^t - \bar{\beta}^t)^\top\nabla\varphi(\bar{\beta}^t) + (\tilde{\beta}^{t+1} - \bar{\beta}^t)^\top\nabla\varphi(\bar{\beta}^t). 
\end{eqnarray*}
Putting these pieces together yields that 
\begin{eqnarray}\label{claim-acceleration-descent-main}
\varphi(\grave{\beta}^t) & \leq & \theta_t\left(\underbrace{\varphi(\bar{\beta}^t) + (\tilde{\beta}^{t+1} - \bar{\beta}^t)^\top\nabla\varphi(\bar{\beta}^t) + \left(\frac{m\theta_t}{2}\right)\|\tilde{\beta}^{t+1} - \tilde{\beta}^t\|^2}_{\textnormal{I}}\right) \\
& & + \underbrace{(1-\theta_t)\varphi(\bar{\beta}^t) - \theta_t(\tilde{\beta}^t - \bar{\beta}^t)^\top\nabla\varphi(\bar{\beta}^t)}_{\textnormal{II}}. \nonumber
\end{eqnarray}
We first estimate the term $\textnormal{II}$. Indeed, it follows from the definition of $\bar{\beta}^t$ that
\begin{equation*}
- \theta_t(\tilde{\beta}^t - \bar{\beta}^t) = \theta_t\bar{\beta}^t + (1 - \theta_t)\check{\beta}^t - \bar{\beta}^t = (1-\theta_t)(\check{\beta}^t - \bar{\beta}^t). 
\end{equation*}
Using this equation and the convexity of $\varphi$, we have
\begin{equation}\label{claim-acceleration-descent-first}
\textnormal{II} = (1-\theta_t)(\varphi(\bar{\beta}^t) + (\check{\beta}^t - \bar{\beta}^t)^\top\nabla \varphi(\bar{\beta}^t)) \leq (1-\theta_t)\varphi(\check{\beta}^t). 
\end{equation}
Then we proceed to estimate the term $\textnormal{I}$. Indeed, by the update formula for $\tilde{\beta}^{t+1}$ and the definition of $\varphi$, we have
\begin{equation*}
(\beta - \tilde{\beta}^{t+1})^\top(\nabla\varphi(\bar{\beta}^t) + m\theta_t(\tilde{\beta}^{t+1} - \tilde{\beta}^t)) = 0 \textnormal{ for all } \beta \in \br^{mn}. 
\end{equation*}
Letting $\beta =\beta^\star$ and rearranging the resulting equation yields that 
\begin{equation*}
(\tilde{\beta}^{t+1} - \bar{\beta}^t)^\top\nabla\varphi(\bar{\beta}^t) = (\beta^\star - \bar{\beta}^t)^\top\nabla \varphi(\bar{\beta}^t) + \frac{m\theta_t}{2}\left(\|\beta^\star - \tilde{\beta}^t\|^2 - \|\beta^\star - \tilde{\beta}^{t+1}\|^2-\|\tilde{\beta}^{t+1} - \tilde{\beta}^t\|^2\right).  
\end{equation*}
Using the convexity of $\varphi$ again, we have $(\beta^\star - \bar{\beta}^t)^\top\nabla \varphi(\bar{\beta}^t) \leq \varphi(\beta^\star) - \varphi(\bar{\beta}^t)$. Putting these pieces together yields that 
\begin{equation}\label{claim-acceleration-descent-second}
\textnormal{I} \leq \varphi(\beta^\star) + \frac{m\theta_t}{2}\left(\|\beta^\star - \tilde{\beta}^t\|^2 - \|\beta^\star - \tilde{\beta}^{t+1}\|^2\right). 
\end{equation}
Plugging Eq.~\eqref{claim-acceleration-descent-first} and Eq.~\eqref{claim-acceleration-descent-second} into Eq.~\eqref{claim-acceleration-descent-main} yields that 
\begin{equation*}
\varphi(\grave{\beta}^t) \leq (1-\theta_t)\varphi(\check{\beta}^t) + \theta_t\varphi(\beta^\star) + \frac{m\theta_t^2}{2}\left(\|\beta^\star - \tilde{\beta}^t\|^2 - \|\beta^\star - \tilde{\beta}^{t+1}\|^2\right). 
\end{equation*}
Since $\check{\beta}^{t+1}$ is obtained by an coordinate update from $\beta^t$, we have $\varphi(\beta^t) \geq \varphi(\check{\beta}^{t+1})$. By the definition of $\beta^t$, we have $\varphi(\widehat{\beta}^t) \geq \varphi(\beta^t)$. Since $\widehat{\beta}^t$ is obtained by an coordinate update from $\grave{\beta}^t$, we have $\varphi(\grave{\beta}^t) \geq \varphi(\widehat{\beta}^t)$. Putting these pieces together with yields that 
\begin{equation*}
\varphi(\check{\beta}^{t+1}) - \varphi(\beta^\star) \leq (1-\theta_t)(\varphi(\check{\beta}^t) - \varphi(\beta^\star)) + \frac{m\theta_t^2}{2}\left(\|\beta^\star - \tilde{\beta}^t\|^2 - \|\beta^\star - \tilde{\beta}^{t+1}\|^2\right).
\end{equation*}
This completes the proof. 
\end{proof}
The second lemma provides an upper bound for $\delta_t$ defined by Eq.~\eqref{def:residue_acceleration} where $\{\check{\beta}^t\}_{t \geq 0}$ are generated by Algorithm~\ref{Algorithm:acceleration} and $\beta^\star$ is an optimal solution defined by Corollary~\ref{Corollary:dual-bound-l2}. Note that our lemma is a direct corollary of the analysis provided in~\citet{Tseng-2008-Accelerated} and we provide the proof details for the sake of completeness. 
\begin{lemma}\label{Lemma:acceleration-bound}
Let $\{\check{\beta}^t\}_{t \geq 0}$ be the iterates generated by Algorithm~\ref{Algorithm:acceleration} and $\beta^\star$ be an optimal solution of the dual entropic regularized MOT problem satisfying that $\|\beta^\star\| \leq \sqrt{mn}R$ where $R$ is defined in Corollary~\ref{Corollary:dual-bound-l2}. Then the quantity $\delta_t$ defined by Eq.~\eqref{def:residue_acceleration} satisfies the following inequality,  
\begin{equation*}
\delta_t \leq \frac{2m^2nR^2}{(t+1)^2}.  
\end{equation*}
\end{lemma}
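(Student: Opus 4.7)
The plan is to combine the one-step descent from Lemma~\ref{Lemma:acceleration-descent} with the defining recursion for $\theta_t$ to produce a telescoping inequality, and then bound the resulting constant via Corollary~\ref{Corollary:dual-bound-l2}.

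First I would establish the algebraic identity governing the step sizes. The update $\theta_{t+1} = \theta_t(\sqrt{\theta_t^2+4}-\theta_t)/2$ is equivalent to $\theta_{t+1}^2 = \theta_t^2(1-\theta_{t+1})$, which can be verified by squaring $2\theta_{t+1}/\theta_t + \theta_t = \sqrt{\theta_t^2+4}$. Rewriting this identity as $(1-\theta_{t+1})/\theta_{t+1}^2 = 1/\theta_t^2$ is what makes the standard Nesterov telescoping work.

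Next I would apply Lemma~\ref{Lemma:acceleration-descent} and divide through by $\theta_t^2$ to obtain
\begin{equation*}
\frac{\delta_{t+1}}{\theta_t^2} \;\leq\; \frac{(1-\theta_t)\delta_t}{\theta_t^2} + \frac{m}{2}\bigl(\|\beta^\star - \tilde{\beta}^t\|^2 - \|\beta^\star - \tilde{\beta}^{t+1}\|^2\bigr).
\end{equation*}
For $t \geq 1$, the identity from the first step applied with index $t$ instead of $t+1$ gives $(1-\theta_t)/\theta_t^2 = 1/\theta_{t-1}^2$, and the relation $\delta_{t+1}/\theta_t^2 = \delta_{t+1}(1-\theta_{t+1})/\theta_{t+1}^2 + \delta_{t+1}/\theta_{t+1}$ combined with the simpler form will yield a true telescope. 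Summing the inequality from $t=1,\ldots,T-1$ (and handling $t=0$ separately using $\theta_0=1$, which kills the $(1-\theta_0)\delta_0$ term) produces
\begin{equation*}
\frac{\delta_T}{\theta_{T-1}^2} \;\leq\; \frac{m}{2}\|\beta^\star - \tilde{\beta}^0\|^2.
\end{equation*}
Since $\tilde{\beta}^0 = \zero_{mn}$ and Corollary~\ref{Corollary:dual-bound-l2} provides $\|\beta^\star\|^2 \leq mnR^2$, this yields $\delta_T \leq \tfrac{m^2nR^2}{2}\theta_{T-1}^2$.

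Finally I would prove by induction that $\theta_t \leq 2/(t+2)$ for all $t \geq 0$. The base case $\theta_0 = 1 = 2/2$ is immediate. For the inductive step, setting $x = 1/\theta_{t+1}$, the identity $(1-\theta_{t+1})/\theta_{t+1}^2 = 1/\theta_t^2$ becomes $x^2 - x = 1/\theta_t^2 \geq (t+2)^2/4$, which rearranges to $(x-\tfrac12)^2 \geq (t+2)^2/4 + 1/4$, giving $x \geq (t+3)/2$ and hence $\theta_{t+1} \leq 2/(t+3)$. Plugging $\theta_{T-1}^2 \leq 4/(T+1)^2$ into the bound on $\delta_T$ yields the claimed estimate $\delta_T \leq 2m^2nR^2/(T+1)^2$.

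The main obstacle is bookkeeping: keeping the index shifts straight when turning the one-step recursion into a true telescope (in particular handling the $t=0$ case cleanly, where $(1-\theta_0)\delta_0/\theta_0^2 = 0$ so no $\delta_0$ appears in the final bound), and then verifying the $\theta_t \leq 2/(t+2)$ bound rigorously from the implicit defining relation. No new analytic ingredient beyond Lemma~\ref{Lemma:acceleration-descent} and Corollary~\ref{Corollary:dual-bound-l2} is needed; everything else is the standard FISTA-style estimate-sequence argument.
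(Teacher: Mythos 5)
Your argument is correct and follows essentially the same route as the paper's proof: divide the descent inequality of Lemma~\ref{Lemma:acceleration-descent} by $\theta_t^2$, use $(\theta_{t+1}/\theta_t)^2 = 1-\theta_{t+1}$ to telescope (with $\theta_0=1$ killing the $\delta_0$ term and $\tilde{\beta}^0=\zero_{mn}$ leaving $\tfrac{m}{2}\|\beta^\star\|^2$), then finish with $\|\beta^\star\|\le\sqrt{mn}R$ from Corollary~\ref{Corollary:dual-bound-l2} and the induction $\theta_t\le 2/(t+2)$. One intermediate line is garbled --- the relation $\delta_{t+1}/\theta_t^2 = \delta_{t+1}(1-\theta_{t+1})/\theta_{t+1}^2 + \delta_{t+1}/\theta_{t+1}$ is false as written (the correct identity is simply $\delta_{t+1}/\theta_t^2 = (1-\theta_{t+1})\delta_{t+1}/\theta_{t+1}^2$, with no extra term) --- but the telescoped bound $\delta_T/\theta_{T-1}^2 \le \tfrac{m}{2}\|\beta^\star-\tilde{\beta}^0\|^2$ that you then state is exactly right, so this is a typo rather than a gap.
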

\begin{proof}
By simple calculations, we derive from the definition of $\theta_t$ that $(\theta_{t+1}/\theta_t)^2 = 1 - \theta_{t+1}$. Therefore, we conclude from Lemma~\ref{Lemma:acceleration-descent} that 
\begin{equation*}
\left(\frac{1-\theta_{t+1}}{\theta_{t+1}^2}\right)\delta_{t+1} - \left(\frac{1-\theta_t}{\theta_t^2}\right)\delta_t \leq \frac{m}{2}\left(\|\beta^\star - \tilde{\beta}^t\|^2 - \|\beta^\star - \tilde{\beta}^{t+1}\|^2\right).
\end{equation*}
Equivalently, we have
\begin{equation*}
\left(\frac{1-\theta_t}{\theta_t^2}\right)\delta_t + \left(\frac{m}{2}\right)\|\beta^\star - \tilde{\beta}^t\|^2 \leq \left(\frac{1-\theta_0}{\theta_0^2}\right)\delta_0 + \left(\frac{m}{2}\right)\|\beta^\star - \tilde{\beta}^0\|^2. 
\end{equation*}
Recall that $\theta_0 = 1$ and $\tilde{\beta}^0 = \zero_{mn}$, we have $\delta_t \leq (m\theta_{t-1}^2/2)\|\beta^\star\|^2 \leq (1/2)m^2nR^2\theta_{t-1}^2$. The remaining step is to show that $0 < \theta_t \leq 2/(t+2)$. Indeed, the claim holds when $t=0$ as we have $\theta_0 = 1$. Assume that the claim holds for $t \leq t_0$, i.e., $\theta_{t_0} \leq 2/(t_0+2)$, we have
\begin{equation*}
\theta_{t_0+1} = \frac{2}{1 + \sqrt{1 + 4/\theta_{t_0}^2}} \leq \frac{2}{t_0+3}. 
\end{equation*}
Putting these pieces together yields the desired inequality for $\delta_t$. 
\end{proof}

\subsection{Main results}
We present an upper bound for the number of iterations required by Algorithm~\ref{Algorithm:acceleration}. 
\begin{theorem}\label{Theorem:acceleration-iteration}
Let $\{\beta^t\}_{t \geq 0}$ be the iterates generated by Algorithm~\ref{Algorithm:acceleration}. The number of iterations required to reach the stopping criterion $E_t \leq \varepsilon'$ satisfies
\begin{equation*}
t \leq 1 + 4\left(\frac{\sqrt{n}mR}{\varepsilon'}\right)^{2/3}, 
\end{equation*}
where $R > 0$ is defined in Lemma~\ref{Lemma:dual-bound-infinity}.
\end{theorem}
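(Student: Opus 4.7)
The plan is to follow the same switching-phase strategy used in the proof of Theorem~\ref{Theorem:sinkhorn-iteration}, but with the $\bigO(1/t)$-type dual gap bound replaced by the accelerated $\bigO(1/t^2)$ bound supplied by Lemma~\ref{Lemma:acceleration-bound}. Two ingredients must be combined: (a) a per-iteration sufficient-decrease guarantee in terms of $E_t$, analogous to Lemma~\ref{Lemma:sinkhorn-progress}; and (b) the Nesterov-type bound $\delta_t \leq 2m^2 n R^2/(t+1)^2$ from Lemma~\ref{Lemma:acceleration-bound}.

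First, I would establish the invariant $\|B(\beta^t)\|_1 = 1$ for the iterates produced by Step~5 of Algorithm~\ref{Algorithm:acceleration}. Since $\widehat{\beta}^t$ is obtained by an exact coordinate update of $\grave{\beta}^t$ in Step~4, we have $\|B(\widehat{\beta}^t)\|_1 = 1$; similarly, $\check{\beta}^t$ inherits $\|B(\check{\beta}^t)\|_1 = 1$ from Step~7 of the previous iteration. Hence $\beta^t \in \{\check{\beta}^t,\widehat{\beta}^t\}$ always satisfies $\|B(\beta^t)\|_1 = 1$. With this in hand, the proof of Lemma~\ref{Lemma:sinkhorn-progress} carries over verbatim to Step~7 of Algorithm~\ref{Algorithm:acceleration}: using the greedy-coordinate choice of $K$, the Pinsker inequality, and Cauchy--Schwarz, one obtains
\begin{equation*}
\delta_t - \delta_{t+1} \;\geq\; \varphi(\beta^t) - \varphi(\check{\beta}^{t+1}) \;\geq\; \frac{1}{2}\left(\frac{E_t}{m}\right)^{\!2}.
\end{equation*}
The monotone step (Step~5) guarantees $\varphi(\beta^{t+1}) \leq \varphi(\check{\beta}^{t+1})$, so the telescoping works with $\delta_t = \varphi(\check{\beta}^t) - \varphi(\beta^\star)$ as defined in Eq.~\eqref{def:residue_acceleration}.

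Next, I would apply a switching argument. For any integer $t_1 \geq 0$, suppose the stopping criterion has not been met by iteration $t > t_1$, so $E_s > \varepsilon'$ for all $s < t$. Summing the sufficient-decrease inequality from $s=t_1$ to $s=t-1$ and using $\delta_t \geq 0$ yields
\begin{equation*}
t - t_1 \;\leq\; \frac{2 m^2 \,\delta_{t_1}}{(\varepsilon')^2}.
\end{equation*}
Plugging in the accelerated bound $\delta_{t_1} \leq 2m^2 n R^2/(t_1+1)^2$ from Lemma~\ref{Lemma:acceleration-bound} gives
\begin{equation*}
t \;\leq\; t_1 \;+\; \frac{4 m^4 n R^2}{(\varepsilon')^2 (t_1+1)^2}.
\end{equation*}
Minimizing the right-hand side over $t_1 \geq 0$ (setting the derivative to zero gives $(t_1+1)^3 \asymp m^4 n R^2/(\varepsilon')^2$) produces a bound of order $(m^2 \sqrt{n}\,R/\varepsilon')^{2/3}$, which after absorbing the integer rounding becomes the stated $1 + \bigO((\sqrt{n}\, m R/\varepsilon')^{2/3})$ modulo the exact constants and the way the $m$-factor is bookkept.

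The main obstacle is verifying the sufficient-decrease bound through the greedy Step~7 while reconciling it with the monotonicity enforced by Step~5: one must show that the progress attributed to Step~7 is not nullified by the accelerated interpolation in Steps~1--3 on subsequent iterations. The invariant $\|B(\beta^t)\|_1 = 1$ and the monotonicity $\varphi(\beta^{t+1}) \leq \varphi(\check{\beta}^{t+1}) \leq \varphi(\beta^t)$ together with $\delta_{t+1} \leq \varphi(\check{\beta}^{t+1}) - \varphi(\beta^\star)$ are what make the telescoping legitimate, and this is the delicate point that deserves most of the care. A minor side issue is the base case $t=0$, where $\check{\beta}^0 = \zero$ need not satisfy $\|B(\cdot)\|_1 = 1$; this is absorbed into the additive constant ``$+1$'' in the final bound.
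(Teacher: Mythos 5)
Your proposal is correct and follows essentially the same route as the paper: the per-iteration decrease $\varphi(\check{\beta}^t)-\varphi(\check{\beta}^{t+1}) \geq \tfrac{1}{2}(E_t/m)^2$ obtained via the norm-one invariant, the greedy coordinate, Pinsker, and the monotonicity $\varphi(\beta^t)\leq\varphi(\check{\beta}^t)$ from Step~5, combined with the accelerated bound on $\delta_{t_1}$ from Lemma~\ref{Lemma:acceleration-bound} and then optimized over the switching point (the paper simply fixes $j=t/2$ instead of minimizing over $t_1$, which is equivalent up to constants). Your resulting $m^2$ inside the $(\cdot)^{2/3}$ also matches what the paper's own proof actually derives.
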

\begin{proof}
We first claim that 
\begin{equation}\label{claim-acceleration-iteration-main}
\varphi(\beta^t) - \varphi(\check{\beta}^{t+1}) \geq \frac{1}{2m}\left(\sum_{k=1}^m \|r_k(B(\beta^t)) - r_k\|_1^2\right).
\end{equation}
By the definition of $\varphi$, we have
\begin{equation}\label{inequality-acceleration-iteration-first}
\varphi(\beta^t) - \varphi(\check{\beta}^{t+1}) = \log(\|B(\beta^t)\|_1) - \log(\|B(\check{\beta}^{t+1})\|_1) - \sum_{k=1}^m (\beta_k^t - \check{\beta}_k^{t+1})^\top r_k. 
\end{equation}
From the update formula for $\widehat{\beta}^t$ and $\check{\beta}^{t+1}$, it is clear that $\|B(\widehat{\beta}^t)\|_1 = 1$ and $\|B(\check{\beta}^{t+1})\|_1 = 1$ for all $t \geq 0$. Then we derive from the monotone search step (cf. \textbf{Step 5}) that $\|B(\beta^t)\|_1 = 1$ for all $t \geq 1$. Therefore, we have
\begin{equation*}
\varphi(\beta^t) - \varphi(\check{\beta}^{t+1}) = -(\beta_K^t - \check{\beta}_K^{t+1})^\top r_K = (\log(r_K) - \log(r_K(B(\beta^t))))^\top r_K.  
\end{equation*}
Since $\one_n^\top r_K = \one_n^\top r_K(B(\beta^t)) = 1$, we have $\varphi(\beta^t) - \varphi(\check{\beta}^{t+1}) = \rho(r_K, r_K(B(\beta^t)))$ for all $t \geq 1$. Combining this inequality with the fact that the $K$-th coordinate is the greedy one yields 
\begin{equation*}
\varphi(\beta^t) - \varphi(\check{\beta}^{t+1}) \geq \frac{1}{m}\left(\sum_{k=1}^m \rho(r_k, r_k(B(\beta^t)))\right).
\end{equation*}
Using the Pinsker inequality~\citep{Cover-2012-Elements}, we derive Eq.~\eqref{claim-acceleration-iteration-main} as desired.  

By the definition of $\beta^t$, we have $\varphi(\check{\beta}^t) \geq \varphi(\beta^t)$. Plugging this inequality into Eq.~\eqref{claim-acceleration-iteration-main} together with the Cauchy-Schwarz inequality yields 
\begin{equation*}
\varphi(\check{\beta}^t) - \varphi(\check{\beta}^{t+1}) \geq \frac{1}{2m}\left(\sum_{k=1}^m \|r_k(B(\beta^t)) - r_k\|_1^2\right) \geq \frac{1}{2}\left(\frac{E_t}{m}\right)^2.  
\end{equation*}
Therefore, we conclude that 
\begin{equation*}
\varphi(\check{\beta}^j) - \varphi(\check{\beta}^{t+1}) \geq \frac{1}{2m^2}\left(\sum_{i=j}^t E_i^2\right) \textnormal{ for any } j \in \{1, 2, \ldots, t\}.   
\end{equation*}
Since $\varphi(\check{\beta}^{t+1}) \geq \varphi(\beta^\star)$ for all $t \geq 1$, we have $\varphi(\check{\beta}^j) - \varphi(\check{\beta}^{t+1}) \leq \delta_j$. Then Lemma~\ref{Lemma:acceleration-bound} implies
\begin{equation*}
\sum_{i=j}^t E_i^2 \leq \frac{4m^4nR^2}{(j+1)^2}.
\end{equation*}
Putting these pieces together with the fact that $E_t \geq \varepsilon'$ as soon as the stopping criterion is not fulfilled yields 
\begin{equation*}
\frac{4m^4nR^2}{(j+1)^2(t-j+1)} \geq (\varepsilon')^2.  
\end{equation*}
Since this inequality holds true for all $j \in \{1, 2, \ldots, t\}$, we assume without loss of generality that $t$ is even and let $j = t/2$. Then, we obtain that 
\begin{equation*}
t \leq 1 + 4\left(\frac{\sqrt{n}m^2R}{\varepsilon'}\right)^{2/3}. 
\end{equation*}
This completes the proof. 
\end{proof}
We are ready to present the complexity bound of Algorithm~\ref{Algorithm:ApproxMOT_Acceleration} for solving the MOT problem in Eq.~\eqref{prob:MOT}. Note that $\varepsilon'=\varepsilon/(8\|C\|_\infty)$ is defined using the desired accuracy $\varepsilon > 0$. 
\begin{theorem}\label{Theorem:acceleration-MOT}
Algorithm~\ref{Algorithm:ApproxMOT_Acceleration} returns an $\varepsilon$-approximate multimarginal transportation plan $\widehat{X} \in \br^{n \times \ldots \times n}$ within 
\begin{equation*}
\bigO\left(\frac{m^3 n^{m+1/3}\|C\|_\infty^{4/3}(\log(n))^{1/3}}{\varepsilon^{4/3}}\right)
\end{equation*}
arithmetic operations.
\end{theorem}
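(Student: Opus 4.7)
The plan is to mirror the structure of the proof of Theorem~\ref{Theorem:sinkhorn-MOT} almost verbatim, since the only part that changes between Algorithm~\ref{Algorithm:ApproxMOT_Sinkhorn} and Algorithm~\ref{Algorithm:ApproxMOT_Acceleration} is the inner solver for the dual entropic regularized MOT problem; the outer analysis (the entropic approximation bias, the rounding bias, and the perturbation from $r_k$ to $\tilde r_k$) is a pure black-box reduction that depends only on the feasibility residue $E_t$ at termination.

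First I would re-derive the accuracy estimate
\begin{equation*}
\langle C,\widehat X\rangle - \langle C,X^\star\rangle \ \leq \ m\eta\log(n) + 4\|C\|_\infty\Bigl(\sum_{k=1}^m \|r_k(\widetilde X)-r_k\|_1\Bigr),
\end{equation*}
in exactly the same way as in the proof of Theorem~\ref{Theorem:sinkhorn-MOT}: invoke Theorem~\ref{Theorem:round-scheme} to bound $\|\widehat X-\widetilde X\|_1$ and $\|\widetilde Y - X^\star\|_1$, use the fact that the output of Algorithm~\ref{Algorithm:acceleration} satisfies $\|B(\beta^t)\|_1=1$ and solves the entropic problem with marginals $r_k(\widetilde X)$, and finally use $0\le H(X)\le m\log(n)$ together with H\"older's inequality. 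Since Algorithm~\ref{Algorithm:acceleration} terminates with $E_t\le \varepsilon'/2$ and $\{\tilde r_k\}$ differ from $\{r_k\}$ by at most $\varepsilon'/(2m)$ in $\ell_1$ each, the total marginal residue is at most $\varepsilon'$, and plugging in $\eta=\varepsilon/(2m\log(n))$, $\varepsilon'=\varepsilon/(8\|C\|_\infty)$ yields $\langle C,\widehat X\rangle-\langle C,X^\star\rangle\le \varepsilon$.

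Next I would count arithmetic operations. By Theorem~\ref{Theorem:acceleration-iteration}, Algorithm~\ref{Algorithm:acceleration} reaches $E_t\le \varepsilon'/2$ in at most $1 + 4(\sqrt{n}\,m^2 R/\varepsilon')^{2/3}$ outer iterations, where $R$ is the bound from Lemma~\ref{Lemma:dual-bound-infinity} applied to the perturbed marginals $\{\tilde r_k\}$. With $\eta=\varepsilon/(2m\log n)$ the dominant term of $R$ is $\|C\|_\infty/\eta=2m\log(n)\|C\|_\infty/\varepsilon$, while the remaining term $-2\log\min_{i,j}\tilde r_{ij}$ is of order $\log(mn\|C\|_\infty/\varepsilon)$ and is absorbed by the $\tilde O$ notation. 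Hence $m^2 R = O(m^3\log(n)\|C\|_\infty/\varepsilon)$, so
\begin{equation*}
\Bigl(\tfrac{\sqrt n\, m^2 R}{\varepsilon'}\Bigr)^{2/3} \ = \ O\!\left(\frac{n^{1/3}\, m^2\,(\log n)^{2/3}\,\|C\|_\infty^{4/3}}{\varepsilon^{4/3}}\right).
\end{equation*}
Multiplying by the $O(mn^m)$ arithmetic cost per iteration of Algorithm~\ref{Algorithm:acceleration}, and adding the $O(mn)$ cost of Step~1 and the $O(mn^m)$ cost of the rounding Step~3, gives the claimed complexity bound $\tilde O(m^3 n^{m+1/3}\|C\|_\infty^{4/3}/\varepsilon^{4/3})$.

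There is essentially no new obstacle here: both the approximation-quality side and the iteration-count side reuse machinery already developed in Sections~\ref{sec:sinkhorn} and~\ref{sec:acceleration}. The only mild care-point is to verify that the bound in Lemma~\ref{Lemma:dual-bound-infinity} (hence Corollary~\ref{Corollary:dual-bound-l2}, which feeds Theorem~\ref{Theorem:acceleration-iteration}) is applied to the perturbed marginals $\tilde r_k$ rather than $r_k$, so that $\min_{i,j}\tilde r_{ij}\ge \varepsilon'/(4mn)$; this is what absorbs the $-\log\min\tilde r_{ij}$ term into a logarithmic factor and keeps the dominant dependence on $1/\varepsilon$ clean at $\varepsilon^{-4/3}$.
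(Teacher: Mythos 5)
Your proposal is correct and follows essentially the same route as the paper's own proof: reuse the accuracy argument of Theorem~\ref{Theorem:sinkhorn-MOT} verbatim, then combine the iteration bound of Theorem~\ref{Theorem:acceleration-iteration} (with $R$ evaluated at the perturbed marginals $\tilde r_k$) with the $\bigO(mn^m)$ per-iteration cost. Your explicit computation even yields $(\log n)^{2/3}$ for the iteration count, which is the correct exponent from raising $\sqrt{n}\,m^2R/\varepsilon'$ to the $2/3$ power (the paper's displayed $(\log n)^{1/3}$ appears to be a typo), though this is immaterial under the $\bigOtil$ convention.
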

\begin{proof}
Applying the same argument which is used in Theorem~\ref{Theorem:sinkhorn-MOT}, we obtain that $\langle C, \widehat{X}\rangle - \langle C, X^\star\rangle \leq \varepsilon$ where $\widehat{X}$ is returned by Algorithm~\ref{Algorithm:ApproxMOT_Acceleration}. 

It remains to bound the number of iterations required by Algorithm~\ref{Algorithm:acceleration} to reach the criterion $E_t \leq \varepsilon'/2$ (cf. \textbf{Step 2} in Algorithm~\ref{Algorithm:ApproxMOT_Acceleration}). Using Theorem~\ref{Theorem:acceleration-iteration}, we have
\begin{equation*}
t \leq 1 + 4\left(\frac{\sqrt{n}mR}{\varepsilon'}\right)^{2/3}.  
\end{equation*}
By the definition of $R$ (cf. Lemma~\ref{Lemma:dual-bound-infinity}), $\eta = \varepsilon/(2m\log(n))$ and $\varepsilon'=\varepsilon/(8\|C\|_\infty)$, we have 
\begin{eqnarray*}
t & \leq & 1 + 4\left(\frac{\sqrt{n}m^2R}{\varepsilon'}\right)^{2/3} \\
& \leq & 1 + 4\left[\frac{8\sqrt{n}m^2\|C\|_\infty}{\varepsilon}\left(\frac{\left\|C\right\|_\infty}{\eta} + (m-1)\log(n) - 2\log\left(\min_{1 \leq i \leq m, 1 \leq j \leq n} \tilde{r}_{ij}\right)\right)\right]^{2/3} \\
& \leq & 1 + 4\left[\frac{8\sqrt{n}m^2\|C\|_\infty}{\varepsilon}\left(\frac{2m\log(n)\left\|C\right\|_\infty}{\varepsilon} + (m-1)\log(n) - 2\log\left(\frac{\varepsilon}{32mn\|C\|_\infty}\right)\right)\right]^{2/3} \\
& = & \bigO\left(\frac{m^2 n^{1/3} \|C\|_\infty^{4/3}(\log(n))^{1/3}}{ \varepsilon^{4/3}}\right). 
\end{eqnarray*}
Since each iteration of Algorithm~\ref{Algorithm:acceleration} requires $\bigO(mn^m)$ arithmetic operations, the total arithmetic operations required by \textbf{Step 2} of Algorithm~\ref{Algorithm:ApproxMOT_Acceleration} is $\bigO(m^3n^{m+1/3}\|C\|_\infty^{4/3}(\log(n))^{1/3}\varepsilon^{-4/3})$. In addition, computing a set of vectors $\{\tilde{r}_k\}_{k \in [m]}$ requires $\bigO(mn)$ arithmetic operations and Algorithm~\ref{Algorithm:round} requires $\bigO(mn^m)$ arithmetic operations. Putting these pieces together yields that the complexity bound of Algorithm~\ref{Algorithm:ApproxMOT_Acceleration} is $\bigO(m^3n^{m+1/3}\|C\|_\infty^{4/3}(\log(n))^{1/3}\varepsilon^{-4/3})$. 
\end{proof}
\begin{remark}
Theorem~\ref{Theorem:acceleration-MOT} demonstrates that the complexity bound of Algorithm~\ref{Algorithm:ApproxMOT_Acceleration} is better than that of Algorithm~\ref{Algorithm:ApproxMOT_Sinkhorn} in terms of $1/\varepsilon$ but not near-linear in $n^m$. To be more specific, Algorithm~\ref{Algorithm:ApproxMOT_Acceleration} is recommended when $n \in (0, 1/\varepsilon^2)$. This occurs if the desired solution accuracy is relatively small, saying $10^{-4}$, and the examples include the application problems from economics, physics and generalized Euler flows. In contrast, Algorithm~\ref{Algorithm:ApproxMOT_Sinkhorn} is recommended when $n \in (1/\varepsilon^2, +\infty)$. This occurs if the desired solution accuracy is relatively large, saying $10^{-2}$, and the examples include the application problems from image processing. 
\end{remark}
\begin{remark}
The complexity bound has the same dependence $m^3$ as that of Algorithm~\ref{Algorithm:ApproxMOT_Sinkhorn}. However, the improvement seems possible and can be achieved if we implement \textbf{Step 2} of Algorithm~\ref{Algorithm:acceleration} in distributed parallel manner and choose the greedy coordinate in \textbf{Step 6} using the implementation trick we have mentioned before. Each iteration of Algorithm~\ref{Algorithm:acceleration} requires $\bigO(n^m)$ arithmetic operations and thus Algorithm~\ref{Algorithm:ApproxMOT_Acceleration} achieves the complexity bound of $\bigO(m^2n^{m+1/3}\|C\|_\infty^{4/3}(\log(n))^{1/3}\varepsilon^{-4/3})$. Further, it seems possible to improve the dependence of $n$ by extending other algorithmic frameworks to the MOT setting~\citep{Blanchet-2018-Towards, Lahn-2019-Graph, Jambulapati-2019-Direct}. However, such extension is challenging since we are not clear whether these frameworks heavily depend on the minimum-cost flow structure of the OT problem or not. As such, we leave this topic to the future work. 
\end{remark}

\section{Experiments}\label{sec:experiments}
In this section, we evaluate our new algorithms on both synthetic data and real images. In particular, we compute the free-support Wasserstein barycenter based on the OT distance with the quadratic Euclidean distance ground cost function and compare our algorithms with the commercial linear programming (LP) solver \textsc{Gurobi}. All the experiments are conducted in MATLAB R2020a on a workstation with an Intel Core i5-9400F (6 cores and 6 threads) and 32GB memory, equipped with Ubuntu 18.04.
\begin{figure*}[!t]
\begin{minipage}[b]{.32\textwidth}
\includegraphics[width=55mm, height=42mm]{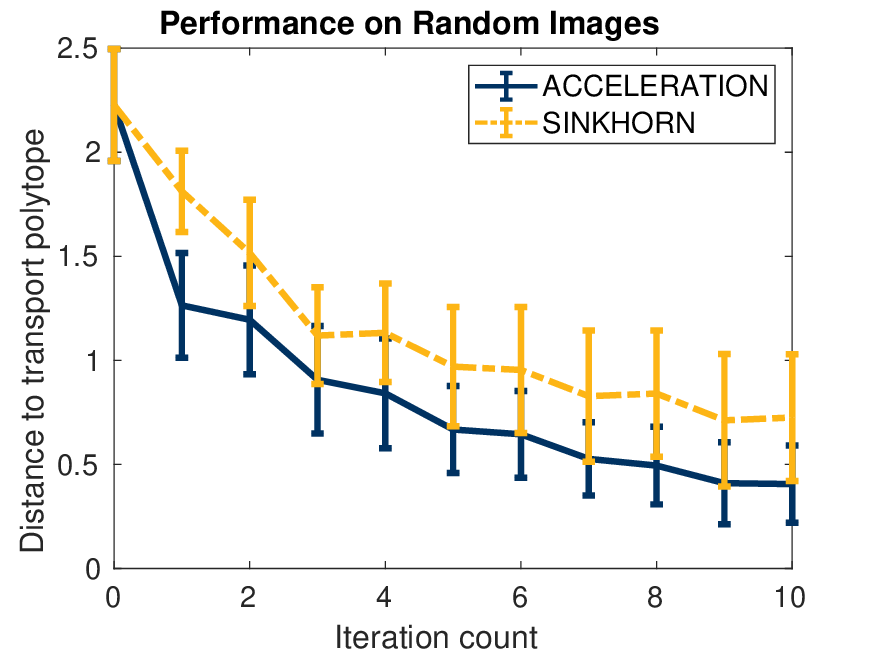}
\end{minipage}
\begin{minipage}[b]{.32\textwidth}
\includegraphics[width=55mm, height=42mm]{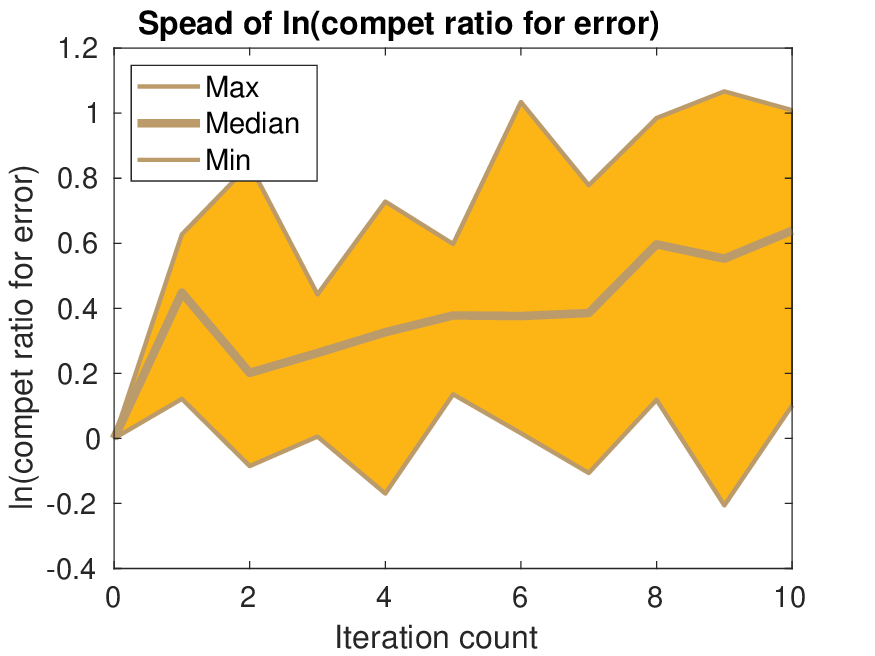}
\end{minipage}
\begin{minipage}[b]{.32\textwidth}
\includegraphics[width=55mm, height=42mm]{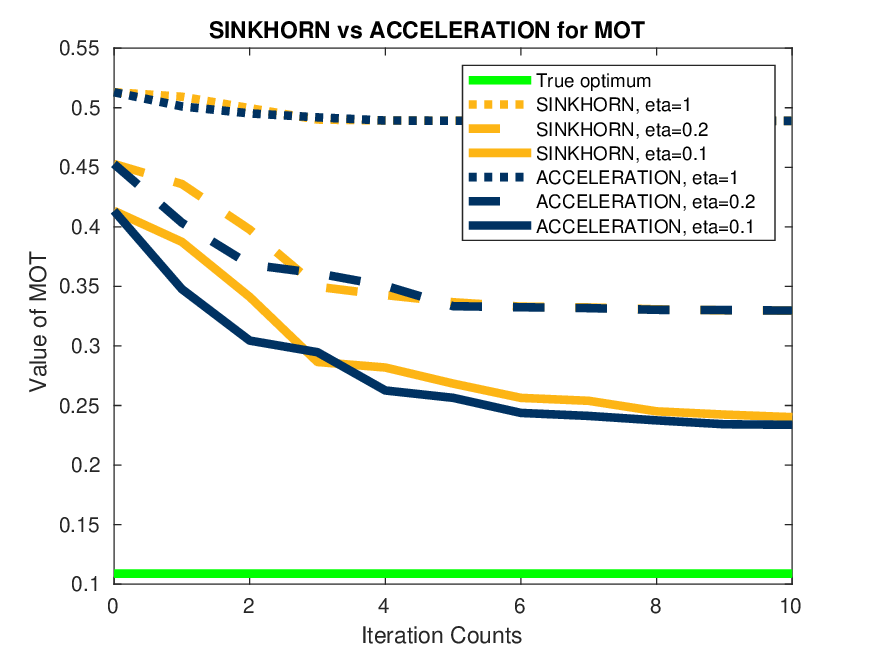}
\end{minipage}
\\
\begin{minipage}[b]{.32\textwidth}
\includegraphics[width=55mm, height=42mm]{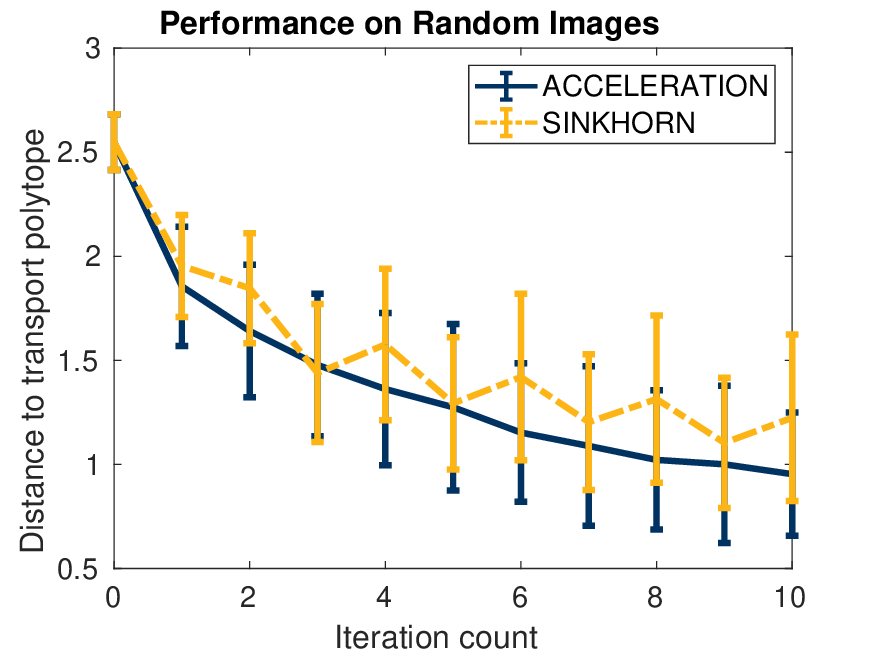}
\end{minipage}
\begin{minipage}[b]{.32\textwidth}
\includegraphics[width=55mm, height=42mm]{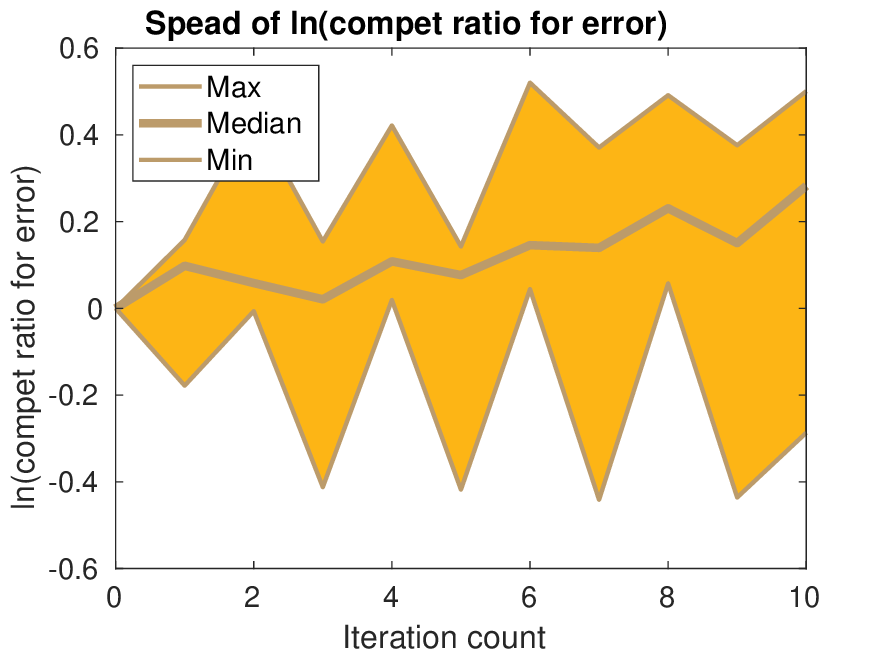}
\end{minipage}
\begin{minipage}[b]{.32\textwidth}
\includegraphics[width=55mm, height=42mm]{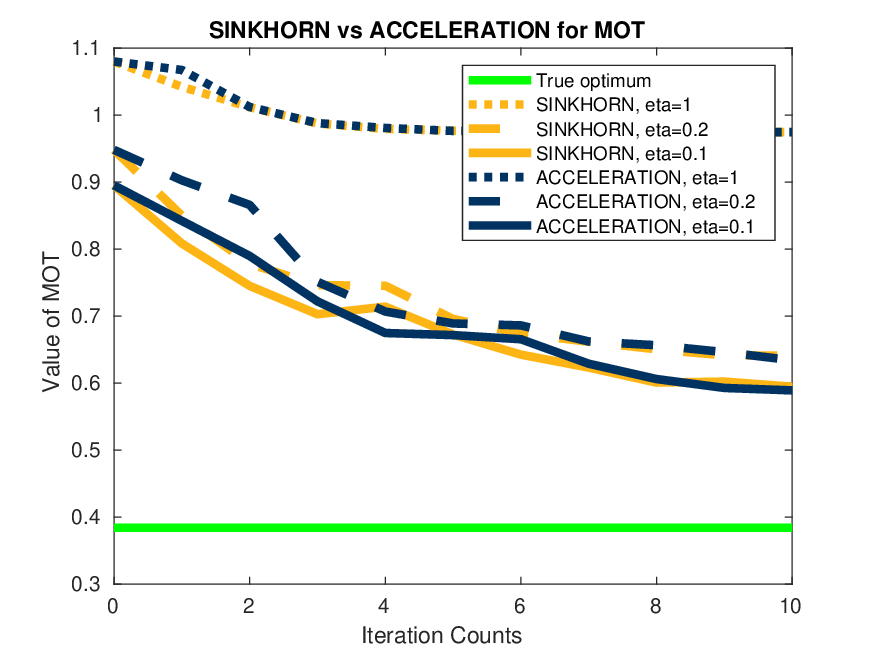}
\end{minipage}
\caption{Performance of multimarginal Sinkhorn v.s. accelerated multimarginal Sinkhorn on the randomly generated synthetic images. Number of pixel in each synthetic image is set as $n=25$ (top) and $n=100$ (bottom).}\label{fig:synthetic}
\end{figure*}
\subsection{Experiments on synthetic data}\label{subsec:synthetic}
We follow the setup in~\citet{Altschuler-2017-Near} in order to compare different algorithms on the synthetic images. More specifically, we generate a triple of random grayscale images, each normalized to have unit total mass. The marginals $r_1$, $r_2$ and $r_3$ represent three images, and the cost tensor $C$ is generated by
\begin{equation*}
C_{i_1, i_2, i_3} = \frac{1}{2}\left(\sum_{k=1}^3 \lambda_k\|x_{i_k} - A_{i_1, i_2, i_3}(x)\|^2\right) \textnormal{ for all } (i_1, i_2, i_3) \in [n] \times [n] \times [n], 
\end{equation*}
where $A_{i_1, i_2, i_3}(x) = \sum_{k=1}^3 \lambda_k x_{i_k}$ is the Euclidean barycenter and $x = \{x_i\}_{i \in [n]} \subseteq \br^d$ are pixel locations in the images. Moreover, $\lambda = (\lambda_1, \lambda_2, \lambda_3) \in \Delta^3$ is a weight vector and set as $(1/3, 1/3, 1/3)$ consistently in this subsection. 

Each of the images has $n$ pixel locations in total and is generated based on randomly positioning a foreground square in otherwise black background. We utilize a uniform distribution on $[0, 1]$ for the intensities of the background pixels and a uniform distribution on $[0, 50]$ for the foreground pixels. We set the proportion of the size of the square is as $10\%$ of the image and implement all the algorithms on the synthetic images with different size $n$. 

We generalize two metrics proposed by~\citet{Altschuler-2017-Near} and use them to quantitatively measure the performance of different algorithms. The first metric is the distance between the output of the algorithm, $X$, and the transportation polytope between the marginals $r_1$, $r_2$ and $r_3$. Formally, we have  
\begin{equation*}
d(X) = \|r_1(X) - r_1\|_1 + \|r_2(X) - r_2\| + \|r_3(X) - r_3\|_1, 
\end{equation*}
where $r_1(X)$, $r_2(X)$ and $r_3(X)$ are the marginal vectors of the output $X$ while $r_1$, $r_2$ and $r_3$ stand for the true marginal vectors. The second metric is the
competitive ratio, defined by $\log(d(X_1)/d(X_2))$ where $d(X_1)$ and $d(X_2)$ refer to the distance between the outputs of two algorithms and the transportation polytope.

\begin{figure}[!t]
\begin{center}
\includegraphics[width=0.6\textwidth]{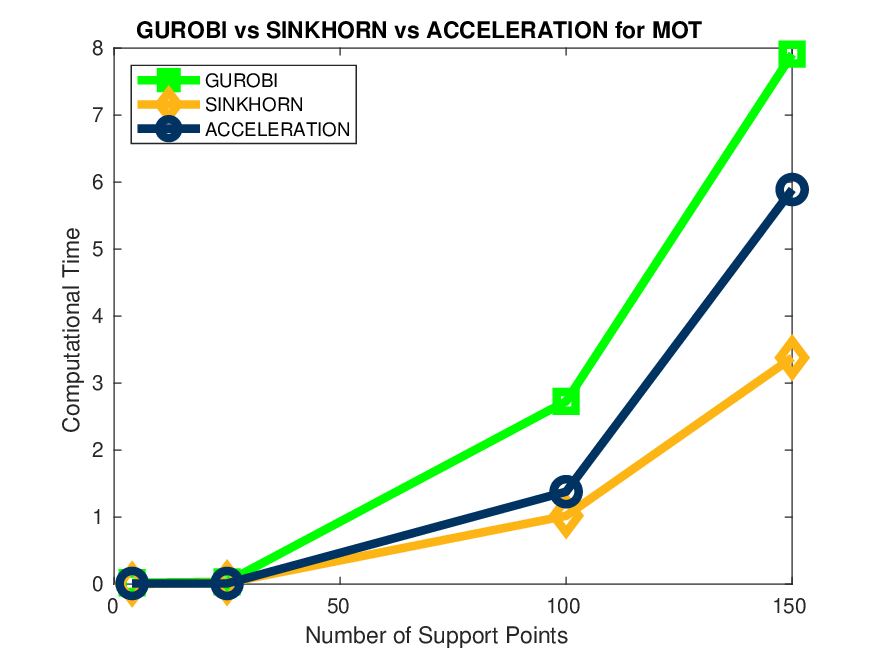}
\caption{Computational efficiency of Gurobi v.s. our algorithms as $n$ varies.}
\label{fig:gurobi}
\end{center}
\end{figure}
We perform a pairwise comparative experiment: multimarginal Sinkhorn versus accelerated multimarginal Sinkhorn, by running both algorithms with ten randomly selected pairs of synthetic images with varying size $n \in \{25, 100\}$. In order to have further evaluations with these algorithms, we also compare their performance with different choices of regularization parameter $\eta \in \{1, 0.2, 0.1\}$ while using the value of the MOT problem (without entropic regularization term) as the baseline. The maximum number of iterations is set as $10$.
\begin{figure*}[!t]
\begin{minipage}[b]{.32\textwidth}
\includegraphics[width=55mm, height=42mm]{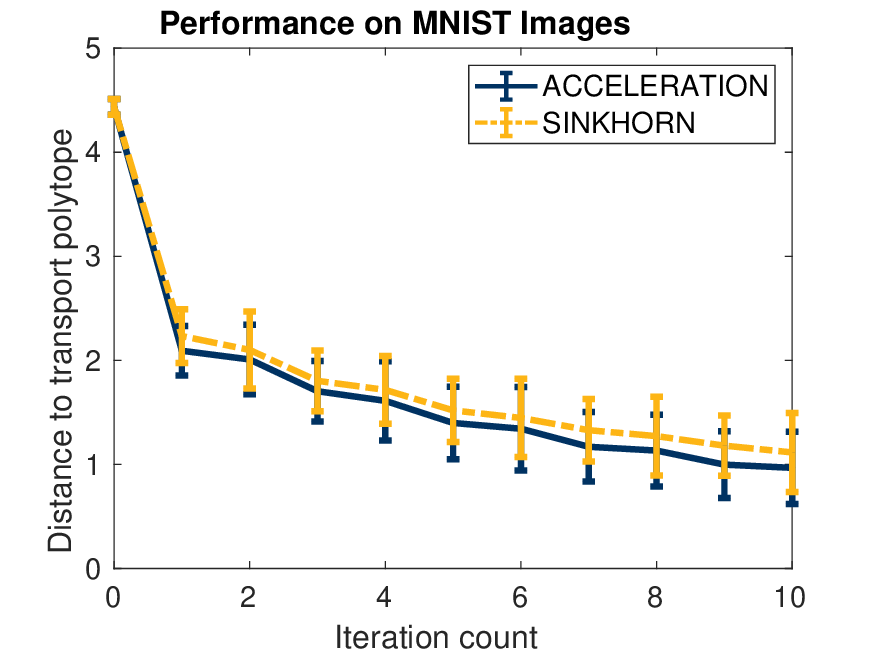}
\end{minipage}
\begin{minipage}[b]{.32\textwidth}
\includegraphics[width=55mm, height=42mm]{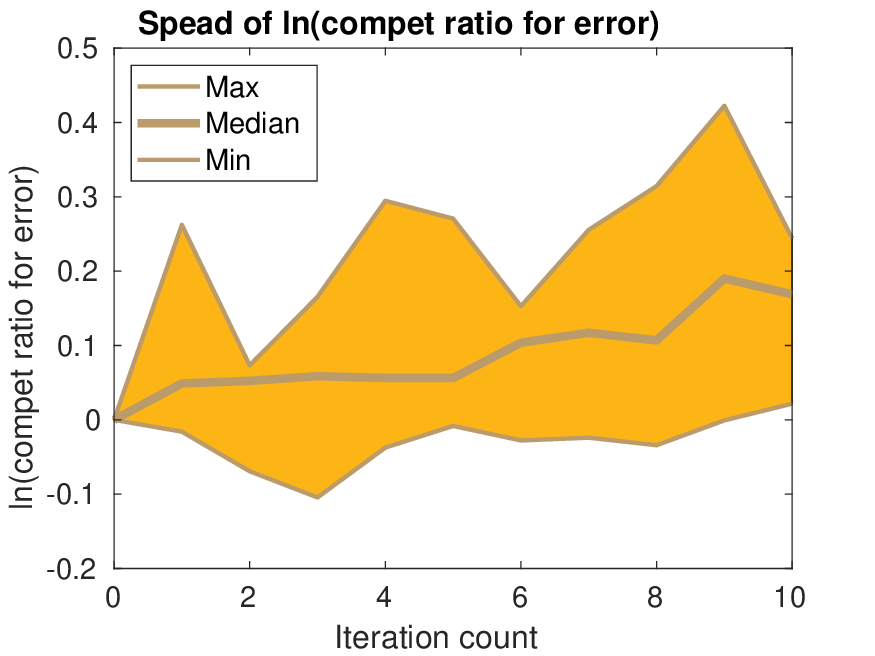}
\end{minipage}
\begin{minipage}[b]{.32\textwidth}
\includegraphics[width=55mm, height=42mm]{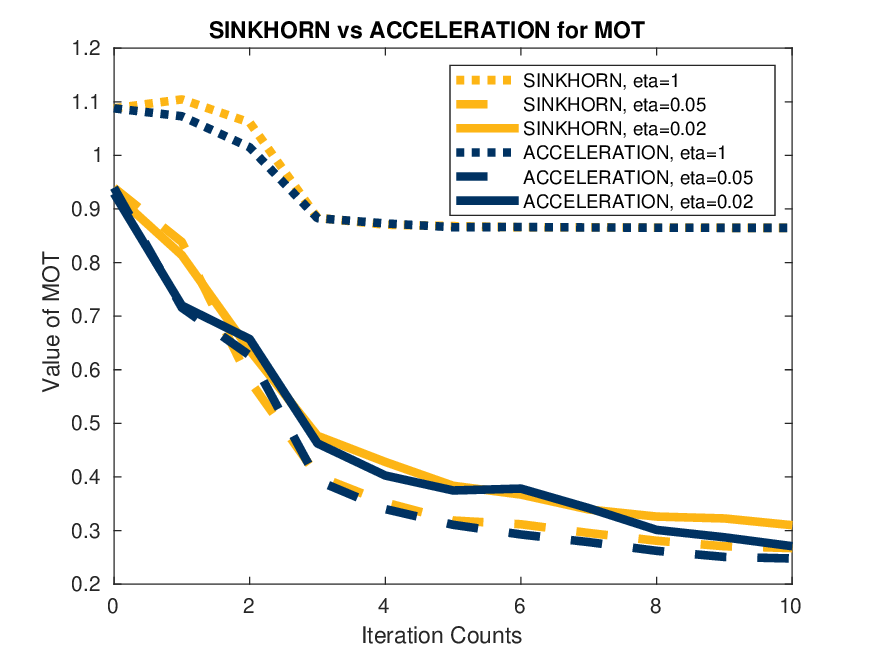}
\end{minipage}
\caption{Performance of multimarginal Sinkhorn v.s. accelerated multimarginal Sinkhorn on MNIST images. Number of pixel in each MNIST image is set as $n=576$.}\label{fig:mnist}
\end{figure*}
\paragraph{Experimental results.} Figure~\ref{fig:synthetic} summarizes the results on synthetic images. The images in the first row show the comparative performance of both algorithms in terms of the iteration counts on 10 triples of $5 \times 5$ synthetic images. In the leftmost one, the comparison uses distance to transportation polytope $d(X)$ where $X$ are returned by the algorithms. In the middle one, the maximum/median/minimum values of the competitive ratios are utilized for the comparison. In the rightmost one, we vary the regularization parameter $\eta \in \{1, 0.2, 0.1\}$ for both algorithms together with the value of the unregularized MOT problem as the baseline. It is clear that accelerated multimarginal Sinkhorn algorithm outperforms multimarginal Sinkhorn algorithm in terms of iteration numbers, illustrating the improvement achieved by using the \textit{estimated sequence} and \textit{monotone search}.

To further compare our algorithms with \textsc{Gurobi} in terms of computational efficiency, we conduct one more experiment with varying number of support points (or pixel locations) $n \in \{25, 100, 144\}$. Figure~\ref{fig:gurobi} shows the running time taken by three algorithms across a wide range of $n$. As $n$ increases, we find that multimarginal Sinkhorn algorithm performs the best, followed by accelerated multimarginal Sinkhorn algorithm, both outperforming \textsc{Gurobi}. This demonstrates that classical LP algorithms might not be suitable for solving the MOT problem, partially confirming our results in Section~\ref{sec:hardness}. Moreover, despite fewer iterations, the direct implementation of accelerated multimarginal Sinkhorn algorithm is indeed slower than multimarginal Sinkhorn algorithm. This is mainly due to the heavy computation of gradient and we believe some parallel computing toolbox can be helpful. However, this is beyond the scope of this paper and we leave it to future research.

\subsection{Experiments on real images}\label{subsec:real}
We conduct the experiment with the same setup and MNIST dataset\footnote{Available in http://yann.lecun.com/exdb/mnist/}. The MNIST dataset consists of 60,000 images of handwritten digits of size $28$ by $28$ pixels. We add a very small noise term ($10^{-6}$) to all the zero elements in the measures and then normalize them such that their sum becomes one. We also vary the regularization parameter $\eta \in \{1, 0.05, 0.02\}$ for both algorithms but cannot run \textsc{Gurobi}. Indeed, the LP constructed from the MOT problem using 3 MNIST images is so lagre that \textsc{Gurobi} is out of memory. Figure~\ref{fig:mnist} presents the comparative performance of our algorithms on the MNIST images, and we find that it is consistent with the performance on the randomly generated synthetic images. 

In order to better visualize the quality of approximate barycenters obtained by each algorithm, we run our algorithms with $\eta=0.05$ to compute the free-support Wasserstein barycenter of two triple of real images with different weight vectors. Indeed, we solve the MOT problem as before and form the barycenter as follows,  
\begin{equation*}
\mu_\lambda = \sum_{k=1}^3 \sum_{1 \leq i_k \leq n} \gamma_{i_1, i_2, i_3}\delta_{A_{i_1, i_2, i_3}(x)}, 
\end{equation*} 
where $A_{i_1, i_2, i_3}(x) = \sum_{k=1}^3 \lambda_k x_{i_k}$ is the Euclidean barycenter, and $x = \{x_i\}_{i \in [n]} \subseteq \br^d$ are pixel locations in the images and $\gamma \in \br^{n \times n \times n}$ is an optimal multimarginal transportation plan that solves the MOT problem. 

Figure~\ref{fig:barycenter} presents the approximate barycenters obtained by running our algorithms. These results demonstrate that our algorithms can successfully capture the free-support barycenters of high quality by solving the MOT problem and are at least competitive with the existing algorithms~\citep{Benamou-2015-Iterative, Benamou-2019-Generalized, Peyre-2019-Computational} in practice.  
\begin{figure}[!t]\hspace*{-3.5em}
\begin{tikzpicture}[
node distance = 0pt,
every node/.style = {minimum size=7mm, inner sep=0pt, outer sep=0pt}
                        ]
\node (n0)  {\includegraphics[width=30mm, height=20mm]{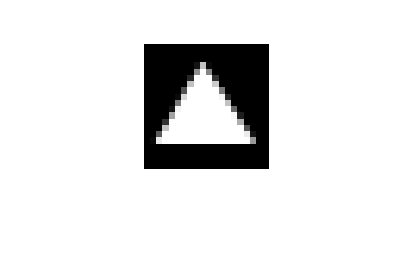}};
\node (n1) [below=of n0.south]          {};
\node (n11) [left=of n1]                {\includegraphics[width=12mm, height=12mm]{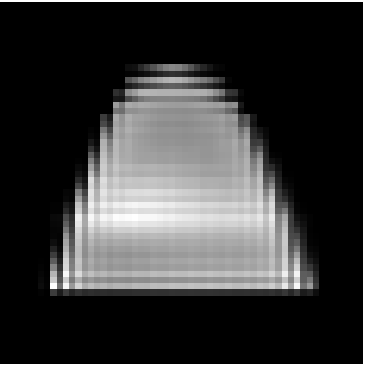}};
\node (n12) [right=of n1]              	{\includegraphics[width=12mm, height=12mm]{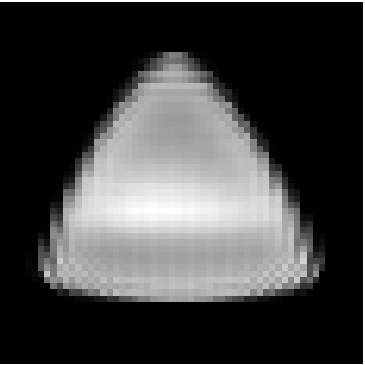}};
\node (n2) [below=of n1.south]          {};
\node (n21) [below=of n11.south]    	{};
\node (n22) [below=of n12.south]        {};
\node (n3) [below=of n2.south]          {\includegraphics[width=12mm, height=12mm]{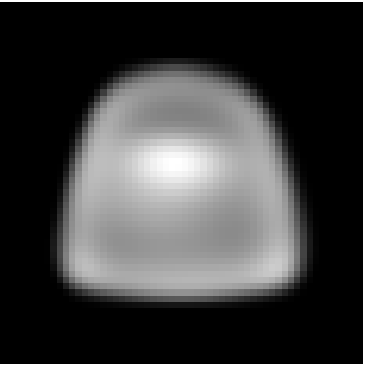}};
\node (n32) [left=of n3]                {};
\node (n33) [right=of n3]               {};
\node (n31) [left=of n32]               {\includegraphics[width=12mm, height=12mm]{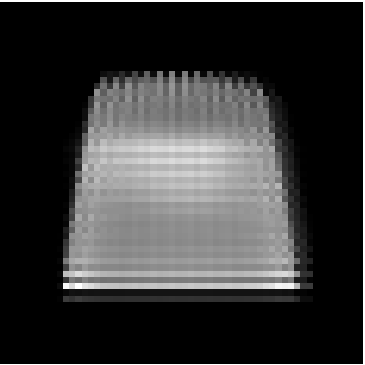}};
\node (n34) [right=of n33]              {\includegraphics[width=12mm, height=12mm]{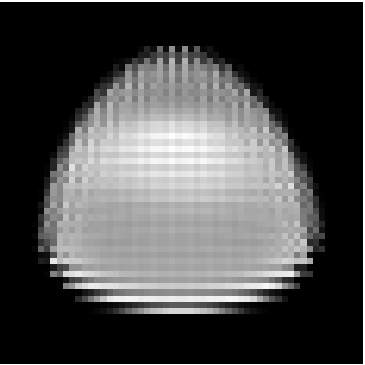}};
\node (n4) [below=of n3.south]          {};
\node (n41) [below=of n31.south]    	{};
\node (n42) [below=of n32.south]        {};
\node (n43) [below=of n33.south]    	{};
\node (n44) [below=of n34.south]        {};
\node (n5) [below=of n4.south]          {};
\node (n52) [left=of n5]                {\includegraphics[width=12mm, height=12mm]{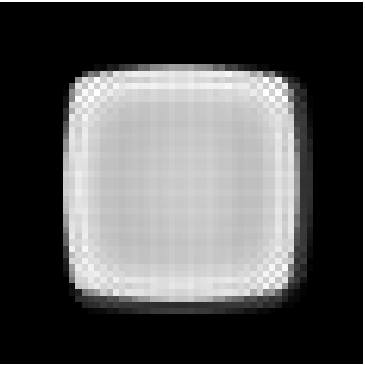}};
\node (n53) [right=of n5]               {\includegraphics[width=12mm, height=12mm]{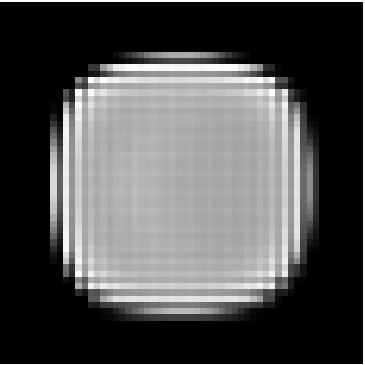}};
\node (n51) [left=of n52]               {\includegraphics[width=30mm, height=20mm]{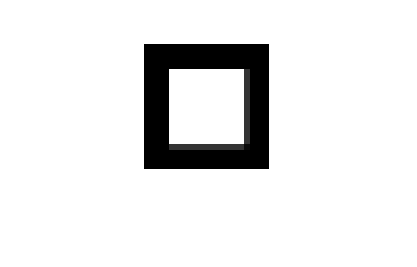}};
\node (n54) [right=of n53]              {\includegraphics[width=30mm, height=20mm]{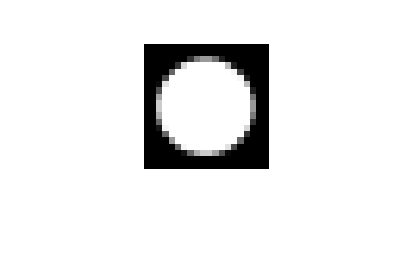}};
\end{tikzpicture}
\begin{tikzpicture}[
node distance = 0pt,
every node/.style = {minimum size=7mm, inner sep=0pt, outer sep=0pt}
                        ]
\node (n0)  {\includegraphics[width=30mm, height=20mm]{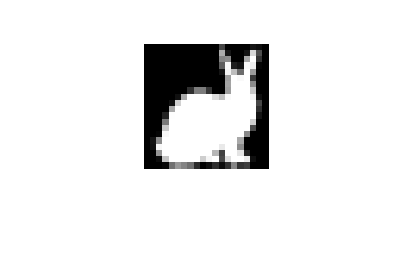}};
\node (n1) [below=of n0.south]          {};
\node (n11) [left=of n1]                {\includegraphics[width=12mm, height=12mm]{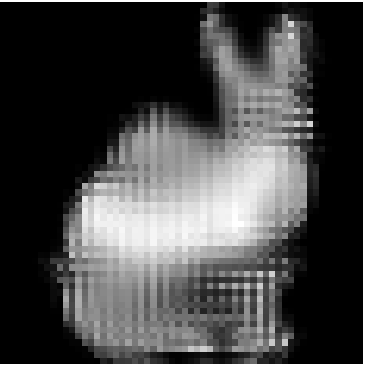}};
\node (n12) [right=of n1]              	{\includegraphics[width=12mm, height=12mm]{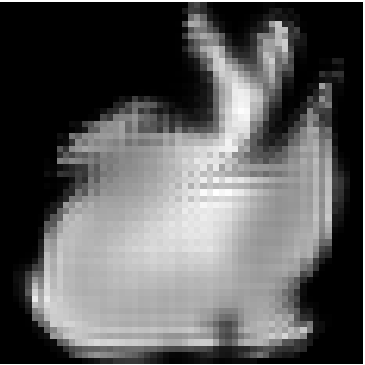}};
\node (n2) [below=of n1.south]          {};
\node (n21) [below=of n11.south]    	{};
\node (n22) [below=of n12.south]        {};
\node (n3) [below=of n2.south]          {\includegraphics[width=12mm, height=12mm]{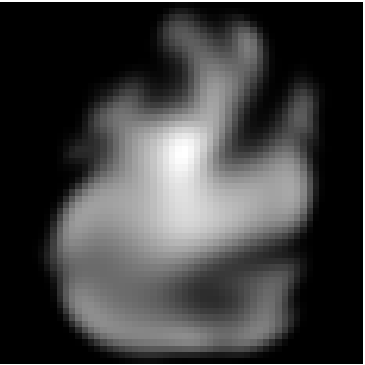}};
\node (n32) [left=of n3]                {};
\node (n33) [right=of n3]               {};
\node (n31) [left=of n32]               {\includegraphics[width=12mm, height=12mm]{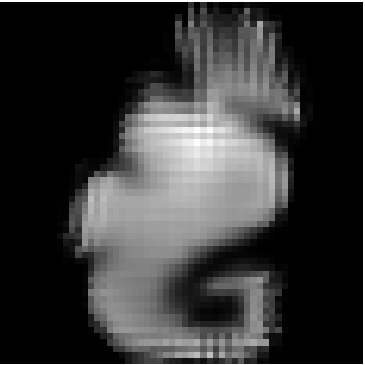}};
\node (n34) [right=of n33]              {\includegraphics[width=12mm, height=12mm]{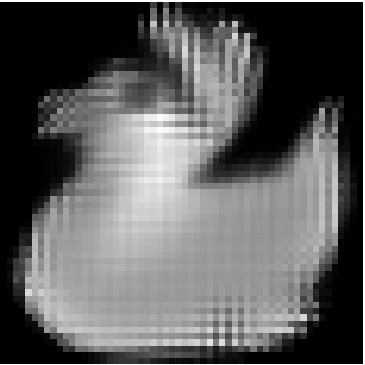}};
\node (n4) [below=of n3.south]          {};
\node (n41) [below=of n31.south]    	{};
\node (n42) [below=of n32.south]        {};
\node (n43) [below=of n33.south]    	{};
\node (n44) [below=of n34.south]        {};
\node (n5) [below=of n4.south]          {};
\node (n52) [left=of n5]                {\includegraphics[width=12mm, height=12mm]{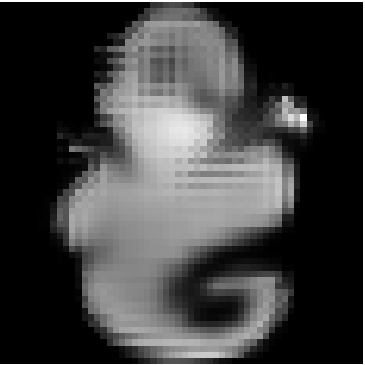}};
\node (n53) [right=of n5]               {\includegraphics[width=12mm, height=12mm]{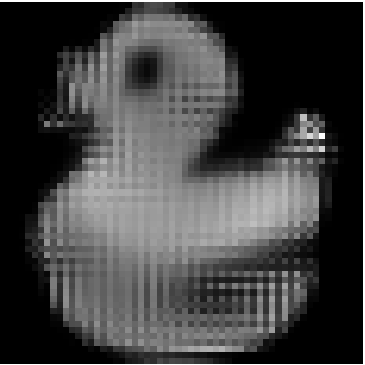}};
\node (n51) [left=of n52]               {\includegraphics[width=30mm, height=20mm]{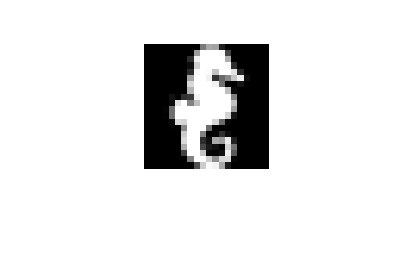}};
\node (n54) [right=of n53]              {\includegraphics[width=30mm, height=20mm]{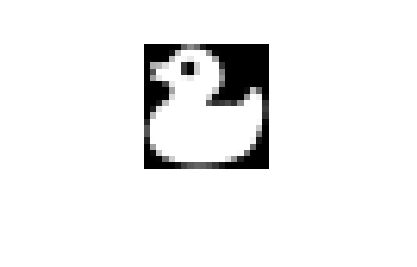}};
\end{tikzpicture} \\ \hspace*{-3.5em}
\begin{tikzpicture}[
node distance = 0pt,
every node/.style = {minimum size=7mm, inner sep=0pt, outer sep=0pt}
                        ]
\node (n0)  {\includegraphics[width=30mm, height=20mm]{figs/triangle.eps}};
\node (n1) [below=of n0.south]          {};
\node (n11) [left=of n1]                {\includegraphics[width=12mm, height=12mm]{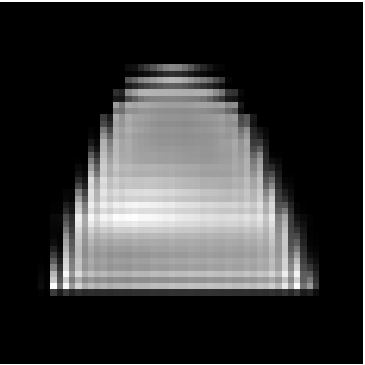}};
\node (n12) [right=of n1]              	{\includegraphics[width=12mm, height=12mm]{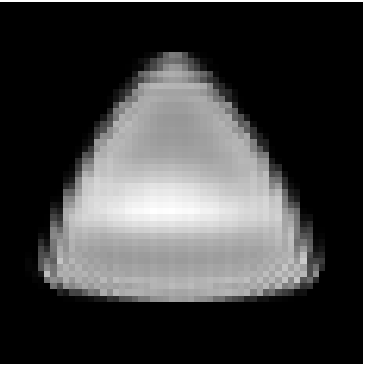}};
\node (n2) [below=of n1.south]          {};
\node (n21) [below=of n11.south]    	{};
\node (n22) [below=of n12.south]        {};
\node (n3) [below=of n2.south]          {\includegraphics[width=12mm, height=12mm]{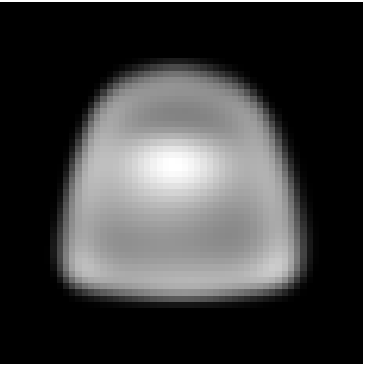}};
\node (n32) [left=of n3]                {};
\node (n33) [right=of n3]               {};
\node (n31) [left=of n32]               {\includegraphics[width=12mm, height=12mm]{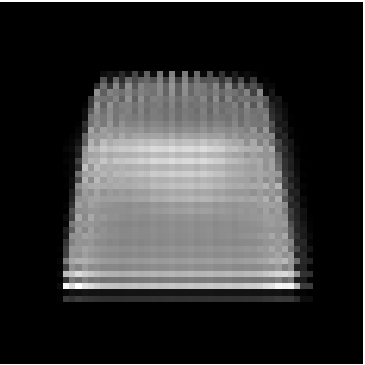}};
\node (n34) [right=of n33]              {\includegraphics[width=12mm, height=12mm]{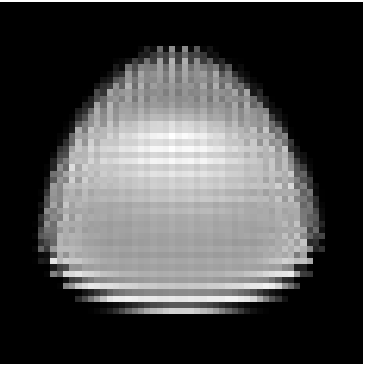}};
\node (n4) [below=of n3.south]          {};
\node (n41) [below=of n31.south]    	{};
\node (n42) [below=of n32.south]        {};
\node (n43) [below=of n33.south]    	{};
\node (n44) [below=of n34.south]        {};
\node (n5) [below=of n4.south]          {};
\node (n52) [left=of n5]                {\includegraphics[width=12mm, height=12mm]{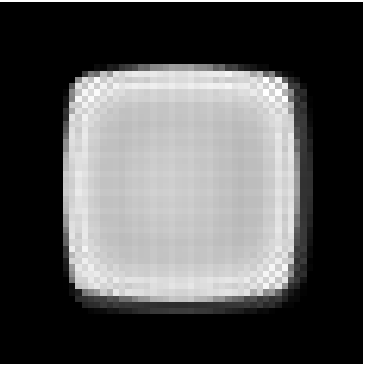}};
\node (n53) [right=of n5]               {\includegraphics[width=12mm, height=12mm]{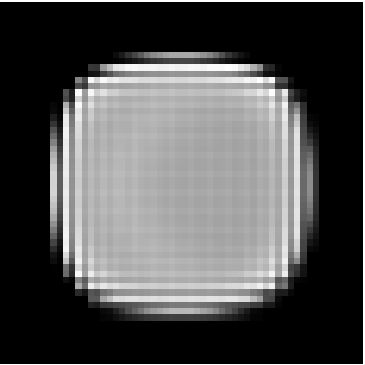}};
\node (n51) [left=of n52]               {\includegraphics[width=30mm, height=20mm]{figs/square.eps}};
\node (n54) [right=of n53]              {\includegraphics[width=30mm, height=20mm]{figs/circle.eps}};
\end{tikzpicture}
\begin{tikzpicture}[
node distance = 0pt,
every node/.style = {minimum size=7mm, inner sep=0pt, outer sep=0pt}
                        ]
\node (n0)  {\includegraphics[width=30mm, height=20mm]{figs/rabbit.eps}};
\node (n1) [below=of n0.south]          {};
\node (n11) [left=of n1]                {\includegraphics[width=12mm, height=12mm]{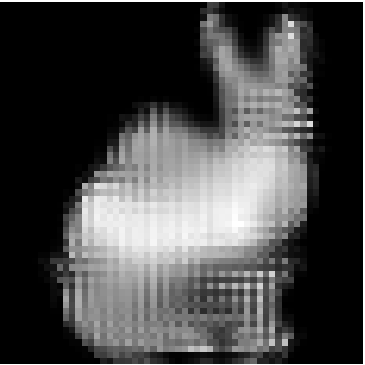}};
\node (n12) [right=of n1]              	{\includegraphics[width=12mm, height=12mm]{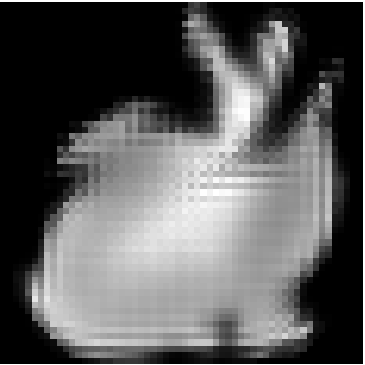}};
\node (n2) [below=of n1.south]          {};
\node (n21) [below=of n11.south]    	{};
\node (n22) [below=of n12.south]        {};
\node (n3) [below=of n2.south]          {\includegraphics[width=12mm, height=12mm]{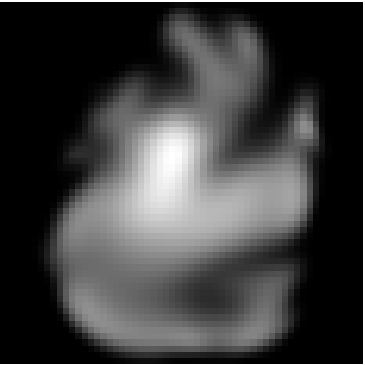}};
\node (n32) [left=of n3]                {};
\node (n33) [right=of n3]               {};
\node (n31) [left=of n32]               {\includegraphics[width=12mm, height=12mm]{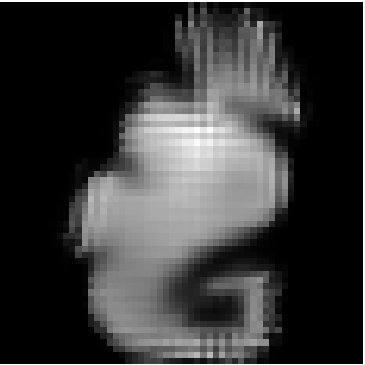}};
\node (n34) [right=of n33]              {\includegraphics[width=12mm, height=12mm]{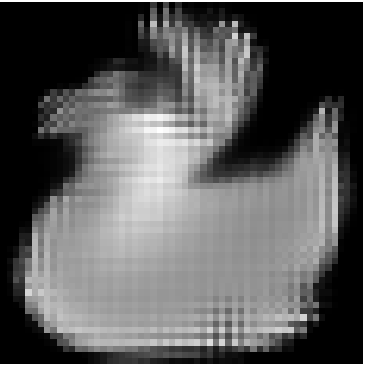}};
\node (n4) [below=of n3.south]          {};
\node (n41) [below=of n31.south]    	{};
\node (n42) [below=of n32.south]        {};
\node (n43) [below=of n33.south]    	{};
\node (n44) [below=of n34.south]        {};
\node (n5) [below=of n4.south]          {};
\node (n52) [left=of n5]                {\includegraphics[width=12mm, height=12mm]{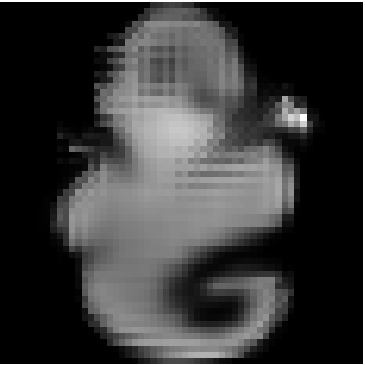}};
\node (n53) [right=of n5]               {\includegraphics[width=12mm, height=12mm]{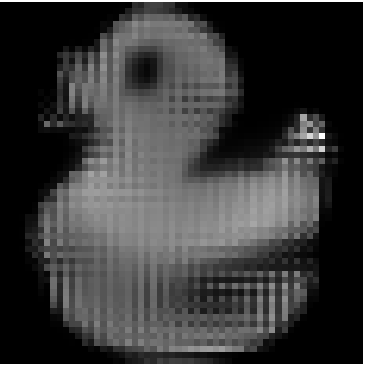}};
\node (n51) [left=of n52]               {\includegraphics[width=30mm, height=20mm]{figs/seahorse.eps}};
\node (n54) [right=of n53]              {\includegraphics[width=30mm, height=20mm]{figs/duck.eps}};
\end{tikzpicture}
\caption{Approximate barycenters obtained by running the multimarginal Sinkhorn (top) and accelerated multimarginal Sinkhorn (bottom) algorithms.}\label{fig:barycenter}
\end{figure}

\section{Conclusion}\label{sec:conclusion}
In this paper, we have studied the multimarginal optimal transport (MOT) problem, providing new algorithms and complexity bounds for approximating this problem. We demonstrated that the standard linear programming (LP) form of the MOT problem is not a minimum-cost flow problem when $m \geq 3$. This encourages us to study the alternatives to combinatorial algorithms and standard \textit{deterministic} interior-point algorithms. In particular, we considered an entropic regularized version of the MOT problem, developing two \textit{deterministic} algorithms --- the multimarginal Sinkhorn and accelerated multimarginal Sinkhorn algorithms --- for solving it. Combined with a new rounding scheme, the multimarginal Sinkhorn algorithm can solve the MOT problem and achieves a near-linear time complexity bound of $\bigO(m^3 n^m\|C\|_\infty^2\log(n)\varepsilon^{-2})$. For the accelerated multimarginal Sinkhorn algorithm, the complexity bound is $\bigO(m^3 n^{m+1/3}\|C\|_\infty^{4/3}(\log(n))^{1/3}\varepsilon^{-4/3})$ which is not near-linear in the number of variables $n^m$ but has better dependence on $1/\varepsilon$ than that of the multimarginal Sinkhorn algorithm.

We now discuss a few directions that arise naturally from our work. First, the complexity bounds of the proposed algorithms in this paper do not incorporate low-rank approximation framework for the cost tensor $C$. Intuitively, these low-rank approaches will lead to an improvement of these complexity bounds in terms of the number of support points $n$. Therefore, with the low-rank approaches, the implementation of these algorithms will be feasible under the large-scale settings of the MOT problem. Second, as mentioned in the paper, one drawback of the entropic regularization is that the sparsity of the solution is lost. Even though an $\varepsilon$-approximate transportation plan can be obtained efficiently, it is not clear how different the resulting sparsity pattern of the obtained solution is with respect to the solution of the actual MOT problem. An important direction is to incorporate sparsity penalty functions to the entropic regularized MOT problem such that an $\varepsilon$-approximate sparse transportation plan is achieved. Third, the MOT problem suffers from curse of dimensionality, demonstrating the importance of efficient dimension reduction frameworks in both theory and practice. Finally, it is of interest to extend the current algorithms in the paper to the multimarginal optimal transport among general measures, which are not necessarily probability measures, such as multimarginal unbalanced optimal transport~\citep{Pham-2020-Unbalanced} or multimarginal partial optimal transport~\citep{Khang-2022-MPOT}.

\section{Acknowledgments}
This work was supported in part by the Mathematical Data Science program of the Office of Naval Research under grant number N00014-18-1-2764 to MJ, and by the NSF IFML 2019844 award and research gifts by UT Austin ML grant to NH.

\bibliographystyle{plainnat}
\bibliography{ref}

\begin{thebibliography}{95}
\providecommand{\natexlab}[1]{#1}
\providecommand{\url}[1]{\texttt{#1}}
\expandafter\ifx\csname urlstyle\endcsname\relax
  \providecommand{\doi}[1]{doi: #1}\else
  \providecommand{\doi}{doi: \begingroup \urlstyle{rm}\Url}\fi

\bibitem[Agueh and Carlier(2011)]{Agueh-2011-Barycenters}
M.~Agueh and G.~Carlier.
\newblock Barycenters in the {W}asserstein space.
\newblock \emph{SIAM Journal on Mathematical Analysis}, 43\penalty0
  (2):\penalty0 904--924, 2011.

\bibitem[Allen-Zhu et~al.(2016)Allen-Zhu, Qu, Richt{\'a}rik, and
  Yuan]{Allen-2016-Even}
Z.~Allen-Zhu, Z.~Qu, P.~Richt{\'a}rik, and Y.~Yuan.
\newblock Even faster accelerated coordinate descent using non-uniform
  sampling.
\newblock In \emph{ICML}, pages 1110--1119, 2016.

\bibitem[Altschuler et~al.(2017)Altschuler, Weed, and
  Rigollet]{Altschuler-2017-Near}
J.~Altschuler, J.~Weed, and P.~Rigollet.
\newblock Near-linear time approximation algorithms for optimal transport via
  {S}inkhorn iteration.
\newblock In \emph{NeurIPS}, pages 1964--1974, 2017.

\bibitem[Altschuler and
  Boix-Adsera(2021{\natexlab{a}})]{Altschuler-2021-Hardness}
J.~M. Altschuler and E.~Boix-Adsera.
\newblock Hardness results for multimarginal optimal transport problems.
\newblock \emph{Discrete Optimization}, 42:\penalty0 100669,
  2021{\natexlab{a}}.

\bibitem[Altschuler and
  Boix-Adsera(2021{\natexlab{b}})]{Altschuler-2021-Wasserstein}
J.~M. Altschuler and E.~Boix-Adsera.
\newblock Wasserstein barycenters can be computed in polynomial time in fixed
  dimension.
\newblock \emph{Journal of Machine Learning Research}, 22:\penalty0 1--19,
  2021{\natexlab{b}}.

\bibitem[Altschuler and Boix-Adser{\`a}(2022)]{Altschuler-2022-Wasserstein}
J.~M. Altschuler and E.~Boix-Adser{\`a}.
\newblock Wasserstein barycenters are {NP}-hard to compute.
\newblock \emph{SIAM Journal on Mathematics of Data Science}, 4\penalty0
  (1):\penalty0 179--203, 2022.

\bibitem[Anderes et~al.(2016)Anderes, Borgwardt, and
  Miller]{Anderes-2016-Discrete}
E.~Anderes, S.~Borgwardt, and J.~Miller.
\newblock Discrete {W}asserstein barycenters: Optimal transport for discrete
  data.
\newblock \emph{Mathematical Methods of Operations Research}, 84\penalty0
  (2):\penalty0 389--409, 2016.

\bibitem[Benamou et~al.(2015)Benamou, Carlier, Cuturi, Nenna, and
  Peyr{\'e}]{Benamou-2015-Iterative}
J-D. Benamou, G.~Carlier, M.~Cuturi, L.~Nenna, and G.~Peyr{\'e}.
\newblock Iterative {B}regman projections for regularized transportation
  problems.
\newblock \emph{SIAM Journal on Scientific Computing}, 37\penalty0
  (2):\penalty0 A1111--A1138, 2015.

\bibitem[Benamou et~al.(2019)Benamou, Carlier, and
  Nenna]{Benamou-2019-Generalized}
J-D. Benamou, G.~Carlier, and L.~Nenna.
\newblock Generalized incompressible flows, multi-marginal transport and
  {S}inkhorn algorithm.
\newblock \emph{Numerische Mathematik}, 142\penalty0 (1):\penalty0 33--54,
  2019.

\bibitem[Berge(2001)]{Berge-2001-Theory}
C.~Berge.
\newblock \emph{The Theory of Graphs}.
\newblock Courier Corporation, 2001.

\bibitem[Blanchet et~al.(2018)Blanchet, Jambulapati, Kent, and
  Sidford]{Blanchet-2018-Towards}
J.~Blanchet, A.~Jambulapati, C.~Kent, and A.~Sidford.
\newblock Towards optimal running times for optimal transport.
\newblock \emph{ArXiv Preprint: 1810.07717}, 2018.

\bibitem[Brenier(1989)]{Brenier-1989-Least}
Y.~Brenier.
\newblock The least action principle and the related concept of generalized
  flows for incompressible perfect fluids.
\newblock \emph{Journal of the American Mathematical Society}, 2\penalty0
  (2):\penalty0 225--255, 1989.

\bibitem[Brenier(1999)]{Brenier-1999-Minimal}
Y.~Brenier.
\newblock Minimal geodesics on groups of volume-preserving maps and generalized
  solutions of the {E}uler equations.
\newblock \emph{Communications on Pure and Applied Mathematics: A Journal
  Issued by the Courant Institute of Mathematical Sciences}, 52\penalty0
  (4):\penalty0 411--452, 1999.

\bibitem[Brenier(2008)]{Brenier-2008-Generalized}
Y.~Brenier.
\newblock Generalized solutions and hydrostatic approximation of the {E}uler
  equations.
\newblock \emph{Physica D: Nonlinear Phenomena}, 237\penalty0 (14-17):\penalty0
  1982--1988, 2008.

\bibitem[Buttazzo et~al.(2012)Buttazzo, Pascale, and
  Gori-Giorgi]{Buttazzo-2010-Optimal}
G.~Buttazzo, L.~D. Pascale, and P.~Gori-Giorgi.
\newblock Optimal-transport formulation of electronic density-functional
  theory.
\newblock \emph{Physical Review A}, 85\penalty0 (6), 2012.

\bibitem[Cao et~al.(2019)Cao, Mo, Zhang, Jia, Shen, and Tan]{Cao-2019-Multi}
J.~Cao, L.~Mo, Y.~Zhang, K.~Jia, C.~Shen, and M.~Tan.
\newblock Multi-marginal {W}asserstein {GAN}.
\newblock In \emph{NeurIPS}, pages 1776--1786, 2019.

\bibitem[Carlier and Ekeland(2010{\natexlab{a}})]{Carlier-2010-Matching}
G.~Carlier and I.~Ekeland.
\newblock Matching for teams.
\newblock \emph{Economic Theory}, 42\penalty0 (2):\penalty0 397--418,
  2010{\natexlab{a}}.

\bibitem[Carlier and Ekeland(2010{\natexlab{b}})]{Chiapporri-2010-Hedonic}
G.~Carlier and I.~Ekeland.
\newblock Hedonic price equilibria, stable matching and optimal transport:
  Equivalence, topology and uniqueness.
\newblock \emph{Economic Theory}, 42\penalty0 (2):\penalty0 317--354,
  2010{\natexlab{b}}.

\bibitem[Carlier et~al.(2015)Carlier, Oberman, and
  Oudet]{Carlier-2015-Numerical}
G.~Carlier, A.~Oberman, and E.~Oudet.
\newblock Numerical methods for matching for teams and {W}asserstein
  barycenters.
\newblock \emph{ESAIM: Mathematical Modelling and Numerical Analysis},
  49\penalty0 (6):\penalty0 1621--1642, 2015.

\bibitem[Choi et~al.(2018)Choi, Choi, Kim, Ha, Kim, and
  Choo]{Choi-2018-Stargan}
Y.~Choi, M.~Choi, M.~Kim, J-W. Ha, S.~Kim, and J.~Choo.
\newblock Stargan: {U}nified generative adversarial networks for multi-domain
  image-to-image translation.
\newblock In \emph{CVPR}, pages 8789--8797, 2018.

\bibitem[Claici et~al.(2018)Claici, Chien, and Solomon]{Claici-2018-Stochastic}
S.~Claici, E.~Chien, and J.~Solomon.
\newblock Stochastic {W}asserstein barycenters.
\newblock In \emph{ICML}, pages 999--1008. PMLR, 2018.

\bibitem[Cohen et~al.(2019)Cohen, Lee, and Song]{Cohen-2019-Solving}
M.~B. Cohen, Y.~T. Lee, and Z.~Song.
\newblock Solving linear programs in the current matrix multiplication time.
\newblock In \emph{STOC}, pages 938--942, 2019.

\bibitem[Cotar et~al.(2013)Cotar, Friesecke, and
  Kl\"{u}ppelberg]{Cotar-2013-Density}
C.~Cotar, G.~Friesecke, and C.~Kl\"{u}ppelberg.
\newblock Density functional theory and optimal transportation with {C}oulomb
  cost.
\newblock \emph{Communications on Pure and Applied Mathematics}, 66\penalty0
  (4):\penalty0 548--599, 2013.

\bibitem[Cover and Thomas(2012)]{Cover-2012-Elements}
T.~M. Cover and J.~A. Thomas.
\newblock \emph{Elements of Information Theory}.
\newblock John Wiley \& Sons, 2012.

\bibitem[Cuturi(2013)]{Cuturi-2013-Sinkhorn}
M.~Cuturi.
\newblock Sinkhorn distances: Lightspeed computation of optimal transport.
\newblock In \emph{NeurIPS}, pages 2292--2300, 2013.

\bibitem[Cuturi and Doucet(2014)]{Cuturi-2014-Fast}
M.~Cuturi and A.~Doucet.
\newblock Fast computation of {W}asserstein barycenters.
\newblock In \emph{ICML}, pages 685--693, 2014.

\bibitem[Cuturi and Peyr{\'e}(2018)]{Cuturi-2018-Semidual}
M.~Cuturi and G.~Peyr{\'e}.
\newblock Semidual regularized optimal transport.
\newblock \emph{SIAM Review}, 60\penalty0 (4):\penalty0 941--965, 2018.

\bibitem[Daitch and Spielman(2008)]{Daitch-2008-Faster}
S.~I. Daitch and D.~A. Spielman.
\newblock Faster approximate lossy generalized flow via interior point
  algorithms.
\newblock In \emph{Foundations of Computer Science}, pages 451--460. ACM, 2008.

\bibitem[Dhillon et~al.(2011)Dhillon, Ravikumar, and
  Tewari]{Dhillon-2011-Nearest}
I.~S. Dhillon, P.~K. Ravikumar, and A.~Tewari.
\newblock Nearest neighbor based greedy coordinate descent.
\newblock In \emph{NeurIPS}, pages 2160--2168, 2011.

\bibitem[Diakonikolas and Orecchia(2018)]{Diakonikolas-2018-Alternating}
J.~Diakonikolas and L.~Orecchia.
\newblock Alternating randomized block coordinate descent.
\newblock In \emph{ICML}, pages 1224--1232. PMLR, 2018.

\bibitem[Dolinsky and Soner(2014)]{Dolinsky-2014-Robust}
Y.~Dolinsky and M.~H. Soner.
\newblock Robust hedging and martingale optimal transport in continuous time.
\newblock \emph{Probability Theory and Related Fields}, 160:\penalty0
  391–427, 2014.

\bibitem[Dudley(1969)]{Dudley-1969-Speed}
R.~M. Dudley.
\newblock The speed of mean {G}livenko-{C}antelli convergence.
\newblock \emph{The Annals of Mathematical Statistics}, 40\penalty0
  (1):\penalty0 40--50, 1969.

\bibitem[Dvurechensky et~al.(2018{\natexlab{a}})Dvurechensky, Dvinskikh,
  Gasnikov, Uribe, and Nedi{\'c}]{Dvurechensky-2018-Decentralize}
P.~Dvurechensky, D.~Dvinskikh, A.~Gasnikov, C.~A. Uribe, and A.~Nedi{\'c}.
\newblock Decentralize and randomize: Faster algorithm for {W}asserstein
  barycenters.
\newblock In \emph{NeurIPS}, pages 10760--10770, 2018{\natexlab{a}}.

\bibitem[Dvurechensky et~al.(2018{\natexlab{b}})Dvurechensky, Gasnikov, and
  Kroshnin]{Dvurechensky-2018-Computational}
P.~Dvurechensky, A.~Gasnikov, and A.~Kroshnin.
\newblock Computational optimal transport: Complexity by accelerated gradient
  descent is better than by {S}inkhorn’s algorithm.
\newblock In \emph{ICML}, pages 1367--1376, 2018{\natexlab{b}}.

\bibitem[Edmonds and Karp(1972)]{Edmonds-1972-Theoretical}
J.~Edmonds and R.~M. Karp.
\newblock Theoretical improvements in algorithmic efficiency for network flow
  problems.
\newblock \emph{Journal of the ACM (JACM)}, 19\penalty0 (2):\penalty0 248--264,
  1972.

\bibitem[Ekeland(2005)]{Ekeland-2005-Optimal}
I.~Ekeland.
\newblock An optimal matching problem.
\newblock \emph{ESAIM: Control, Optimisation and Calculus of Variations},
  11\penalty0 (1):\penalty0 57--71, 2005.

\bibitem[Ervolina and McCormick(1993{\natexlab{a}})]{Ervolina-1993-Canceling}
T.~R. Ervolina and S.~T. McCormick.
\newblock Canceling most helpful total cuts for minimum cost network flow.
\newblock \emph{Networks}, 23\penalty0 (1):\penalty0 41--52,
  1993{\natexlab{a}}.

\bibitem[Ervolina and McCormick(1993{\natexlab{b}})]{Ervolina-1993-Two}
T.~R. Ervolina and S.~T. McCormick.
\newblock Two strongly polynomial cut cancelling algorithms for minimum cost
  network flow.
\newblock \emph{Discrete Applied Mathematics}, 46\penalty0 (2):\penalty0
  133--165, 1993{\natexlab{b}}.

\bibitem[Fercoq and Richt{\'a}rik(2015)]{Fercoq-2015-Accelerated}
O.~Fercoq and P.~Richt{\'a}rik.
\newblock Accelerated, parallel, and proximal coordinate descent.
\newblock \emph{SIAM Journal on Optimization}, 25\penalty0 (4):\penalty0
  1997--2023, 2015.

\bibitem[Flamary and Courty(2017)]{Flamary-2017-Pot}
R.~Flamary and N.~Courty.
\newblock {POT}: {P}ython optimal transport library, 2017.
\newblock URL \url{https://pythonot.github.io/}.

\bibitem[Fournier and Guillin(2015)]{Fournier-2015-Rate}
N.~Fournier and A.~Guillin.
\newblock On the rate of convergence in {W}asserstein distance of the empirical
  measure.
\newblock \emph{Probability Theory and Related Fields}, 162\penalty0
  (3-4):\penalty0 707--738, 2015.

\bibitem[Galichon et~al.(2014)Galichon, Henry-Labordere, and
  Touz]{Galichon-2014-Stochastic}
A.~Galichon, P.~Henry-Labordere, and N.~Touz.
\newblock A stochastic control approach to non-arbitrage bounds given
  marginals, with an application to {L}ookback options.
\newblock \emph{The Annals of Applied Probability}, 24:\penalty0 312--336,
  2014.

\bibitem[Galil and Tardos(1988)]{Galil-1988-Min}
Z.~Galil and {\'E}.~Tardos.
\newblock An o(n2(m+nlogn)logn) min-cost flow algorithm.
\newblock \emph{Journal of the ACM (JACM)}, 35\penalty0 (2):\penalty0 374--386,
  1988.

\bibitem[Gangbo and Swiech(1998)]{Gangbo-1998-MOT}
W.~Gangbo and A.~Swiech.
\newblock Optimal maps for the multidimensional {M}onge-{K}antorovich problem.
\newblock \emph{Communications on Pure and Applied Mathematics}, 51\penalty0
  (1):\penalty0 23--45, 1998.

\bibitem[Garey and Johnson(2002)]{Garey-2002-Computers}
M.~R. Garey and D.~S. Johnson.
\newblock \emph{Computers and Intractability}, volume~29.
\newblock WH Freeman New York, 2002.

\bibitem[Ge et~al.(2019)Ge, Wang, Xiong, and Ye]{Ge-2019-Interior}
D.~Ge, H.~Wang, Z.~Xiong, and Y.~Ye.
\newblock Interior-point methods strike back: Solving the {W}asserstein
  barycenter problem.
\newblock In \emph{NeurIPS}, pages 6894--6905, 2019.

\bibitem[Genevay et~al.(2019)Genevay, Chizat, Bach, Cuturi, and
  Peyré]{Genevay-2019-Sample}
A.~Genevay, L.~Chizat, F.~Bach, M.~Cuturi, and G.~Peyré.
\newblock Sample complexity of {S}inkhorn divergences.
\newblock In \emph{AISTATS}, 2019.

\bibitem[Ghouila-Houri(1962)]{Ghouila-1962-Caracterisation}
A.~Ghouila-Houri.
\newblock Caract{\'e}risation des matrices totalement unimodulaires.
\newblock \emph{Comptes Redus Hebdomadaires des S{\'e}ances de l'Acad{\'e}mie
  des Sciences (Paris)}, 254:\penalty0 1192--1194, 1962.

\bibitem[Goldberg and Rao(1998)]{Goldberg-1998-Beyond}
A.~V. Goldberg and S.~Rao.
\newblock Beyond the flow decomposition barrier.
\newblock \emph{Journal of the ACM (JACM)}, 45\penalty0 (5):\penalty0 783--797,
  1998.

\bibitem[Goldberg and Tarjan(1990)]{Goldberg-1990-Finding}
A.~V. Goldberg and R.~E. Tarjan.
\newblock Finding minimum-cost circulations by successive approximation.
\newblock \emph{Mathematics of Operations Research}, 15\penalty0 (3):\penalty0
  430--466, 1990.

\bibitem[Guminov et~al.(2019)Guminov, Dvurechensky, Tupitsa, and
  Gasnikov]{Guminov-2019-Accelerated}
S.~Guminov, P.~Dvurechensky, N.~Tupitsa, and A.~Gasnikov.
\newblock Accelerated alternating minimization, accelerated {S}inkhorn's
  algorithm and accelerated iterative {B}regman projections.
\newblock \emph{ArXiv Preprint: 1906.03622}, 2019.

\bibitem[Hassin(1983)]{Hassin-1983-Minimum}
R.~Hassin.
\newblock The minimum cost flow problem: a unifying approach to dual algorithms
  and a new tree-search algorithm.
\newblock \emph{Mathematical Programming}, 25\penalty0 (2):\penalty0 228--239,
  1983.

\bibitem[Hassin(1992)]{Hassin-1992-Algorithms}
R.~Hassin.
\newblock Algorithms for the minimum cost circulation problem based on
  maximizing the mean improvement.
\newblock \emph{Operations Research Letters}, 12\penalty0 (4):\penalty0
  227--233, 1992.

\bibitem[He et~al.(2019)He, Zuo, Kan, Shan, and Chen]{He-2019-Attgan}
Z.~He, W.~Zuo, M.~Kan, S.~Shan, and X.~Chen.
\newblock Attgan: {F}acial attribute editing by only changing what you want.
\newblock \emph{IEEE Transactions on Image Processing}, 28\penalty0
  (11):\penalty0 5464--5478, 2019.

\bibitem[Hui et~al.(2018)Hui, Li, Chen, He, and Yang]{Hui-2018-Unsupervised}
L.~Hui, X.~Li, J.~Chen, H.~He, and J.~Yang.
\newblock Unsupervised multi-domain image translation with domain-specific
  encoders/decoders.
\newblock In \emph{ICPR}, pages 2044--2049. IEEE, 2018.

\bibitem[Jambulapati et~al.(2019)Jambulapati, Sidford, and
  Tian]{Jambulapati-2019-Direct}
A.~Jambulapati, A.~Sidford, and K.~Tian.
\newblock A direct tilde $\{$O$\}$(1/epsilon) iteration parallel algorithm for
  optimal transport.
\newblock In \emph{NeurIPS}, pages 11355--11366, 2019.

\bibitem[Kantorovich(1942)]{Kantorovich-1942-Translocation}
L.~V. Kantorovich.
\newblock On the translocation of masses.
\newblock In \emph{Dokl. Akad. Nauk. USSR (NS)}, volume~37, pages 199--201,
  1942.

\bibitem[Karp(1972)]{Karp-1972-Reducibility}
R.~M. Karp.
\newblock Reducibility among combinatorial problems.
\newblock In \emph{Complexity of Computer Computations}, pages 85--103.
  Springer, 1972.

\bibitem[Klein(1967)]{Klein-1967-Primal}
M.~Klein.
\newblock A primal method for minimal cost flows with applications to the
  assignment and transportation problems.
\newblock \emph{Management Science}, 14\penalty0 (3):\penalty0 205--220, 1967.

\bibitem[Kroshnin et~al.(2019)Kroshnin, Tupitsa, Dvinskikh, Dvurechensky,
  Gasnikov, and Uribe]{Kroshnin-2019-Complexity}
A.~Kroshnin, N.~Tupitsa, D.~Dvinskikh, P.~Dvurechensky, A.~Gasnikov, and
  C.~Uribe.
\newblock On the complexity of approximating {W}asserstein barycenters.
\newblock In \emph{ICML}, pages 3530--3540, 2019.

\bibitem[Lacombe et~al.(2018)Lacombe, Cuturi, and Oudot]{Lacombe-2018-Large}
T.~Lacombe, M.~Cuturi, and S.~Oudot.
\newblock Large scale computation of means and clusters for persistence
  diagrams using optimal transport.
\newblock In \emph{NeurIPS}, 2018.

\bibitem[Lahn et~al.(2019)Lahn, Mulchandani, and Raghvendra]{Lahn-2019-Graph}
N.~Lahn, D.~Mulchandani, and S.~Raghvendra.
\newblock A graph theoretic additive approximation of optimal transport.
\newblock In \emph{NeurIPS}, pages 13836--13846, 2019.

\bibitem[Le et~al.(2022)Le, Nguyen, Nguyen, Pham, and Ho]{Khang-2022-MPOT}
K.~Le, H.~Nguyen, K.~Nguyen, T.~Pham, and N.~Ho.
\newblock On multimarginal partial optimal transport: {E}quivalent forms and
  computational complexity.
\newblock In \emph{AISTATS}, 2022.

\bibitem[Lee and Sidford(2014)]{Lee-2014-Path}
Y.~T. Lee and A.~Sidford.
\newblock Path finding methods for linear programming: Solving linear programs
  in $\widetilde{O}$(sqrt(rank)) iterations and faster algorithms for maximum
  flow.
\newblock In \emph{Foundations of Computer Science}, pages 424--433. IEEE,
  2014.

\bibitem[Lei(2020)]{Lei-2020-Convergence}
J.~Lei.
\newblock Convergence and concentration of empirical measures under
  {W}asserstein distance in unbounded functional spaces.
\newblock \emph{Bernoulli}, 26\penalty0 (1):\penalty0 767--798, 2020.

\bibitem[Lin et~al.(2015)Lin, Lu, and Xiao]{Lin-2015-Accelerated}
Q.~Lin, Z.~Lu, and L.~Xiao.
\newblock An accelerated randomized proximal coordinate gradient method and its
  application to regularized empirical risk minimization.
\newblock \emph{SIAM Journal on Optimization}, 25\penalty0 (4):\penalty0
  2244--2273, 2015.

\bibitem[Lin et~al.(2019{\natexlab{a}})Lin, Ho, and Jordan]{Lin-2019-Efficient}
T.~Lin, N.~Ho, and M.~Jordan.
\newblock On efficient optimal transport: An analysis of greedy and accelerated
  mirror descent algorithms.
\newblock In \emph{ICML}, pages 3982--3991, 2019{\natexlab{a}}.

\bibitem[Lin et~al.(2019{\natexlab{b}})Lin, Ho, and
  Jordan]{Lin-2019-Efficiency}
T.~Lin, N.~Ho, and M.~I. Jordan.
\newblock On the efficiency of the {S}inkhorn and {G}reenkhorn algorithms and
  their acceleration for optimal transport.
\newblock \emph{ArXiv Preprint: 1906.01437}, 2019{\natexlab{b}}.

\bibitem[Lin et~al.(2020)Lin, Ho, Chen, Cuturi, and
  Jordan]{Lin-2020-Revisiting}
T.~Lin, N.~Ho, X.~Chen, M.~Cuturi, and M.~I. Jordan.
\newblock Fixed-support {W}asserstein barycenters: Computational hardness and
  fast algorithm.
\newblock In \emph{NeurIPS}, pages 5368--5380, 2020.

\bibitem[Lu et~al.(2018)Lu, Freund, and Mirrokni]{Lu-2018-Accelerating}
H.~Lu, R.~Freund, and V.~Mirrokni.
\newblock Accelerating greedy coordinate descent methods.
\newblock In \emph{ICML}, pages 3257--3266, 2018.

\bibitem[Mena and Niles-Weed(2019)]{Mena-2019-Statistical}
G.~Mena and J.~Niles-Weed.
\newblock Statistical bounds for entropic optimal transport: Sample complexity
  and the central limit theorem.
\newblock In \emph{NeurIPS}, pages 4541--4551, 2019.

\bibitem[Mendl and Lin(2013)]{Mendl-2013-Kantorovich}
C.~B. Mendl and L.~Lin.
\newblock Kantorovich dual solution for strictly correlated electrons in atoms
  and molecules.
\newblock \emph{Physical Review B}, 87:\penalty0 125106, 2013.

\bibitem[Meshi et~al.(2012)Meshi, Globerson, and
  Jaakkola]{Meshi-2012-Convergence}
O.~Meshi, A.~Globerson, and T.~S. Jaakkola.
\newblock Convergence rate analysis of {MAP} coordinate minimization
  algorithms.
\newblock In \emph{NeurIPS}, pages 3014--3022, 2012.

\bibitem[Mi and Bento(2020)]{Mi-2020-Multi}
L.~Mi and J.~Bento.
\newblock Multi-marginal optimal transport defines a generalized metric.
\newblock \emph{ArXiv Preprint: 2001.11114}, 2020.

\bibitem[Nesterov(2005)]{Nesterov-2005-Smooth}
Y.~Nesterov.
\newblock Smooth minimization of non-smooth functions.
\newblock \emph{Mathematical Programming}, 103\penalty0 (1):\penalty0 127--152,
  2005.

\bibitem[Nesterov(2012)]{Nesterov-2012-Efficiency}
Y.~Nesterov.
\newblock Efficiency of coordinate descent methods on huge-scale optimization
  problems.
\newblock \emph{SIAM Journal on Optimization}, 22\penalty0 (2):\penalty0
  341--362, 2012.

\bibitem[Nesterov(2018)]{Nesterov-2018-Lectures}
Y.~Nesterov.
\newblock \emph{Lectures on Convex Optimization}, volume 137.
\newblock Springer, 2018.

\bibitem[Nutini et~al.(2015)Nutini, Schmidt, Laradji, Friedlander, and
  Koepke]{Nutini-2015-Coordinate}
J.~Nutini, M.~Schmidt, I.~Laradji, M.~Friedlander, and H.~Koepke.
\newblock Coordinate descent converges faster with the {G}auss-{S}outhwell rule
  than random selection.
\newblock In \emph{ICML}, pages 1632--1641, 2015.

\bibitem[Orlin(1993)]{Orlin-1993-Faster}
J.~B. Orlin.
\newblock A faster strongly polynomial minimum cost flow algorithm.
\newblock \emph{Operations Research}, 41\penalty0 (2):\penalty0 338--350, 1993.

\bibitem[Orlin(1997)]{Orlin-1997-Polynomial}
J.~B. Orlin.
\newblock A polynomial time primal network simplex algorithm for minimum cost
  flows.
\newblock \emph{Mathematical Programming}, 78\penalty0 (2):\penalty0 109--129,
  1997.

\bibitem[Pass(2015)]{Pass-2015-Multi}
B.~Pass.
\newblock Multi-marginal optimal transport: Theory and applications.
\newblock \emph{ESAIM: Mathematical Modelling and Numerical Analysis},
  49\penalty0 (6):\penalty0 1771--1790, 2015.

\bibitem[Peyr\'{e} and Cuturi(2019)]{Peyre-2019-Computational}
G.~Peyr\'{e} and M.~Cuturi.
\newblock \emph{Computational Optimal Transport: With Applications to Data
  Science}.
\newblock Foundations and Trends(r) in Machine Learning, 2019.

\bibitem[Pham et~al.(2020)Pham, Le, Ho, Pham, and Bui]{Pham-2020-Unbalanced}
K.~Pham, K.~Le, N.~Ho, T.~Pham, and H.~Bui.
\newblock On unbalanced optimal transport: An analysis of {S}inkhorn algorithm.
\newblock In \emph{ICML}, pages 7673--7682. PMLR, 2020.

\bibitem[Schrijver(2003)]{Schrijver-2003-Combinatorial}
A.~Schrijver.
\newblock \emph{Combinatorial Optimization: Polyhedra and Efficiency},
  volume~24.
\newblock Springer Science \& Business Media, 2003.

\bibitem[Seidl et~al.(2007)Seidl, Gori-Giorgi, and Savi]{Seidl-2007-Strictly}
M.~Seidl, P.~Gori-Giorgi, and A.~Savi.
\newblock Strictly correlated electrons in density-functional theory: A general
  formulation with applications to spherical densities.
\newblock \emph{Physical Review A}, 75:\penalty0 75:042511, 2007.

\bibitem[Srivastava et~al.(2018)Srivastava, Li, and
  Dunson]{Srivastava-2018-Scalable}
S.~Srivastava, C.~Li, and D.~Dunson.
\newblock Scalable {B}ayes via barycenter in {W}asserstein space.
\newblock \emph{Journal of Machine Learning Research}, 19\penalty0
  (8):\penalty0 1--35, 2018.

\bibitem[Staib et~al.(2017)Staib, Claici, Solomon, and
  Jegelka]{Staib-2017-Parallel}
M.~Staib, S.~Claici, J.~M. Solomon, and S.~Jegelka.
\newblock Parallel streaming {W}asserstein barycenters.
\newblock In \emph{NeurIPS}, pages 2647--2658, 2017.

\bibitem[Tardos(1985)]{Tardos-1985-Strongly}
{\'E}.~Tardos.
\newblock A strongly polynomial minimum cost circulation algorithm.
\newblock \emph{Combinatorica}, 5\penalty0 (3):\penalty0 247--255, 1985.

\bibitem[Tarjan(1997)]{Tarjan-1997-Dynamic}
R.~E. Tarjan.
\newblock Dynamic trees as search trees via euler tours, applied to the network
  simplex algorithm.
\newblock \emph{Mathematical Programming}, 78\penalty0 (2):\penalty0 169--177,
  1997.

\bibitem[Tseng(2008)]{Tseng-2008-Accelerated}
P.~Tseng.
\newblock On accelerated proximal gradient methods for convex-concave
  optimization.
\newblock \emph{submitted to SIAM Journal on Optimization}, 2\penalty0 (3),
  2008.

\bibitem[Tupitsa et~al.(2020)Tupitsa, Dvurechensky, Gasnikov, and
  Uribe]{Tupitsa-2020-Multimarginal}
N.~Tupitsa, P.~Dvurechensky, A.~Gasnikov, and C.~A. Uribe.
\newblock Multimarginal optimal transport by accelerated alternating
  minimization.
\newblock In \emph{CDC}, pages 6132--6137. IEEE, 2020.

\bibitem[Villani(2003)]{Villani-2003-Topic}
C.~Villani.
\newblock \emph{Topics in Optimal Transportation}.
\newblock American Mathematical Society, Providence, RI, 2003.

\bibitem[Weed and Bach(2019)]{Weed-2019-Sharp}
J.~Weed and F.~Bach.
\newblock Sharp asymptotic and finite-sample rates of convergence of empirical
  measures in {W}asserstein distance.
\newblock \emph{Bernoulli}, 25\penalty0 (4A):\penalty0 2620--2648, 2019.

\bibitem[Wright(1997)]{Wright-1997-Primal}
S.~J. Wright.
\newblock \emph{Primal-Dual Interior-Point Methods}, volume~54.
\newblock SIAM, 1997.

\bibitem[Xie et~al.(2020)Xie, Wang, Wang, and Zha]{Xie-2020-Fast}
Y.~Xie, X.~Wang, R.~Wang, and H.~Zha.
\newblock A fast proximal point method for computing exact {W}asserstein
  distance.
\newblock In \emph{UAI}, pages 433--453. PMLR, 2020.

\end{thebibliography}
\end{document}